\newtheorem{theorem}{Theorem}[section]
\newtheorem{proposition}[theorem]{Proposition}
\newtheorem{lemma}[theorem]{Lemma}
\newtheorem{corollary}[theorem]{Corollary}
\newtheorem{remark}[theorem]{Remark}
\newcommand{\secref}[1]{Section~\ref{#1}}
\newcommand{\figref}[1]{Figure~\ref{#1}}
\renewcommand{\eqref}[1]{Eq.~(\ref{#1})}
\newcommand{\lemref}[1]{Lemma~\ref{#1}}
\newcommand{\thmref}[1]{Theorem~\ref{#1}}
\newcommand{\propref}[1]{Proposition~\ref{#1}}
\newcommand{\appref}[1]{Appendix~\ref{#1}}
\renewcommand{\S}{\mathbb{S}}
\newcommand{\cl}{\mathrm{cl}}
\newcommand{\one}[1]{\mathbbm{1}\left\{#1\right\}}
\newcommand{\poly}{\mathrm{poly}}
\newcommand{\Unif}{\mathrm{Unif}}
\newcommand{\reals}{\mathbb{R}}
\newcommand{\vertiii}[1]{{\left\vert\kern-0.25ex\left\vert\kern-0.25ex\left\vert #1 
    \right\vert\kern-0.25ex\right\vert\kern-0.25ex\right\vert}}
\newcommand{\dlog}[1]{\log\left(#1\right)}
\newcommand{\beq}{\begin{eqnarray*}}
\newcommand{\eeq}{\end{eqnarray*}}
\newcommand{\beqn}{\begin{eqnarray}}
\newcommand{\eeqn}{\end{eqnarray}}
\newcommand{\ent}[1][]{%
\ifthenelse{\isempty{#1}}{%
\mathrm{H}
}{
\mathrm{H}^{(#1)}
}}
\newcommand{\loch}[1][]{%
\ifthenelse{\isempty{#1}}{%
\mathrm{h}
}{
\mathrm{h}^{(#1)}
}}
\newcommand{\Dcal}{\mathcal{D}}
\newcommand{\Lcal}{\mathcal{L}}
\newcommand{\Ncal}{\mathcal{N}}
\newcommand{\NN}{\mathbb{N}}
\newcommand{\half}{\frac{1}{2}}
\newcommand{\E}{\mathbb{E}}
\newcommand{\sign}{\mathrm{sign}}
\newcommand{\bx}{\mathbf{x}}
\newcommand{\bw}{\mathbf{w}}
\newcommand{\bu}{\mathbf{u}}
\newcommand{\bv}{\mathbf{v}}
\newcommand{\balpha}{\boldsymbol{\alpha}}
\newcommand{\btheta}{{\boldsymbol{\theta}}}
\newcommand{\norm}[1]{\|#1\|}
\newcommand{\inner}[1]{\langle#1\rangle}
\newcommand{\printfnsymbol}[1]{%
  \textsuperscript{\@fnsymbol{#1}}%
}
\title{From Tempered to Benign Overfitting \\
in ReLU Neural Networks}
\author{
Guy Kornowski\thanks{Equal contribution.}
	\qquad
	Gilad Yehudai\printfnsymbol{1}
	\qquad
	Ohad Shamir
\vspace{3pt}
\\
Weizmann Institute of Science \\
\texttt{\{guy.kornowski,gilad.yehudai,ohad.shamir\}@weizmann.ac.il}  
}
\begin{document}

\maketitle


\begin{abstract}
Overparameterized neural networks (NNs) are observed to generalize well even when trained to perfectly fit noisy data. This phenomenon motivated a large body of work on ``benign overfitting'', where interpolating predictors achieve near-optimal performance. Recently, it was conjectured and empirically observed that the behavior of NNs is often better described as ``tempered overfitting'', where the performance is non-optimal yet also non-trivial, and degrades as a function of the noise level. However, a theoretical justification of this claim for non-linear NNs has been lacking so far. In this work, we provide several results that aim at bridging these complementing views. We study a simple classification setting with 2-layer ReLU NNs, and prove that under various assumptions, the type of overfitting transitions from tempered in the extreme case of one-dimensional data, to benign in high dimensions. Thus, we show that the input dimension has a crucial role on the overfitting profile in this setting, which we also validate empirically for intermediate dimensions. Overall, our results shed light on the intricate connections between the dimension, sample size, architecture and training algorithm on the one hand, and the type of resulting overfitting on the other hand.
\end{abstract}

\section{Introduction}

Overparameterized neural networks (NNs) are observed to generalize well even when trained to perfectly fit noisy data. Although quite standard in deep learning, the so-called interpolation learning regime challenges classical statistical wisdom regarding overfitting, and has attracted much attention in recent years. 

In particular, the phenomenon commonly referred to as ``benign overfitting'' \citep{bartlett2020benign} describes a situation in which a learning method overfits, in the sense that it achieves zero training error over inherently noisy samples, yet it is able to achieve  near-optimal  accuracy with respect to the underlying distribution. So far, much of the theoretical work studying this phenomenon focused on linear (or kernel) regression
problems using the squared loss, with some works extending this to classification problems. However, there is naturally much interest in gradually pushing this research towards neural networks. 
For example, \citet{frei2023benign}  recently showed that the implicit bias of gradient-based training algorithms towards margin maximization, as established in a series of works throughout the past few years, leads to benign overfitting of leaky ReLU networks in high dimensions (see discussion of related work below).

Recently, \citet{mallinar2022benign}  suggested a more nuanced view, coining the notion of ``tempered'' overfitting. An interpolating learning method is said to overfit in a tempered manner if its error (with respect to the underlying distribution, and as the training sample size increases) is bounded away from the  optimal possible error yet is also not ``catastrophic'', e.g.,  better than a random guess. To be more concrete, consider a binary classification setting (with labels in $\{\pm 1\}$), and a data distribution $\Dcal$, where the output labels correspond to some ground truth function $f^*$ belonging to the predictor class of interest, corrupted with independent label noise at level $p\in (0,\frac{1}{2})$ (i.e., given a sampled
point $\bx$,
its associated
label $y$ equals $\text{sign}(f^*(\bx))$ with probability $1-p$, and $-\text{sign}(f^*(\bx))$ with probability $p$). Suppose furthermore that the training method is such that the learned predictor achieves zero error on the (inherently noisy) training data. Finally, let $L_{\cl}$ denote the ``clean'' test error, namely $L_{\cl}(N)=\Pr_{\bx\sim \Dcal}(N(\bx)\cdot f^*(\bx)\leq0)$ for some predictor $N$ (omitting the contribution of the label noise). In this setting, benign, catastrophic or tempered overfitting corresponds to a situation where the clean test error $L_{\cl}$ of the learned predictor approaches $0$, $\frac{1}{2}$, or some value in $(0,\frac{1}{2})$ respectively. In the latter case, the clean test error typically scales with the amount of noise $p$. For example, for the one-nearest-neighbor algorithm, it is well-known that the test error asymptotically converges to $2p(1-p)$ in our setting \citep{cover1967nearest}, which translates to a clean test error of $p$. \citet{mallinar2022benign} show how a similar behavior (with the error scaling linearly with the noise level) occurs for kernel regression, and provided empirical evidence that the same holds for neural networks. 

The starting point of our paper is an intriguing experiment from that paper (Figure 6b), where they considered the following extremely simple data distributioin: The inputs are drawn uniformly at random from the unit sphere, and $f^*$ is  the constant $+1$ function. This is perhaps the simplest possible setting where benign or tempered overfitting in binary classification may be studied. In this setting, the authors show empirically that a three-layer vanilla neural network exhibits tempered overfitting, with the clean test error almost linear in the label noise level $p$. Notably, the experiment used an input dimension of $10$, which is rather small. This naturally leads to the question of whether we can rigorously understand the overfitting behavior of neural networks in this simple setup, and how problem parameters such as the input dimension, number of samples and the architecture affect the type of overfitting.

\paragraph{Our contributions:}
In this work we take a step towards rigorously understating the regimes under which simple, 2-layer ReLU neural networks exhibit different types of overfitting (i.e. benign, catastrophic or tempered), depending on the problem parameters, for the simple and natural data distribution described in the previous paragraph. We focus on interpolating neural networks trained with exponentially-tailed losses, which are well-known to converge to KKT points of a max-margin problem (see Section \ref{sec: prelim} for more details). At a high level, our main conclusion is that for such networks, the input dimension plays a crucial role, with the type of overfitting gradually transforming from tempered to benign as the dimension increases. In contrast, most of our results are not sensitive to the network's width, as long as it is sufficient to interpolate the training data. In a bit more detail, our contributions can be summarized as follows:
\begin{itemize}[leftmargin=*]
    \item \textbf{Tempered overfitting in one dimension (\thmref{thm: one dim kkt tempered} and \thmref{thm: one dim local tempered}).} We prove that in one dimension (when the inputs are uniform over the unit interval), the resulting overfitting is provably tempered, with a clean test error scaling as $\Theta(\poly(p))$. Moreover, under the stronger assumption that the algorithm converges to a local minimum of the max-margin problem, we show that the clean test error provably scales linearly with $p$. As far as we know, this is the first result provably establishing tempered overfitting for non-linear neural networks. 

    \item \textbf{Benign overfitting in high dimensions (\thmref{thm:benign max margin} and \thmref{thm: benign kkt}).} We prove that when the inputs are sampled uniformly from the unit sphere (and with the dimension scaling polynomially with the sample size), the resulting overfitting will generally be benign. 
    In particular, we show that convergence to a max-margin predictor (up to some constant factor) implies that the clean test error decays exponentially fast to 0 with respect to the dimension.
    Furthermore, under an assumption on the scaling of the noise level and the network width, we obtain the same result for any KKT point of the max-margin problem.

    \item \textbf{The role of bias terms and catastrophic overfitting 
    (\secref{sec: bias})}. 
    The proof of the results mentioned earlier crucially relied on the existence of bias parameters in the network architecture. Thus, we further study the case of a bias-less network, and prove that without such biases, neural networks may exhibit catastrophic overfitting, and that in general they do not overfit benignly without further assumptions. We consider this an interesting illustration of how catastrophic overfitting is possible in neural networks, even in our simple setup.

    \item \textbf{Empirical study for intermediate dimensions (\secref{sec: exp}).} Following our theoretical results, we attempt at empirically bridging the gap between the one-dimensional and high dimensional settings. In particular, it appears that the tempered and overfitting behavior extends to a wider regime than what our theoretical results formally cover, and that the overfitting profile gradually shifts from tempered to benign as the dimension increases.
    This substantially extends the prior empirical observation due to \citet[Figure 6b]{mallinar2022benign}, which exhibited tempered overfitting in input dimension $10$.     
\end{itemize}

We note that in an independent and concurrent work,
\citet{joshi2023noisy} studied ReLU neural networks with one-dimensional
inputs, similar to the setting we study in Section~\ref{sec:tempered},
and proved that tempered and catastrophic overfitting can occur for possibly non-constant target functions, depending on the loss function. However, they focus on a
\emph{regression} setting, and on min-norm predictors, 
whereas our results cover classification and any KKT point of the max-magin problem. 
Moreover, the  architecture considered differs somewhat from our work, with \citeauthor{joshi2023noisy} including an untrained skip connection and an untrained bias term.

\subsection{Related work}

Following the empirical observation that modern deep learning methods can perfectly fit the training data while still performing well on test data \citep{zhang2017understanding}, many works have tried to provide theoretical explanations of this phenomenon. By now the literature on this topic is quite large, and we will only address here the works most relevant to this paper (for a broader overview, see for example the surveys \citealt{belkin2021fit,bartlett2021deep,vardi2022implicit}).

There are many works studying regression settings in which interpolating methods, especially linear and kernel methods, succeed at obtaining optimal or near-optimal performance \citep{belkin2018overfitting,belkin2018understand,belkin2019reconciling,belkin2020two,mei2022generalization,hastie2022surprises}.
In particular, it is interesting to compare our results to those of \citet{rakhlin2019consistency,beaglehole2023inconsistency}, who proved that minimal norm interpolating (i.e. ``ridgeless'') kernel methods are not consistent in any fixed dimension, while they can be whenever the dimension scales with the sample size (under suitable assumptions) \citep{liang2020just}. Our results highlight that the same phenomenon occurs for ReLU networks.

There is by now also a large body of work on settings 
under which benign overfitting (and analogous definitions thereof) occurs \citep{nagarajan2019uniform,bartlett2020benign,negrea2020defense,nakkiran2020distributional,yang2021exact,koehler2021uniform,bartlett2021failures,muthukumar2021classification,bachmann2021uniform,zhou2023optimistic,shamir2023implicit}.
In classification settings, as argued by \citet{muthukumar2021classification}, many existing works on benign overfitting analyze settings in which max-margin predictors can be computed (or approximated), and suffice to ensure benign behavior
\citep{poggio2019generalization,montanari2019generalization,thrampoulidis2020theoretical,wang2021abenign,wang2021bbenign,cao2021risk,hu2022understanding,mcrae2022harmless,liang2023interpolating}. It is insightful to compare this to our main proof strategy, in which we analyze the max-margin problem in the case of NNs (which no longer admits a closed form).
\citet{frei2022benign} showed that NNs with smoothed leaky ReLU activations overfit benignly when the data comes from a well-seperated mixture distribution, a result which was recently generalized to ReLU activations \citep{xu2023benign}. Similarly, \citet{cao2022benign} proved that convolutional NNs with smoothed activations overfit benignly when the data is distributed according to a high dimensional Guassian with noisy labels, which was subsequently generalized to ReLU CNNs \citep{kou2023benign}. We note that these distributions are reminiscent of the setting we study in \secref{sec: benign} for (non convolutional) ReLU NNs.
\citet{chatterji2023deep} studied benign overfitting for deep linear networks.

As mentioned in the introduction, \citet{mallinar2022benign} formally introduced the notion of tempered overfitting and suggested the study of it in the context of NNs. Subsequently, \citet{manoj2023interpolation} proved that the minimum description length learning rule exhibits tempered overfitting, though noticeably this learning rule does not explicitly relate to NNs. As previously mentioned, we are not aware of existing works that prove tempered overfitting in the context of NNs.

In a parallel line of work, the implicit bias of gradient-based training algorithms has received much attention \citep{soudry2018implicit,lyu2020gradient,ji2020directional}.
Roughly speaking, these results drew the connection between NN training to margin maximization --
see \secref{sec: prelim} for a formal reminder.
For our first result (\thmref{thm: one dim kkt tempered}) we build upon the analysis of \citet{safraneffective2022} who studied this implicit bias for univariate inputs,
though for the sake of bounding the number of linear regions.
In our context,
\citet{frei2023benign} 
utilized this bias to prove benign overfitting for leaky ReLU NNs in a high dimensional regime.

\section{Preliminaries} \label{sec: prelim}

\paragraph{Notation.} 
We use bold-faced font to denote vectors, e.g. $\bx\in\reals^d$, and denote by $\norm{\bx}$ the Euclidean norm.
We use $c,c',\widetilde{c},C>0$ etc. to denote absolute constants whose exact value can change throughout the proofs.
We denote by $[n]:=\{1,\dots,n\}$, by $\one{\cdot}$ the indicator function, and by $\S^{d-1}\subset\reals^d$ the unit sphere. Given sets $A\subset B$, we denote by $\mathrm{Unif}(A)$ the uniform measure over a set $A$, by $A^c$ the complementary set, and given a function $f:B\to\reals$ we denote its restriction $f|_A:A\to\reals$. We let $I$ be the identity matrix whenever the dimension is clear from context, and for a matrix $A$ we denote by $\norm{A}$ its spectral norm.
We use stadard big-O notation, with $O(\cdot),\Omega(\cdot)$ and $\Theta(\cdot)$ hiding absolute constants, write $f\lesssim g$ if $f=O(g)$, and denote by $\mathrm{poly}(\cdot)$ polynomial factors.

\paragraph{Setting.}

We consider a classification task based on \emph{noisy} training data $S=(\bx_i,y_i)_{i=1}^{m}\subset\reals^d\times\{\pm1\}$ drawn i.i.d. from an underlying distribution $\Dcal:=\Dcal_{\bx}\times \Dcal_y$. Throughout, we assume that the output values $y$ are constant $+1$, corrupted with independent label noise at level $p$ (namely, each $y_i$ is independent of $\bx_i$ and satisfies $\Pr[y_i=1]=1-p,~\Pr[y_i=-1]=p$). Note that the Bayes-optimal predictor in this setting is $f^*\equiv 1$.
Given a dataset $S$, we denote by $I_+:= \{i\in[m]: y_i = 1\}$ and $I_- = \{i\in [m]: y_i = -1\}$.
We study 2-layer ReLU networks:
\[
N_{{\btheta}}(\bx) = \sum_{j=1}^{n} v_j \sigma(\bw_j\cdot \bx + b_j)~,
\]
where $n$ is the number of neurons or network width, $\sigma(z):=\max\{0,z\}$ is the ReLU function, $v_j,b_j\in \reals,~\bw_j\in\reals^d$ and ${\btheta}=(v_j,\bw_j,b_j)_{j=1}^{n}\in\reals^{(d+2)n}$ is a vectorized form of all the parameters. 
Throughout this work we assume that the trained network classifies the entire dataset correctly, namely $y_i N_{\btheta}(\bx_i)>0$ for all $i\in[m]$.
Following \citet[Section 2.3]{mallinar2022benign} we consider the \emph{clean} test error
\[
L_{\mathrm{cl}}(N_\btheta):=\Pr_{{\bx}\sim\Dcal_{\bx}}[N_{\btheta}({\bx})\leq 0]~,
\]
corresponding to the misclassification error with respect to the clean underlying distribution.
It is said that $N_\btheta$ exhibits benign overfitting if $L_{\cl}\to0$ with respect to large enough problem parameters (e.g. $m,d\to\infty$), while the overfitting is said to be \emph{catastrophic} if $L_{\cl}\to\half$. Formally, \citet{mallinar2022benign} have coined tempered overfitting to describe any case in which $L_{\mathrm{cl}}$ converges to a value in $(0,\half)$, though a special attention has been given in the literature to cases where $L_{\cl}$ scales monotonically or even linearly with the noise level $p$ \citep{belkin2018understand,chatterji2021finite,manoj2023interpolation}.

\paragraph{Implicit bias.}

We will now briefly describe the results of \citet{lyu2020gradient,ji2020directional} which shows that under our setting, training the network with respect to the logistic or exponential loss converges towards a KKT point of the \emph{margin-maximization problem}. To that end, suppose $\ell(z)=\log(1+e^{-z})$ or $\ell(z)=e^{-z}$, and consider the empirical loss $\hat{\Lcal}({\btheta})=\sum_{i=1}^{n}\ell(y_i N_{{\btheta}}(\bx_i))$. Suppose that ${\btheta}(t)$ evolves according to the gradient flow of the empirical loss, namely $\frac{d{\btheta}(t)}{dt}=-\nabla\hat{\Lcal}({\btheta}(t))$.\footnote{Note that the ReLU function is not differentiable at $0$.
This can be addressed either by setting $\sigma'(0)\in[0,1]$ as done in practical implementations, or by
considering the Clarke subdifferential \citep{Clarke-1990-Optimization} (see \citealp{lyu2020gradient,dutta2013approximate} for a discussion on non-smooth optimization).
We note that this issue has no effect on our results.}
Note that this corresponds to performing gradient descent over the data with an infinitesimally small step size.

\begin{theorem}[Rephrased from \citealp{lyu2020gradient,ji2020directional}]\label{thm:max margin kkt}
Under the setting above, if there exists some $t_0$ such that ${\btheta}(t_0)$ satisfies $\min_{i\in [m]} y_i N_{{\btheta}(t_0)}(\bx_i)>0$, then $\frac{{\btheta}(t)}{\norm{{\btheta}(t)}}\overset{t\to\infty}{\longrightarrow}\frac{{\btheta}}{\norm{{\btheta}}}$ for ${\btheta}$ which is a KKT point of the margin maximization problem
\begin{equation} \label{eq: margin opt}
\begin{aligned}
\min \quad & \norm{{\btheta}}^2
\quad\mathrm{s.t.} \quad & y_{i}N_{{\btheta}}(\bx_i)\geq1~~~~\forall i\in[m]~~.\\
\end{aligned}
\end{equation}
\end{theorem}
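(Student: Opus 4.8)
The plan is to reproduce the implicit-bias analysis of \citet{lyu2020gradient,ji2020directional}, which rests on two structural features of our setting. First, $\btheta\mapsto N_\btheta(\bx)$ is locally Lipschitz and positively $2$-homogeneous: scaling every weight $\bw_j$, bias $b_j$ and output coefficient $v_j$ by $c>0$ multiplies $N_\btheta(\bx)$ by $c^2$. Consequently Euler's identity holds in the Clarke sense, $\inner{\bh,\btheta}=2\,y_iN_\btheta(\bx_i)$ for every $i$ and every $\bh$ in the Clarke subdifferential of $\btheta\mapsto y_iN_\btheta(\bx_i)$; this is the workhorse of the whole argument. Second, the ReLU network together with either loss is definable in a fixed o-minimal structure, which is what eventually supplies the Kurdyka--Lojasiewicz inequality needed for convergence of the direction.

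\textbf{Step 1 (monotone smoothed margin, diverging norm).} Write $q_i(t)\eqdef y_iN_{\btheta(t)}(\bx_i)$ and $\hat\Lcal(t)=\sum_i\ell(q_i(t))$, and define the smoothed normalized margin $\tilde\gamma(t)\eqdef\ell^{-1}\!\big(\hat\Lcal(t)\big)/\norm{\btheta(t)}^2$, which is legitimate since both losses are strictly decreasing bijections from $\reals$ onto $(0,\infty)$. Differentiating $\log\tilde\gamma$ along $\dot\btheta=-\nabla\hat\Lcal$ and combining Euler's identity with a Cauchy--Schwarz estimate relating $\dot{\hat\Lcal}$, $\norm{\dot\btheta}$ and $\inner{\btheta,\dot\btheta}$ yields $\tfrac{d}{dt}\log\tilde\gamma(t)\ge 0$; this is the crux computation and uses $2$-homogeneity essentially. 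Since the optimal value of \eqref{eq: margin opt} is finite, $\tilde\gamma$ is bounded above, hence converges. Using the hypothesis (which, together with the descent of the flow, guarantees that $\hat\Lcal(\btheta(t))$ eventually becomes sufficiently small), the same monotonicity forces $\hat\Lcal(\btheta(t))\to 0$; therefore each $q_i(t)\to\infty$, and $\norm{\btheta(t)}\to\infty$ because $N$ is continuous and vanishes at the origin.

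\textbf{Step 2 (directional convergence).} Reparametrize time by $\log\norm{\btheta(t)}$ and track $\bar\btheta(t)\eqdef\btheta(t)/\norm{\btheta(t)}$ on the unit sphere. Definability of all objects involved in an o-minimal structure supplies a Lojasiewicz inequality for a desingularized version of the loss along the flow, which bounds the arclength of $\bar\btheta(\cdot)$ and hence produces a limit $\bar\btheta_\infty$. This is the deepest ingredient, and the step I expect to be the main obstacle: it has no elementary substitute and is precisely where the machinery of \citet{lyu2020gradient,ji2020directional} is genuinely required.

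\textbf{Step 3 (the limit is a KKT point).} From Step 1 the normalized margin $\gamma_\infty\eqdef\min_i y_iN_{\bar\btheta_\infty}(\bx_i)$ is strictly positive, so set $\btheta\eqdef\gamma_\infty^{-1/2}\,\bar\btheta_\infty$, for which the tightest constraint of \eqref{eq: margin opt} is met with equality; primal feasibility $y_iN_\btheta(\bx_i)\ge1$ then holds by $2$-homogeneity. For stationarity and complementary slackness, pass to the limit in $\dot\btheta(t)=\sum_i\big(-\ell'(q_i(t))\big)\bh_i(t)$ with $\bh_i(t)\in\partial_{\btheta}\,y_iN_{\btheta(t)}(\bx_i)$ and $-\ell'(q_i(t))>0$: since $\ell'$ decays exponentially and $\min_i q_i(t)\to\infty$, the normalized coefficients $-\ell'(q_i(t))/\norm{\nabla\hat\Lcal(t)}$ asymptotically carry mass only on the minimal-margin (support) indices, so a subsequential limit gives multipliers $\lambda_i\ge0$ with $\btheta\in\sum_i\lambda_i\,\partial_\btheta\big(y_iN_\btheta(\bx_i)\big)$ and $\lambda_i=0$ whenever $y_iN_\btheta(\bx_i)>1$ --- exactly the KKT conditions of \eqref{eq: margin opt}. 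Since $\bar\btheta(t)$ converges, so does $\btheta(t)/\norm{\btheta(t)}\to\btheta/\norm{\btheta}$, which is the claim. Steps 1 and 3 are careful but essentially mechanical once Euler's identity is in hand; all the genuine difficulty sits in Step 2.
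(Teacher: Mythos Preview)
The paper does not prove this theorem at all: it is stated in the preliminaries and attributed to \citet{lyu2020gradient,ji2020directional}, with no proof or sketch given. So there is nothing in the paper to compare your proposal against.

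That said, your outline is a faithful summary of the argument in those references: the smoothed-margin monotonicity via $2$-homogeneity and Euler's identity is \citeauthor{lyu2020gradient}'s contribution, the directional convergence via o-minimality and a desingularized Kurdyka--Lojasiewicz inequality is \citeauthor{ji2020directional}'s, and the KKT characterization of the limit is again from \citeauthor{lyu2020gradient}. Your identification of Step~2 as the non-elementary part is accurate. One small remark: strictly speaking, \citeauthor{lyu2020gradient} alone only gives that every limit point of $\btheta(t)/\norm{\btheta(t)}$ is (after rescaling) a KKT point, without establishing uniqueness of the limit; the convergence of the direction to a single point is precisely the contribution of \citeauthor{ji2020directional}, so both citations are needed for the statement as written.
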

The result above shows that although there are many possible parameters ${\btheta}$ that result in a network correctly classifying the training data, the training method has an \emph{implicit bias} in the sense that it yields a network whose parameters are a KKT point of Problem~(\ref{eq: margin opt}).
Recall that ${\btheta}$ is called a KKT point of Problem~(\ref{eq: margin opt}) if there exist $\lambda_1,\dots,\lambda_m\in \reals$ such that the following conditions hold:
\begin{align} 
    &{\btheta} = \sum_{i=1}^m \lambda_i y_i \nabla_{\btheta}  N_{\btheta}(\bx_i)~ &\text{(stationarity)}\label{eq:stationary}\\
    & \forall i \in [m]: \;\; y_{i}N_{{\btheta}}(\bx_i) \geq 1 &\text{(primal feasibility)}\label{eq:prim feas} \\
    &\lambda_1,\ldots,\lambda_m \geq 0 &\text{(dual feasibility)}\label{eq:dual feas}\\
    & \forall i \in [m]:~~ \lambda_i(  y_{i}N_{{\btheta}}(\bx_i)-1) = 0 & \text{(complementary slackness)}\label{eq:comp slack}
\end{align}

It is well-known that global or local optima of Problem (\ref{eq: margin opt}) are KKT points, but in general, the reverse direction may not be true, even in our context of 2-layer ReLU networks (see \citealp{vardi2022margin}).
In this work, our results rely either on convergence to a KKT point (\thmref{thm: one dim kkt tempered} and \thmref{thm: benign kkt}), or on stronger assumptions such as convergence to a local optimum (\thmref{thm: one dim local tempered}) or near-global optimum (\thmref{thm:benign max margin}) of Problem (\ref{eq: margin opt}) in order to obtain stronger results.

\section{Tempered overfitting in one dimension}\label{sec:tempered}

Throughout this section we study the one dimensional case $d=1$ under the uniform distribution $\Dcal_{x}=\Unif([0,1])$, so that $L_{\cl}\left(N_{\btheta}\right)=\Pr_{x\sim \Unif([0,1])}[N_\btheta(x) \leq 0]$.
We note that although we focus here on the uniform distribution for simplicity, our proof techniques are applicable in principle to other distributions with bounded density.
Further note that both results to follow are independent of the number of neurons, so that in our setting, tempered overfitting occurs as soon as the network is wide enough to interpolate the data. We first show that any KKT point of the margin maximization problem (and thus any network we may converge to) gives rise to tempered overfitting.

\begin{theorem}\label{thm: one dim kkt tempered}
Let $d=1$, $p\in[0,\frac{1}{2})$.
Then with probability at least $1-\delta$ over the sample $S\sim\Dcal^m$,
for any KKT point ${\btheta}=(v_j,w_j,b_j)_{j=1}^{n}$ of Problem (\ref{eq: margin opt}), it holds that
\[
c\left(p^5-\sqrt{\frac{\log(m/\delta)}{m}}\right)
\leq
L_{\cl}\left(N_{\btheta}\right)
\leq
C\left(\sqrt{p}+\sqrt{\frac{\log(m/\delta)}{m}}\right)~,
\]
where $c,C>0$ are absolute constants.
\end{theorem}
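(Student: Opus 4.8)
The plan is to leverage the structure that the KKT conditions impose on a 2-layer ReLU network in one dimension, following the spirit of \citet{safraneffective2022}. The starting observation is that for $d=1$ each neuron $x\mapsto v_j\sigma(w_j x+b_j)$ is a piecewise-linear function with a single breakpoint at $-b_j/w_j$, so $N_\btheta$ is globally piecewise-linear with at most $n$ breakpoints, and the set $\{x\in[0,1]:N_\btheta(x)\le 0\}$ is a union of intervals whose total length is exactly $L_{\cl}(N_\btheta)$. The negatively-labeled training points (those in $I_-$) all satisfy $N_\btheta(x_i)\le -1<0$ by primal feasibility, so each such point sits inside one of these ``negative intervals''; the question is how much total mass these intervals can or must occupy. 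For the \textbf{lower bound}, I would first use a concentration/anti-concentration argument: with probability $\ge 1-\delta/2$ the sample contains $\Omega(pm)$ negatively-labeled points, and moreover with good probability some pair of consecutive sample points both carry label $-1$ and are separated by a gap of order $p^c/m$ forcing a negative interval of at least that length — more carefully, one shows that in a typical sample there is a cluster of negative labels that, combined with the piecewise-linear structure and a bound on how fast $N_\btheta$ can swing (coming from the KKT stationarity bound on $\norm{\btheta}$), guarantees a negative region of length $\gtrsim p^5$. This is where the exponent $5$ enters, as an artifact of chaining several such estimates.

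For the \textbf{upper bound}, the key is to show that $N_\btheta$ cannot be negative on too large a set. I would argue as follows: by stationarity \eqref{eq:stationary}, $\btheta=\sum_i \lambda_i y_i \nabla_\btheta N_\btheta(x_i)$, and taking inner product with $\btheta$ using $1$-homogeneity of the network (Euler's identity, $\langle\btheta,\nabla_\btheta N_\btheta(x)\rangle = N_\btheta(x)$ — actually degree-$2$ homogeneity in this parametrization, giving $\langle\btheta,\nabla N_\btheta(x)\rangle=2N_\btheta(x)$) yields $\norm{\btheta}^2 = 2\sum_i \lambda_i y_i N_\btheta(x_i) = 2\sum_i\lambda_i$ by complementary slackness. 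One then needs to control the total Lipschitz constant / total variation of $N_\btheta$ on $[0,1]$ in terms of $\norm{\btheta}$, and separately to show that most of the interval $[0,1]$ is ``close'' (in the piecewise-linear sense) to a positively-labeled sample point where $N_\btheta\ge 1$. Since positively-labeled points are a $(1-p)$-fraction and are roughly $\frac{1}{(1-p)m}$-dense in $[0,1]$ with high probability, and since $N_\btheta$ can only dip below $0$ near a breakpoint or near a negative point, a counting argument over the $\le n$ breakpoints together with the gap estimate bounds the total negative mass by $O(\sqrt p)$ plus the sampling error $O(\sqrt{\log(m/\delta)/m})$. The looseness between $p^5$ and $\sqrt p$ reflects that neither bound is tight; the point is merely that both are strictly inside $(0,\tfrac12)$ for fixed $p$.

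The \textbf{main obstacle} I anticipate is the upper bound: unlike the leaky-ReLU high-dimensional analyses, here there is no closed form for the max-margin network and a KKT point can be quite wild, so one must genuinely exploit the one-dimensional geometry to prevent $N_\btheta$ from being negative on a large union of intervals. Concretely, the difficulty is ruling out a configuration where many breakpoints conspire to keep $N_\btheta$ slightly negative over a long stretch while still passing through the positive points with value $\ge 1$; controlling this requires a careful bound relating $\norm\btheta$, the number and placement of active neurons at each sample point, and the inter-sample gaps, and is presumably where the bulk of the technical work (and the $\sqrt p$ rather than a better rate) lies. A secondary but routine obstacle is the high-probability sampling statements — order statistics of uniform spacings and a Chernoff bound on $|I_-|$ — which feed the $\sqrt{\log(m/\delta)/m}$ terms on both sides.
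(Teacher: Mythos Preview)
Your proposal diverges from the paper's proof in both directions, and in each case there is a genuine gap.

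\textbf{Upper bound.} Your plan is to bound the negative mass by controlling the Lipschitz constant of $N_\btheta$ via $\norm{\btheta}$ and then counting the ``$\le n$ breakpoints''. But the theorem is stated for \emph{any} KKT point of \emph{any} width $n$, so a bound in terms of $n$ is useless; and for a generic KKT point you have no a priori control on $\norm{\btheta}$ or on $\sum_i\lambda_i$ (the identity $\norm{\btheta}^2=2\sum_i\lambda_i$ is correct but does not by itself bound either side). The paper sidesteps this entirely: it invokes the result of \citet{safraneffective2022} that the class of KKT interpolators has VC dimension $O(n^*)$, where $n^*$ is the \emph{minimal} width needed to interpolate the given sample --- a quantity determined by the data, not by the actual network. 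Since $n^*$ is at most the number of adjacent sign changes in the labels, $\E[n^*]=O(pm)$, and a standard VC bound then gives test error $O(\sqrt{n^*/m})=O(\sqrt{p})$. This is the missing idea: complexity is controlled by the sample, not by the architecture or by $\norm{\btheta}$.

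\textbf{Lower bound.} Your account of the exponent $5$ (``artifact of chaining several such estimates'', or a swing-rate bound from $\norm{\btheta}$) is not the actual mechanism. The paper uses a structural lemma of \citet{safraneffective2022}: along any run of consecutively $-1$-labeled points, $N_\btheta'$ can increase at at most two locations. From this one deduces that whenever five consecutive samples are all labeled $-1$, at least one of the four sub-intervals between them must lie entirely in $\{N_\btheta<0\}$ (otherwise one produces three distinct increase points for $N_\btheta'$, a contradiction). Splitting $[m]$ into blocks of five and noting that each block is all-negative with probability $p^5$ then gives the $\Omega(p^5)$ bound; the $\sqrt{\log(m/\delta)/m}$ term comes from McDiarmid applied over the labels (conditioning on a gap event $E_x$ for the inputs). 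Your ``cluster of negatives plus swing rate'' sketch does not recover this, and without the two-increase lemma it is unclear how you would force a full sub-interval to be negative rather than just the sample points themselves.
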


The theorem above shows that for a large enough sample size $L_\cl(N_{\btheta})=\Theta(\poly(p))$, proving that the clean test error must scale roughly monotonically with the noise level $p$, leading to tempered overfitting.\footnote{We remark that the additional summands are merely a matter of concentration in order to get high probability bounds. As for the expected clean test error, our proof readily shows that $cp^5\leq 
\E_{S\sim\Dcal^m}\left[L_\cl(N_\btheta)\right]
\leq C\sqrt{p}$. \label{foot: expected tempered}}

We will now provide intuition for the proof of \thmref{thm: one dim kkt tempered}, which appears in Appendix~\ref{sec: tempered kkt proof}. The proofs of both the lower and the upper bounds rely on the analysis of \citet{safraneffective2022} for univariate networks that satisfy the KKT conditions. For the lower bound, we note that with high probability over the sample $(x_i)_{i=1}^{m}\sim\Dcal_{x}^m$, approximately $p^{5}m$ of the sampled points will be part of a sequence of 5 consecutive points $x_i<\dots< x_{i+4}$ all labeled $-1$. Based on a lemma due to \citet{safraneffective2022}, we show any such sequence must contain a segment $[x_j,x_{j+1}],~i\leq j\leq i+3$ for which $N_{{\btheta}}|_{[x_j,x_{j+1}]}<0$, contributing to the clean test error
proportionally to the length of $[x_j,x_{j+1}]$. Under the probable event that most samples are not too close to one another, namely of distance $\Omega(1/m)$, we get that the total length of all such segments (and hence the error) is at least of order $\Omega(\frac{mp^5}{m})=\Omega(p^5)$. 

As to the upper bound, we observe that the number of neighboring samples with opposing labels is likely to be on order of $pm$, which implies that the data can be interpolated using a network of width of order $n^*=O(pm)$. We then invoke a result of \citet{safraneffective2022} that states that the class of interpolating networks that satisfy the KKT conditions has VC dimension $d_{\mathrm{VC}}=O(n^*)=O(pm)$. Thus, a standard uniform convergence result for VC classes asserts that the test error would be of order $\frac{1}{m}\sum_{i\in[m]}\one{\mathrm{sign}(N_\btheta)(x_i)\neq y_i}+O(\sqrt{d_{\mathrm{VC}}/m})=0+O(\sqrt{pm/m})=O(\sqrt{p})$.

While the result above establishes the overfitting being tempered with an error scaling as $\poly(p)$, we note that the range $[cp^5,C\sqrt{p}]$ is rather large. Moreover, previous works suggest that in many cases we should expect the clean test error to scale \emph{linearly} with $p$ \citep{belkin2018understand,chatterji2021finite,manoj2023interpolation}.
Therefore it is natural to conjecture that this is also true here, at least for trained NNs that we are likely to converge to. We now turn to show such a result under a stronger assumption, where instead of any KKT point of the margin maximization problem, we consider specifically local minima.
We note that several prior works have studied networks corresponding to global minima of the max-margin problem, which is of course an even stronger assumption than local minimality (e.g. \citealp{savarese2019infinite,boursier2023penalising}).
We say that ${\btheta}$ is a local minimum of Problem (\ref{eq: margin opt}) whenever there exists $r>0$ such that if $\norm{{\btheta}'-{\btheta}}<r$ and $\forall i\in[m]:y_i N_{{\btheta}'}(x_i)\geq 1$ then $\norm{{\btheta}}^2\leq\norm{{\btheta}'}^2$.\footnote{Note that this is merely the standard definition of local minimality of the Euclidean norm function, under the constraints imposed by interpolating the dataset.}

\begin{theorem}\label{thm: one dim local tempered}
Let $d=1$, $p\in[0,\frac{1}{2})$.
Then with probability at least $1-\delta$ over the sample $S\sim\Dcal^m$,
for any local minimum ${\btheta}=(v_j,w_j,b_j)_{j=1}^{n}$ of Problem (\ref{eq: margin opt}), 
it holds that
\[
c\left(p-\sqrt{\frac{\log(m/\delta)}{m}}\right)
\leq
L_{\cl}\left(N_{\btheta}\right)
\leq 
C\left(p+\sqrt{\frac{\log(m/\delta)}{m}}\right)~,
\]
where $c,C>0$ are absolute constants.
\end{theorem}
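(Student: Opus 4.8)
The plan is to control the Lebesgue measure of the negative set $Z:=\{x\in[0,1]:N_{\btheta}(x)\le 0\}$, since $N_{\btheta}$ is continuous and piecewise-linear and $L_{\cl}(N_{\btheta})$ equals exactly the measure of $Z$ (the zero set of $N_{\btheta}$ is finite, so $\{N_{\btheta}<0\}$ and $\{N_{\btheta}\le0\}$ have the same measure). Two elementary observations are used throughout: (i) any sample lying in $Z$ must be labelled $-1$, since interpolation with margin forces $y_iN_{\btheta}(x_i)\ge1>0$; and (ii) $Z$ is a finite union of intervals whose endpoints are zeros of $N_{\btheta}$. The heart of the proof is a \textbf{structural lemma} for local minima of Problem~(\ref{eq: margin opt}) in one dimension, to the effect that such $N_{\btheta}$ is, up to absolute constants, a ``minimal-curvature'' linear-spline interpolant of the data: it is monotone on every interval between two consecutive samples (in particular it changes sign at most once there, so it has no spurious dip strictly between two $+1$-labelled samples), and $\norm{N_{\btheta}}_{L^\infty([0,1])}=O(1)$. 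I would prove this by a perturbation argument against local minimality: a spurious dip, or needless steepness/overshoot near a well-separated sample, can be removed by locally editing a few neurons (flattening a piece of the graph toward a chord, or rescaling a neuron) so as to strictly decrease $\norm{\btheta}^2$ while preserving feasibility. This builds on the structural analysis of univariate KKT points in \citet{safraneffective2022} and on the function-space characterization of norm-regularized shallow ReLU networks with bias, in the spirit of \citealp{savarese2019infinite,boursier2023penalising}.

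\textbf{Lower bound.} Call $i\in I_{-}$ \emph{good} if $y_{i-1}=y_{i+1}=+1$ and both gaps $x_i-x_{i-1},\,x_{i+1}-x_i$ lie in $[c/m,\,C/m]$. By standard estimates on uniform order statistics together with a Chernoff bound over the (independent, given the $x_i$'s) labels, with probability $\ge 1-\delta$ there are at least $c'pm-C''\sqrt{m\log(1/\delta)}$ good indices. Fix a good $i$: the structural lemma makes $N_{\btheta}$ affine on $[x_{i-1},x_i]$ and on $[x_i,x_{i+1}]$ with $1\le N_{\btheta}(x_{i-1}),N_{\btheta}(x_{i+1})\le O(1)$ and $N_{\btheta}(x_i)\le-1$, so each of the two zero crossings sits at distance $\Omega(1/m)$ from $x_i$ and $N_{\btheta}<0$ on a sub-interval of $(x_{i-1},x_{i+1})$ of length $\Omega(1/m)$. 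These sub-intervals are pairwise disjoint over good $i$ (good indices differ by at least $2$), hence $L_{\cl}(N_{\btheta})\ge\Omega(1/m)\cdot\big(c'pm-C''\sqrt{m\log(1/\delta)}\big)=\Omega\!\big(p-\sqrt{\log(m/\delta)/m}\big)$.

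\textbf{Upper bound.} By observation (i) and the ``no spurious dip'' part of the structural lemma, on any interval free of $-1$-labelled samples the network is positive; consequently $Z\subseteq\bigcup_{R}(x_{\ell(R)-1},x_{r(R)+1})$, the union over all maximal runs $R$ of consecutive $-1$-labelled indices, where $\ell(R)$ and $r(R)$ are the first and last index of $R$ (with the natural modification for runs touching the ends of $[0,1]$). Therefore $L_{\cl}(N_{\btheta})\le\sum_{R}(x_{r(R)+1}-x_{\ell(R)-1})$. Conditioning on the $x_i$'s, each index is $-1$ independently with probability $p$; writing the right-hand side as a sum over run lengths $k$ of $p^k$ times the span of a length-$k$ run and telescoping yields $\E[\,L_{\cl}(N_{\btheta})\mid x\,]\le Cp$. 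Finally, changing a single label alters $\sum_R(\cdots)$ by at most a few consecutive gaps, each $O(\log(m/\delta)/m)$ with high probability, so a bounded-differences (or Bernstein) argument gives $L_{\cl}(N_{\btheta})\le C\big(p+\sqrt{\log(m/\delta)/m}\big)$ with probability $\ge1-\delta$.

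\textbf{Main obstacle.} The structural lemma is the crux: KKT stationarity alone allows oscillatory and needlessly steep interpolants — which is precisely why \thmref{thm: one dim kkt tempered} only gives the much wider window $[\Theta(p^5),\Theta(\sqrt p)]$ — so one must genuinely use local minimality, and the difficulty is that a perturbation of $\btheta$ changes $N_{\btheta}$ globally while neurons interact, forcing one to carefully localize the perturbation, verify it stays feasible, and track its effect on $\norm{\btheta}^2$. A secondary technical point is squeezing the sample-level concentration down to the stated $\sqrt{\log(m/\delta)/m}$ rate rather than an extra logarithmic factor.
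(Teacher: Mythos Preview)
Your approach is essentially the same as the paper's: both establish, via perturbation arguments against local minimality, that (a) $N_{\btheta}\ge 0$ on any interval between two consecutive $+1$-labeled samples (your ``no spurious dip''), and (b) around an isolated $-1$ sample (pattern $+,-,+$) the function is affine on the adjacent interval and contributes a constant fraction of that gap to the negative set; both then finish with a bounded-differences concentration step. The paper's formulation is a bit sharper than your sketch: rather than appealing to a global $\|N_{\btheta}\|_{L^\infty}=O(1)$ bound, it proves directly that at an isolated pattern one has $N_{\btheta}(x_i)=-1$ and $N_{\btheta}(x_{i+1})=1$ \emph{exactly}, so precisely half of $[x_i,x_{i+1}]$ is negative --- this removes the need to restrict to ``good'' indices with gaps in $[c/m,C/m]$ and sidesteps the global $L^\infty$ bound, which is stronger than what the argument actually requires and whose proof (if it even holds in full generality for local minima) you do not indicate.
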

The theorem above indeed shows that whenever the sample size is large enough, the clean test error indeed scales linearly with $p$.\footnote{Similarly to footnote \ref{foot: expected tempered}, our proof also characterizes the expected clean test error for any sample size as
$\E_{S\sim\Dcal^m}\left[L_\cl(N_\btheta)\right]= \Theta(p)$.
}
The proof of \thmref{thm: one dim local tempered}, which appears in Appendix~\ref{sec: tempered local proof}, is based on a different analysis than that of \thmref{thm: one dim kkt tempered}. Broadly speaking, the proof relies on analyzing the structure of the learned prediction function, assuming it is a local minimum of the max-margin problem.
More specifically, in order to obtain the lower bound we consider segments in between samples $x_{i-1}<x_i<x_{i+1}$ that are labeled $y_{i-1}=1,y_{i}=-1,y_{i+1}=1$. Note that with high probability over the sample, approximately $p(1-p)^2=\Omega(p)$ of the probability mass lies in such segments. Our key proposition is that for a KKT point which is a local minimum, $N_\btheta(\cdot)$ must be linear on the segment $[x_i,x_{i+1}]$,
and furthermore that $N_{{\btheta}}(x_{i})=-1,N_{{\btheta}}(x_{i+1})=1$. Thus, assuming the points are not too unevenly spaced, it follows that a constant fraction of any such segment contributes to the clean test error, resulting in an overall error of $\Omega(p)$. In order to prove this proposition, we provide an exhaustive list of cases in which the network does \emph{not} satisfy the assertion, and show that in each case the norm can be locally reduced - see Figure~\ref{fig:tempered_1d} in the appendix for an illustration.

For the upper bound, a similar analysis is provided for segments $[x_i,x_{i+1}]$ for which $y_i=y_{i+1}=1$. Noting that with high probability over the sample an order of $(1-p)^2=1-O(p)$ of the probability mass lies in such segments, it suffices to show that along any such segment the network is positive -- implying an upper bound of $O(p)$ on the possible clean test error. Indeed, we show that along such segments, by locally minimizing parameter norm the network is incentivized to stay positive.

\section{Benign overfitting in high dimensions} \label{sec: benign}

In this section we focus on the high dimensional setting. Throughout this section we assume that the dataset is sampled according to $\Dcal_{\bx}=\Unif(\mathbb{S}^{d-1})$ and $d\gg m$, where $m$ is the number of samples,
so that $L_{\cl}\left(N_{\btheta}\right)=\Pr_{\bx\sim \Unif(\S^{d-1})}[N_\btheta(\bx) \leq 0]$. We note that our results hold for other commonly studied distributions (see Remark \ref{rem: sphere dist}).
We begin by showing that if the network converges to the maximum margin solution up to some multiplicative factor, then benign overfitting occurs:

\begin{theorem}\label{thm:benign max margin}
    Let $\epsilon, \delta > 0$. Assume that $p \leq c_1$, $m\geq c_2\frac{\dlog{1/\delta}}{p}$ and $d \geq c_3m^2\dlog{\frac{m}{\epsilon}}\dlog{\frac{m}{\delta}}$ for some universal constants $c_1, c_2, c_3 >0$.
    Given a sample $(\bx_i,y_i)_{i=1}^{m}\sim\Dcal^m$,
    suppose $\btheta = (\bw_j, v_j, b_j)_{j=1}^n$ is a KKT point of Problem~(\ref{eq: margin opt})
    such that $\norm{\btheta}^2\leq \frac{c_4}{\sqrt{p} }\norm{\btheta^*}^2$, where $\btheta^*$ is a max-margin solution of \eqref{eq: margin opt} and $c_4>0$ is some universal constant. Then, with probability at least $1-\delta$ over $S\sim\Dcal^m$ we have that $L_{\cl}\left(N_{\btheta}\right) \leq \epsilon$.
\end{theorem}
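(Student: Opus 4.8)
The plan is to leverage the high-dimensional geometry of points sampled uniformly on $\S^{d-1}$ together with the near-optimal margin assumption to show that $N_{\btheta}$ is positive on all but an exponentially small fraction of the sphere. First I would record the standard concentration facts: when $d\gg m$, with probability at least $1-\delta$ the sampled inputs $\bx_1,\dots,\bx_m$ are nearly orthonormal, i.e.\ $\abs{\norm{\bx_i}-1}$ is tiny (here exactly $1$ since they lie on the sphere) and $\abs{\bx_i\cdot\bx_j}\lesssim\sqrt{\dlog(m/\delta)/d}$ for $i\neq j$; likewise a fresh test point $\bx\sim\Unif(\S^{d-1})$ satisfies $\abs{\bx\cdot\bx_i}\lesssim\sqrt{\dlog(m/\epsilon)/d}$ for all $i$ with probability $1-\epsilon$ over $\bx$. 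The condition $d\geq c_3 m^2\dlog(m/\epsilon)\dlog(m/\delta)$ is exactly what makes all these inner products $O(1/m)$-small.

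\medskip

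Next I would exhibit an explicit competitor network $\btheta_0$ achieving a good margin, in order to upper bound $\norm{\btheta^*}^2$ and hence $\norm{\btheta}^2$. A natural choice uses the bias terms crucially: a single neuron (or a small number of neurons) of the form $\bx\mapsto v\,\sigma(\bw\cdot\bx + b)$ with $\bw$ built from the ``clean-label'' directions $\sum_{i\in I_+}\bx_i$ and an appropriate negative bias $b$, so that on every training point the pre-activation is $\approx \norm{\bx_i}^2 + b$ for $i\in I_+$ but the cross terms for the minority/noisy points nearly cancel; tuning $v,b$ to meet the margin-$1$ constraints on all $m$ points gives $\norm{\btheta_0}^2 = O(1)$ (independent of $d$, using near-orthogonality and that $\abs{I_+}=(1-p)m \pm O(\sqrt{m\log(1/\delta)})$, which is where $m\gtrsim \dlog(1/\delta)/p$ enters). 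Consequently $\norm{\btheta}^2 \leq \frac{c_4}{\sqrt p}\norm{\btheta^*}^2 \leq \frac{c_4}{\sqrt p}\norm{\btheta_0}^2 = O(1/\sqrt p)$.

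\medskip

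The heart of the argument is then a norm-versus-margin bound on a test point. Since $\btheta$ is a KKT point, stationarity \eqref{eq:stationary} gives $\norm{\btheta}^2 = \sum_i \lambda_i y_i N_{\btheta}(\bx_i) = \sum_i \lambda_i$ by complementary slackness \eqref{eq:comp slack} and primal feasibility with equality on the support. Writing $N_{\btheta}$ via its neurons and expanding $\btheta = \sum_i \lambda_i y_i \nabla_{\btheta} N_{\btheta}(\bx_i)$, one can express $N_{\btheta}(\bx)$ for a test point $\bx$ as a sum over $i$ of $\lambda_i$ times terms controlled by the inner products $\bx\cdot\bx_i$, $\bx_i\cdot\bx_j$, together with the bias contributions. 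The key estimates are: (i) the ``diagonal'' contribution from the clean points $i\in I_+$ is bounded below by a constant times $\sum_{i\in I_+}\lambda_i$, and (ii) the ``off-diagonal'' contribution (cross terms, and the contribution of the noisy points $i\in I_-$) is at most $\paren{O(\sqrt{\dlog(m/\epsilon)/d}) + O(\sqrt p)}\cdot\sum_i\lambda_i$ in absolute value. Since $\sum_{i\in I_-}\lambda_i$ can be bounded by a $p$-fraction of $\sum_i\lambda_i$ via a separate argument (the minority points cannot carry too much dual mass, which is again where the $\norm{\btheta}^2 = O(1/\sqrt p)$ bound combined with the margin constraint is used), we get $N_{\btheta}(\bx) \geq \paren{c - C\sqrt p - C\sqrt{\dlog(m/\epsilon)/d}}\sum_i\lambda_i > 0$ once $p\leq c_1$ and $d$ is large enough. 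Taking probability over the test point $\bx$ then yields $L_{\cl}(N_{\btheta}) = \Pr_{\bx}[N_{\btheta}(\bx)\leq 0] \leq \epsilon$.

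\medskip

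The main obstacle I anticipate is step (i)--(ii): controlling $N_{\btheta}(\bx)$ on a test point purely from the KKT conditions, without knowing the individual neuron directions. Unlike in linear models, the ReLU activation patterns at $\bx_i$ versus at $\bx$ may differ, so one cannot directly substitute. The way I would handle this is to use the $1$-homogeneity of $N_{\btheta}$ in $\btheta$ and the fact that $\nabla_{\btheta}N_{\btheta}(\bx_i)\cdot\btheta = N_{\btheta}(\bx_i)$ (Euler), to relate $N_{\btheta}(\bx)$ to $\langle \sum_i \lambda_i y_i \nabla_{\btheta}N_{\btheta}(\bx_i),\ \text{(something built from }\bx)\rangle$ and then bound the resulting neuron-wise expressions using $\abs{\sigma(a)-\sigma(b)}\leq\abs{a-b}$ to replace $\sigma(\bw_j\cdot\bx + b_j)$ by $\sigma(\bw_j\cdot\bx_i + b_j)$ up to an error $\abs{\bw_j\cdot(\bx-\bx_i)}$, which is then $\norm{\bw_j}$ times a small inner-product factor; summing over $j$ with Cauchy--Schwarz brings in $\norm{\btheta}$ again. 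Making this bookkeeping tight enough that the ``good'' diagonal term dominates is the crux, and is presumably where the specific power $1/\sqrt p$ in the hypothesis $\norm{\btheta}^2\leq \frac{c_4}{\sqrt p}\norm{\btheta^*}^2$ is calibrated.
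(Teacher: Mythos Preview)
Your outline has two genuine gaps that would prevent the argument from closing.

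First, the competitor bound is wrong. You claim $\norm{\btheta_0}^2=O(1)$ independent of $m$, but this is not achievable: you must satisfy $N_{\btheta_0}(\bx_i)\leq -1$ on $\abs{I_-}\approx pm$ points that are essentially orthogonal directions, and a construction based on $\sum_{i\in I_+}\bx_i$ gives no leverage on those. The paper's explicit feasible point uses $\sum_{i\in I_-}\bx_i$ and achieves $\norm{\btheta_0}^2=O(\sqrt{\abs{I_-}})=O(\sqrt{pm})$, which \emph{does} grow with $m$. This matters because the whole calibration of the $\frac{c_4}{\sqrt p}$ factor depends on comparing $\norm{\btheta}^2$ to this $\sqrt{pm}$ scale, not to a constant.

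Second, and more seriously, your ``diagonal vs.\ off-diagonal'' decomposition of $N_\btheta(\bx)$ via the $\lambda_i$'s has no clear mechanism. For a test point $\bx$ nearly orthogonal to all $\bx_i$, there is no ``diagonal'' contribution of the type you describe; the only quantity that survives when $\bx\cdot\bx_i\approx 0$ is the bias term $\sum_j v_j\sigma(b_j)$. The paper makes this the centerpiece: it proves directly that $\sum_j v_j\sigma(b_j)>\tfrac12$, after which a single Lipschitz/Cauchy--Schwarz step gives $N_\btheta(\bx)\geq \tfrac12 - O(m\sqrt{\log(m/\epsilon)/d})>0$. The crucial lemma you are missing is the contrapositive: if $\sum_j v_j\sigma(b_j)\leq \tfrac12$, then for every $i\in I_+$ one has $\tfrac12\leq \sum_j\abs{v_j}\abs{\bw_j^\top\bx_i}$, and summing over $i$ against the bound $\norm{\sum_i\bx_i\bx_i^\top}=O(1)$ forces $\norm{\btheta}^2\gtrsim \sqrt{\abs{I_+}}\approx\sqrt{m}$. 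Comparing this lower bound to the upper bound $\norm{\btheta}^2\leq \tfrac{c_4}{\sqrt p}\cdot O(\sqrt{pm})=O(\sqrt m)$ with the right constants yields the contradiction. Your sketch never isolates the bias as the dominant quantity and never brings in the covariance-norm bound $\norm{XX^\top}=O(1)$, which is what actually drives the argument.
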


The theorem above shows that under the specified assumptions, benign overfitting occurs for any noise level $p$ which is smaller than some universal constant. Note that this result is independent of the number of neurons $n$, and requires  $d=\tilde{\Omega}(m^2)$. The mild lower bound on $m$ is merely to ensure that the negative samples concentrate around a $\Theta(p)$ fraction of the entire dataset. Also note that the dependence on $\epsilon$ is only logarithmic, which means that in the setting we study, the clean test error decays exponentially fast with $d$.

We now turn to provide a short proof intuition, while the full proof can be found in \appref{appen:proof of benign max margin}. The main crux of the proof is showing that $\sum_{j=1}^nv_j\sigma(b_j) = \Omega(1)$, i.e. the bias terms are the dominant factor in the output of the network, and they tend towards being positive. In order to see why this suffices to show benign overfitting, first note that $N_{\btheta}(\bx) = \sum_{j=1}^nv_j\sigma(\bw_j^\top \bx + b_j)$ equals
\begin{align*}
    \sum_{j=1}^nv_j\sigma(\bw_j^\top \bx + b_j) + \sum_{j=1}^nv_j\sigma(b_j) - \sum_{j=1}^nv_j\sigma(b_j)
    \geq \sum_{j=1}^nv_j\sigma(b_j) - \sum_{j=1}^n \left|v_j\sigma(\bw_j^\top \bx)\right|~.
\end{align*}
All the samples (including the test sample $\bx$) are drawn independently from $\Unif(\S^{d-1})$, thus with high probability $|\bx^\top \bx_i| = o_d(1)$ for all $i\in[m]$. Using our assumption regarding convergence to the max-margin solution (up to some multiplicative constant), we show that the norms $\norm{\bw_j}$ are bounded by some term independent of $d$. Thus, we can bound $|\bx^\top \bw_j| = o_d(1)$ for every $j\in[n]$.
Additionally, by the same assumption we show that both $\sum_{j=1}^n\norm{\bw_j}$ and $\sum_{j=1}^n|v_j|$ are bounded by the value of the max-margin solution (up to a multiplicative factor), which we bound by $O(\sqrt{m})$. Plugging this into the displayed equation above, and using the assumption that $d$ is sufficiently larger than $m$, we get that the $\sum_{j=1}^{n}|v_j \sigma(\bw_j^\top \bx)|$ term is negligible, and hence the bias terms $\sum_{j=1}^{n}v_j \sigma(b_j)$ are the predominant factor in determining the output of $N_\btheta$.

Showing that $\sum_{j=1}^nv_j\sigma(b_j) = \Omega(1)$ is done in two phases. Recall the definitions of  $I_+:= \{i\in[m]: y_i = 1\}$, $I_- = \{i\in [m]: y_i = -1\}$, and that by our label distribution and assumption on $p$ we have $|I_+|\approx (1-p)m\gg pm\approx |I_-|$. We first show that if the bias terms are too small, and the predictor correctly classifies the training data, then its parameter vector $\btheta$ satisfies $\norm{\btheta}^2 = \Omega(|I_+|)$. Intuitively, this is because the data is nearly mutually orthogonal, so if the biases are small, the vectors $\bw_j$ must have a large enough correlation with each different point in $I_+$ to classify it correctly. On the other hand, we explicitly construct a solution that has large bias terms, which satisfies $\norm{\btheta}^2 = O(|I_-|)$. By setting $p$ small enough, we conclude that the biases cannot be too small.

\begin{remark}[Data distribution] \label{rem: sphere dist}
\thmref{thm:benign max margin} is phrased for data sampled uniformly from a unit sphere, but can be easily extended to other isotropic distributions such as standard Gaussian and uniform over the unit ball. In fact, the only properties of the distribution we use are that $|\bx_i^\top \bx_j| = o_d(1)$ for every $i\neq j\in[m]$, and that $\norm{XX^\top} = O(1)$ (independent of $m$) where $X$ is an $m\times d$ matrix whose rows are $\bx_i$.
\end{remark}

\subsection{Benign overfitting under KKT assumptions}

In \thmref{thm:benign max margin} we showed that benign overfitting occurs in high dimensional data, but under the strong assumption of convergence to the max-margin solution up to a multiplicative factor (whose value is allowed to be larger for smaller values of $p$). On the other hand, \thmref{thm:max margin kkt} only guarantees convergence to a KKT point of the max-margin problem (\ref{eq: margin opt}). We note that \citet{vardi2022margin} provided several examples of ReLU neural networks where there are in fact KKT points which are not a global or even local optimum of the max-margin problem.

Consequently, in this section we aim at showing a benign overfitting result by only assuming convergence to a KKT point of the max-margin problem. For such a result we use two additional assumptions, namely: (1) The output weights $v_j$ are all fixed to be $\pm 1$, while only $(\bw_j,b_j)_{j\in[n]}$ are trained;\footnote{This assumption appears in prior works such as \cite{cao2022benign,frei2023benign,xu2023benign}, and is primarily used to simplify the analysis.} and (2) Both the noise level $p$ and the input dimension $d$ depend on $n$, the number of neurons. Our main result is the following:

\begin{theorem}\label{thm: benign kkt}
    Let $\epsilon, \delta > 0$. Assume that $p \leq \frac{c_1}{n^2}$, $m\geq c_2n\dlog{\frac{1}{\delta}}$ and $d \geq c_3m^4n^4\dlog{\frac{m}{\epsilon}}\dlog{\frac{m^2}{\delta}}$ for some universal constants $c_1, c_2, c_3 >0$. 
    Assume that the output weights are fixed so that $|\{j:v_j=1\}|=|\{j:v_j=-1\}|$ while $(\bw_j,b_j)_{j\in[n]}$ are trained, 
    and that $N_{\btheta}$ converges to a KKT point of Problem (\ref{eq: margin opt}). Then, with probability at least $1-\delta$ over $S\sim\Dcal^m$ we have $L_{\cl}\left(N_{\btheta}\right) \leq \epsilon$.
\end{theorem}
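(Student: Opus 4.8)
The strategy mirrors that of \thmref{thm:benign max margin}: show that at the KKT point the bias terms dominate the output and are positive, namely $\sum_{j}v_j\sigma(b_j)\ge\tfrac12$, while the weight vectors are small enough that their contribution at a random test point is negligible.

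\emph{Step 1 (reduction).} Writing $N_\btheta(\bx)=\sum_j v_j\sigma(b_j)+\sum_j v_j\big(\sigma(\bw_j^\top\bx+b_j)-\sigma(b_j)\big)$ and using that $\sigma$ is $1$-Lipschitz gives $N_\btheta(\bx)\ge\sum_j v_j\sigma(b_j)-\sum_j|\bw_j^\top\bx|$. For a fresh $\bx\sim\Unif(\S^{d-1})$, which is independent of $\btheta$ (a function of $S$ only), concentration of linear functionals on the sphere yields $\Pr_\bx[|\bw_j^\top\bx|>t\norm{\bw_j}]\le 2e^{-(d-1)t^2/2}$; taking $t=\Theta(\sqrt{\log(n/\epsilon)/d})$ and a union bound over $j\in[n]$, with probability at least $1-\epsilon$ over $\bx$ we get $\sum_j|\bw_j^\top\bx|\le t\sqrt n\,(\sum_j\norm{\bw_j}^2)^{1/2}$. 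Hence it suffices to establish, on a high-probability event over $S$, the two deterministic facts: (i) $\sum_j v_j\sigma(b_j)\ge\tfrac12$, and (ii) $\sum_j\norm{\bw_j}^2\le\poly(m,n)$; given the assumed lower bound on $d$, these force $\sum_j|\bw_j^\top\bx|<\tfrac12$ and thus $N_\btheta(\bx)>0$.

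\emph{Step 2 (KKT structure and data concentration).} Since the output weights are fixed, $N_\btheta$ is $1$-homogeneous in the trainable parameters $(\bw_j,b_j)_j$, so the stationarity condition (\ref{eq:stationary}), restricted to the trainable coordinates, reads $\bw_j=v_j\sum_i\lambda_i y_i\sigma'_{ij}\bx_i$ and $b_j=v_j\sum_i\lambda_i y_i\sigma'_{ij}$, where $\sigma'_{ij}:=\sigma'(\bw_j^\top\bx_i+b_j)$; and Euler's identity with complementary slackness (\ref{eq:comp slack}) gives $\sum_j(\norm{\bw_j}^2+b_j^2)=\sum_i\lambda_i=:\Lambda$. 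Conditioning on the event that the training inputs are nearly orthonormal, $\max_{i\neq i'}|\bx_i^\top\bx_{i'}|\le\rho:=C\sqrt{\log(m^2/\delta)/d}$ and $\norm{X}\le\sqrt2$ (valid since $d\gg m^2$), and that $|I_-|\le C'(pm+\log(1/\delta))$, the assumptions $p\le c_1/n^2$ and $m\ge c_2 n\log(1/\delta)$ ensure $|I_-|\le m/(C'' n)$ and hence $|I_+|\ge m/2$. Substituting the stationarity formulas into $N_\btheta(\bx_i)$ and using near-orthonormality, one obtains for every $i$ with $\lambda_i>0$ the identity
\[
\lambda_i k_i \;=\; 1\;-\;y_i\sum_{j\in A_i}v_j b_j\;-\;y_i\sum_{j\in A_i}v_j\xi_{ij}\,,\qquad \Big|\textstyle\sum_j\xi_{ij}\Big|\le\rho\,n\,\Lambda\,,
\]
where $A_i=\{j:\sigma'_{ij}=1\}$, $k_i=|A_i|$, and $\xi_{ij}$ collects the cross-terms $\bx_{i'}^\top\bx_i$, $i'\neq i$.

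\emph{Step 3 (the crux: the bias terms dominate).} One direction is short. If $\sum_j v_j\sigma(b_j)<\tfrac12$, then applying the $1$-Lipschitz decomposition at each \emph{positive} training point gives $\sum_j|\bw_j^\top\bx_i|\ge y_iN_\btheta(\bx_i)-\sum_j v_j\sigma(b_j)>\tfrac12$ for all $i\in I_+$; summing over $I_+$ and using Cauchy--Schwarz with $\norm{X}=O(1)$ yields $\sum_j\norm{\bw_j}=\Omega(\sqrt{|I_+|})=\Omega(\sqrt m)$, hence $\Lambda\ge\sum_j\norm{\bw_j}^2\ge\tfrac1n(\sum_j\norm{\bw_j})^2=\Omega(m/n)$. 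The main obstacle is then to prove a matching \emph{unconditional} upper bound $\Lambda=\sum_i\lambda_i=o(m/n)$ for \emph{any} KKT point under the stated assumptions: this contradicts the previous display (forcing $\sum_j v_j\sigma(b_j)\ge\tfrac12$) and also supplies fact (ii) of Step~1. Unlike in \thmref{thm:benign max margin}, we cannot compare $\norm{\btheta}^2$ to $\norm{\btheta^*}^2$; instead the plan is to bound $\Lambda$ directly from the identity above. Here the balanced sign pattern $|\{j:v_j=1\}|=|\{j:v_j=-1\}|$ is used to control the quantities $y_i\sum_{j\in A_i}v_j b_j$ — showing that for most positive samples they are close enough to $1$ that the corresponding $\lambda_i$ is small — up to an error governed by the total dual mass $\sum_{i\in I_-}\lambda_i$ that the few noisy samples can carry, which $p\le c_1/n^2$ keeps subdominant; near-orthonormality of the inputs is what makes this analysis essentially decouple across samples, and the large-$d$ assumption is what makes the $\xi_{ij}$ errors negligible. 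Carrying out this bookkeeping — tracking the cross-terms, the interaction between the positive and negative interpolation constraints, and the need for a self-consistent (bootstrapping) bound on $\Lambda$ — is the technically delicate part of the proof; once it gives $\Lambda\le m/(32n)$, combining with Steps~1--2 and a union bound over the good events for $S$ and $\bx$ yields $L_{\cl}(N_\btheta)\le\epsilon$.
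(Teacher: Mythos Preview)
Your Steps~1--2 are essentially the paper's setup. The gap is in Step~3: the paper does \emph{not} prove an unconditional bound of the form $\Lambda=\sum_i\lambda_i\le m/(32n)$ for every KKT point, and your sketch for obtaining it is not convincing. To show each $\lambda_i$ is small via the identity $\lambda_i k_i=1-y_i\sum_{j\in A_i}v_jb_j-O(\rho n\Lambda)$ you need $\sum_{j\in A_i}v_jb_j\approx 1$, which is precisely the ``biases dominate'' statement you are trying to establish---so the argument as written is circular, and the ``bootstrapping'' you allude to is not spelled out. In fact, after the paper establishes that the bias sum exceeds $1/4$, it only proves $\max_i\lambda_i\le 3mn$ (so $\Lambda\le 3m^2n$), which is far from $m/(32n)$; the student route would need a much stronger control of $\Lambda$ than the paper ever obtains.

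The paper's contradiction is structured differently. It first proves the structural fact that \emph{every} $b_j\ge 0$ (this uses only near-orthogonality and stationarity), so $\sum_j v_j\sigma(b_j)=\sum_{j\le n/2}b_j-\sum_{j>n/2}b_j$. Assuming this bias sum is at most $1/4$, it shows (via the margin equations at support points and a case analysis on $\arg\max_i\lambda_i$) two \emph{per-sample} bounds: $\lambda_i\ge\tfrac{1}{4n}$ for every $i\in I_+$, and $\lambda_i\le\tfrac{5}{2}$ for every $i\in I_-$. These are then plugged \emph{directly} into the stationarity formula $b_j=v_j\sum_i\lambda_i y_i\sigma'_{ij}$ (together with the fact that every $i\in I_+$ activates some positive neuron) to compute $\sum_{j\le n/2}b_j-\sum_{j>n/2}b_j\ge \tfrac{|I_+|}{4n}-\tfrac{5n|I_-|}{2}>\tfrac14$ under $p\le c_1/n^2$---a contradiction. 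So the implication runs ``bias sum small $\Rightarrow$ bias sum large'' via the $b_j$ formula, not via an unconditional $\Lambda$ bound. For fact~(ii), once the bias sum exceeds $1/4$ the paper bounds $\max_i\lambda_i\le 3mn$ and uses the explicit representation $\bw_j=v_j\sum_i\lambda_i y_i\sigma'_{ij}\bx_i$ to get $\sum_j|\bw_j^\top\bx|\le n\cdot m\cdot \max_i\lambda_i\cdot\max_i|\bx_i^\top\bx|$, which the assumption on $d$ makes $<\tfrac18$; this bypasses any need to control $\sum_j\|\bw_j\|^2$ or $\Lambda$.
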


Note that contrary to \thmref{thm:benign max margin} and to the results from \secref{sec:tempered}, here there is a dependence on $n$, the number of neurons. 
For $n=O(1)$ we get benign overfitting for any $p$ smaller than a universal constant. 
This dependence is due to a technical limitation of the proof (which we will soon explain), and it would be interesting to remove it in future work.  
The full proof can be found in \appref{appen:proof of benign kkt}, and here we provide a short proof intuition. 

The proof strategy is similar to that of \thmref{thm:benign max margin}, by showing that $\sum_{j=1}^nv_j\sigma(b_j) = \Omega(1)$, i.e. the bias terms are dominant and tend towards being positive. The reason that this suffices is proven using similar arguments to that of \thmref{thm:benign max margin}. The main difficulty of the proof is showing that  $\sum_{j=1}^nv_j\sigma(b_j) = \Omega(1)$. We can write each bias term as  $b_j = v_j\sum_{i\in [m]}\lambda_iy_i \sigma'_{i,j}$, where $\sigma'_{i,j} = \mathbbm{1}(\bw_j^\top \bx_i + b_j > 1)$ is the derivative of the $j$-th ReLU neuron at the point $\bx_i$. We first show that all the bias terms $b_j$ are positive (\lemref{lem:b_j positive}) and that for each $i\in I_+$, there is $j\in \{1,\dots,n/2\}$ with $\sigma'_{i,j}=1$. This means that it suffices to show that $\lambda_i$ is not too small for all $i\in I_+$, while for every $i\in I_-$, $\lambda_i$ is bounded from above.

We assume towards contradiction that the sum of the biases is small, and show it implies that $\lambda_i = \Omega\left(\frac{1}{n}\right)$ for $i\in I_+$ and $\lambda_i = O(1)$ for $i\in I_-$. Using the stationarity condition we can write for $j\in\{1,\dots,n/2\}$ that $\bw_j = \sum_{i=1}^m \lambda_i y_i \sigma'_{i,j}\bx_i$. Taking some $r\in I_+$, we have that $\bw_j^\top \bx_r\approx \lambda_r\sigma'_{r,j}$ since $\bx_r$ is almost orthogonal to all other samples in the dataset. By the primal feasibility condition (\eqref{eq:prim feas}) $N_{\btheta}(\bx_r) \geq 1$, hence there must be some $j\in\{1,\dots,n/2\}$ with $\sigma'_{r,j}=1$ and with $\lambda_r$ that is larger than some constant. The other option is that the bias terms are large, which we assumed does not happen. Note that we don't know how many neurons $\bw_j$ there are with $\sigma'_{j,r}=1$, which means that we can only lower bound $\lambda_r$ by a term that depends on $n$. 

To show that $\lambda_s$ are not too big for $s\in I_-$, we use the complementary slackness condition (\eqref{eq:comp slack}) which states that if $\lambda_s\neq 0$ then $N_{\btheta}(\bx_s) = -1$. Since the sum of the biases is not large, then if there exists some $s\in I_-$ with $\lambda_s$ that is too large we would get that $N_{\btheta}(\bx_s) < -1$ which contradicts the complementary slackness. Combining those bounds and picking $p$ to be small enough shows that $\sum_{j=1}^n v_j\sigma(b_j)$ cannot be too small.

\subsection{The role of the bias and catastrophic overfitting in neural networks} \label{sec: bias}

In both our benign overfitting results (\thmref{thm:benign max margin} and \thmref{thm: benign kkt}), our main proof strategy was showing that if $p$ is small enough, then the bias terms tend to be positive and dominate over the other components of the network.
In this section we further study the importance of the bias terms for obtaining benign overfitting, by examining bias-less ReLU networks of the form $N_{\btheta}(\bx) = \sum_{j=1}^{n}v_j\sigma(\bw_j^\top \bx)$ where $\btheta=(v_j,\bw_j)_{j\in[n]}$. We note that if the input dimension is sufficiently large and $n\geq 2$, such bias-less networks can still fit any training data. The proofs for this section can be found in \appref{appen:proofs from catastrophic}.

We begin by showing that without a bias term and with no further assumption on the network width, any solution can exhibit catastrophic behaviour:

\begin{proposition}[Catastrophic overfitting without bias] \label{prop: catastrophic}
Consider a bias-less network with $n=2$ which classifies a dataset correctly.
If the dataset contains at least one sample with a negative label, then $L_{\cl}\left(N_{\btheta}\right)\geq \frac{1}{2}$.
\end{proposition}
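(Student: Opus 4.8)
The plan is to exploit the very restricted geometry of a width-2 bias-less ReLU network: such a network has the form $N_{\btheta}(\bx)=v_1\sigma(\bw_1^\top\bx)+v_2\sigma(\bw_2^\top\bx)$, which is positively homogeneous of degree $1$ in $\bx$ and is supported (as a nonzero function) only on the union of two halfspaces $\{\bw_1^\top\bx>0\}$ and $\{\bw_2^\top\bx>0\}$. First I would observe that there must be at least one negatively-labeled training point $\bx_{i_0}$ with $N_{\btheta}(\bx_{i_0})<0$; in particular $N_{\btheta}(\bx_{i_0})\neq 0$, so $\bx_{i_0}$ lies in at least one of the two halfspaces and the corresponding neuron is active with a strictly negative contribution. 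The key structural claim is then a case analysis on the signs of $v_1,v_2$.

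\textbf{Case analysis on the output weights.} If both $v_1\le 0$ and $v_2\le 0$, then $N_{\btheta}(\bx)=v_1\sigma(\bw_1^\top\bx)+v_2\sigma(\bw_2^\top\bx)\le 0$ for \emph{every} $\bx\in\reals^d$, since each $\sigma(\cdot)\ge 0$. In that case $L_{\cl}(N_{\btheta})=\Pr_{\bx}[N_{\btheta}(\bx)\le 0]=1\ge\tfrac12$ trivially, and we are done. If both $v_1\ge 0$ and $v_2\ge 0$, then symmetrically $N_{\btheta}(\bx)\ge 0$ everywhere, which forces $N_{\btheta}(\bx_{i_0})\ge 0$, contradicting $y_{i_0}N_{\btheta}(\bx_{i_0})>0$ with $y_{i_0}=-1$; so this sub-case cannot occur. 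The remaining case is the interesting one: exactly one of $v_1,v_2$ is positive and the other is (strictly) negative; say WLOG $v_1>0>v_2$ (the reverse case being symmetric).

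\textbf{The mixed-sign case via antipodal symmetry.} Here I would use the fact that $\Dcal_{\bx}=\Unif(\S^{d-1})$ is symmetric under $\bx\mapsto-\bx$, so $L_{\cl}(N_{\btheta})=\Pr_{\bx}[N_{\btheta}(\bx)\le 0]=\tfrac12\big(\Pr_{\bx}[N_{\btheta}(\bx)\le 0]+\Pr_{\bx}[N_{\btheta}(-\bx)\le 0]\big)$, and it suffices to show that for (almost) every $\bx$, at least one of $N_{\btheta}(\bx)\le 0$ or $N_{\btheta}(-\bx)\le 0$ holds, i.e. $\min\{N_{\btheta}(\bx),N_{\btheta}(-\bx)\}\le 0$ almost surely. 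Writing $a:=\bw_1^\top\bx$ and $b:=\bw_2^\top\bx$, we have $N_{\btheta}(\bx)=v_1\sigma(a)+v_2\sigma(b)$ and $N_{\btheta}(-\bx)=v_1\sigma(-a)+v_2\sigma(-b)=v_1\relu{-a}+v_2\relu{-b}$. I would split according to the sign of $a$: if $a\le 0$ then $N_{\btheta}(\bx)=v_1\cdot 0+v_2\sigma(b)\le 0$ since $v_2<0$; if $a>0$ then $N_{\btheta}(-\bx)=v_1\cdot 0+v_2\relu{-b}\le 0$ since $v_2<0$. Either way $\min\{N_{\btheta}(\bx),N_{\btheta}(-\bx)\}\le 0$, so the two events cover the whole sphere and $L_{\cl}(N_{\btheta})\ge\tfrac12$. (The same dichotomy on the sign of $b$ handles the symmetric sub-case $v_2>0>v_1$.) Combining the three cases completes the proof.

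The step I expect to require the most care is making sure the case analysis is genuinely exhaustive and that the "trivial" degenerate sub-cases (e.g. $v_1$ or $v_2$ equal to zero, or $\bw_j=\zero$) are correctly absorbed — these all fall into the "both $v_j\le 0$" or "both $v_j\ge 0$" branches and are harmless, but it is worth stating the convention that $\ge/\le$ are used (not strict) so no $\bx$ falls through the cracks. I would also double-check that the antipodal-symmetry argument only needs $\Dcal_{\bx}$ to be symmetric about the origin (true for $\Unif(\S^{d-1})$, and also for the Gaussian and uniform-on-ball distributions mentioned in Remark~\ref{rem: sphere dist}), so no delicate measure-theoretic argument about the boundary hyperplanes $\{\bw_j^\top\bx=0\}$ is needed — those have measure zero and the inequalities are non-strict regardless.
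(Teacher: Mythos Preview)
Your proof is correct and rests on the same core observation as the paper's: once some $v_j<0$ (forced by the existence of a negatively-labeled point), the event that the \emph{other} neuron is inactive already gives $N_{\btheta}(\bx)\le 0$. The paper's proof is slightly more streamlined than yours: rather than a full case split on the signs of $v_1,v_2$ and an antipodal-symmetry argument, it simply notes that (say) $v_2<0$, that the halfspace $\{\bw_1^\top\bx\le 0\}$ has probability exactly $\tfrac12$ under $\Unif(\S^{d-1})$, and that on this halfspace $N_{\btheta}(\bx)=v_2\sigma(\bw_2^\top\bx)\le 0$ --- done. Your antipodal-pairing step is really just a repackaging of this same halfspace observation (the ``$a\le 0$'' branch is exactly the paper's argument), so the extra machinery is harmless but unnecessary; on the other hand, your version makes the role of the symmetry of $\Dcal_{\bx}$ more explicit, which is a minor expository plus.
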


While the result above is restricted to any network of width two (in particular, it holds under any assumption such as convergence to a KKT point), this already precludes the possibility of achieving either benign or tempered overfitting without further assumptions on the width of the network -- in contrast to our previous results. 
Furthermore, we next show that in the bias-less case, the clean test error is lower bounded by some term which depends only on $n$ (which again, holds for any network so in particular under further assumptions such as convergence to a KKT point).

\begin{proposition}[Not benign without bias] \label{prop: not benign}
For any bias-less network of width $n$, $L_{\cl}\left(N_{\btheta}\right)\geq \frac{1}{2^n}$.
\end{proposition}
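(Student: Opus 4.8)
Recall the standing assumption that $N_{\btheta}$ interpolates the (noisy) training set; as in \propref{prop: catastrophic}, the content of the statement is when that set contains a negatively labeled example, so that $N_{\btheta}(\bx_i)<0$ for some $i$, and by positive $1$-homogeneity of $N_{\btheta}$ we may rescale to obtain $\bx_0\in\S^{d-1}$ with $N_{\btheta}(\bx_0)<0$. I would then prove, by induction on $n$, the statement: \emph{for any bias-less width-$n$ network $N_{\btheta}$ with $N_{\btheta}(\bx_0)<0$ for some $\bx_0\in\S^{d-1}$, one has $\Pr_{\bx\sim\Unif(\S^{d-1})}[N_{\btheta}(\bx)\le0]\ge 2^{-n}$}. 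For $n=1$ this is immediate: $N_{\btheta}(\bx_0)<0$ forces $v_1<0$, hence $N_{\btheta}(\bx)=v_1\sigma(\bw_1^{\top}\bx)\le0$ for all $\bx$ and the probability equals $1$.

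The induction rests on two elementary observations. \emph{(i) Deleting a negative-weight neuron can only shrink $\{N_{\btheta}\le0\}$:} if $v_j<0$ then $N_{\btheta}(\bx)=N_{\btheta\setminus j}(\bx)+v_j\sigma(\bw_j^{\top}\bx)\le N_{\btheta\setminus j}(\bx)$ (where $\btheta\setminus j$ denotes the width-$(n-1)$ network with neuron $j$ removed), so $\{N_{\btheta}\le0\}\supseteq\{N_{\btheta\setminus j}\le0\}$. \emph{(ii) Turning off the positive neurons:} on the cone $K:=\bigcap_{j:\,v_j>0}\{\bx:\bw_j^{\top}\bx\le0\}$ every positive-weight ReLU vanishes, so $N_{\btheta}\le0$ on all of $K$. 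For the inductive step, pick $j^{*}$ with $v_{j^{*}}<0$ and $\bw_{j^{*}}^{\top}\bx_0>0$ (this neuron must exist since $N_{\btheta}(\bx_0)<0$). If \emph{some} negative-weight neuron can be deleted so that the resulting width-$(n-1)$ network is still negative at some point of $\S^{d-1}$, then (i) together with the inductive hypothesis give $\Pr[N_{\btheta}\le0]\ge 2^{-(n-1)}>2^{-n}$, and we are done.

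The remaining case --- and the main obstacle --- is the \emph{fragile} one, where deleting any single negative-weight neuron makes the network nonnegative everywhere. Evaluating the fragility condition at $\bx_0$ forces all negative-weight neurons to be active at $\bx_0$ and makes the negative part dominant (e.g.\ $P(\bx)\ge|v_j|\,\sigma(\bw_j^{\top}\bx)$ pointwise for every negative $j$ whenever there are at least two of them, writing $N_{\btheta}=P-Q$ for the positive- and negated negative-weight parts). Here neither reduction suffices on its own --- no subnetwork is negative anywhere, and $K$ may be Lebesgue-null --- so I would combine (ii) with these structural constraints to directly exhibit a set of $\Unif(\S^{d-1})$-measure $\ge 2^{-n}$ on which $N_{\btheta}\le0$, arguing over the sign pattern $(\sign(\bw_1^{\top}\bx),\dots,\sign(\bw_n^{\top}\bx))$ as in the $n=2$ analysis behind \propref{prop: catastrophic}: if $\Pr[K]\ge2^{-n}$ we are finished, and otherwise one exploits that the negative neurons are dominant to capture the needed mass inside a halfspace determined by those neurons. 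This fragile case is where essentially all the work lies; a tempting shortcut --- averaging over independent sign flips $\bw_j\mapsto\epsilon_j\bw_j$, which gives $\E_{\bepsilon}\Pr_{\bx}\big[\sum_j v_j\sigma(\epsilon_j\bw_j^{\top}\bx)\le0\big]\ge 2^{-|\{j:v_j>0\}|}\ge2^{-n}$ since for each fixed $\bx$ taking $\epsilon_j\ne\sign(\bw_j^{\top}\bx)$ for all positive $j$ kills the positive part --- runs into the converse difficulty of transferring this averaged bound to the single sign vector $\bepsilon=(1,\dots,1)$ that corresponds to the given network.
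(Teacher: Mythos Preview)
The paper's proof is nothing more than your observation (ii): split $[n]=J_+\cup J_-$ by the sign of $v_j$, note that on $K=\bigcap_{j\in J_+}\{\bw_j^{\top}\bx<0\}$ every positive-weight neuron vanishes so $N_{\btheta}\le0$ there, and assert $\Pr[K]\ge 2^{-|J_+|}\ge 2^{-n}$. There is no induction and no case analysis. Your worry that $K$ ``may be Lebesgue-null'' is therefore an objection to the paper's own argument --- the displayed inequality fails whenever two positive-weight directions are antipodal --- and the paper does not invoke interpolation or any other structural property to rule this out.

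In fact the statement you set out to prove by induction is false as formulated. Take $n=3$, $v_1=v_2=M>0$, $v_3=-1$, $\bw_1=-\bw_2=e_1$, $\bw_3=e_2$; then $N_{\btheta}(\bx)=M|x_1|-\sigma(x_2)$ is negative at $e_2\in\S^{d-1}$, yet $\Pr[N_{\btheta}\le0]=\Pr\bigl[M|x_1|\le\sigma(x_2)\bigr]\to0$ as $M\to\infty$, well below $2^{-3}$. This is precisely your ``fragile'' configuration (deleting the unique negative neuron leaves the nonnegative network $M|x_1|$), and it shows that case cannot be closed without an additional hypothesis --- a norm bound, the KKT conditions, or interpolation of an actual random sample rather than mere existence of a negative point --- which neither your plan nor the paper's one-line proof supplies. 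So the gap in your proposal is real, but it is shared with the paper.
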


Note that the result above does not depend on $m$ nor on $d$, thus we cannot hope to prove benign overfitting for the bias-less case unless $n$ is large, or depends somehow on both $m$ and $d$. We emphasize that \propref{prop: not benign} does no subsume \propref{prop: catastrophic} although they appear to be similar. Indeed, for the case of $n=2$, \propref{prop: not benign} provides a bound of $\frac{1}{4}$ which does not fall under the definition of catastrophic overfitting, as opposed to \propref{prop: catastrophic}.
Next, we show that even for arbitrarily large $n$ (possibly scaling with other problem parameters such as $m,d$),
there are KKT points which do not exhibit benign overfitting:
\begin{proposition}[Benign overfitting does not follow from KKT without bias] \label{prop: almost catastrophic}
Consider a bias-less network, and suppose that $\bx_j^\top \bx_i = 0$ for every $i\neq j$.
Denote $I_-:=|\{i\in[m]:y_i=-1\}|$. 
Then there exists a KKT point $\btheta$ of Problem (\ref{eq: margin opt}) with $L_{\cl}\left(N_{\btheta}\right)\geq \frac{1}{2} - \frac{1}{2^{|I_-|}}$.
\end{proposition}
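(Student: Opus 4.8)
The plan is to explicitly construct a KKT point of Problem~(\ref{eq: margin opt}) whose network is identically zero on a large region of the sphere, hence misclassifies at least half of that region. Since the samples are mutually orthogonal, write $\bx_i$ for $i\in[m]$ and recall $I_- = \{i : y_i = -1\}$ (here I abuse notation and also write $I_-$ for its cardinality, as in the statement). The key idea is that for the negatively-labeled points, the network only needs to output a negative value, which can be achieved using ReLU neurons whose weight vectors are aligned with the $\bx_i$, $i\in I_-$; for the positively-labeled points we can use a single neuron (or a pair, to respect sign constraints) aligned with $\sum_{i\in I_+}\bx_i$. Concretely, I would take two groups of neurons: one ``positive'' neuron with weight $\bw_+ \propto \sum_{i\in I_+}\bx_i$ and output weight $v_+>0$, scaled so that $y_i N_\btheta(\bx_i) = 1$ for all $i\in I_+$ (using orthogonality, $N_\btheta(\bx_i) = v_+\sigma(\bw_+^\top\bx_i)$ for such $i$, and $\bw_+^\top\bx_i = \|\bx_i\|^2/\|\sum_{k\in I_+}\bx_k\|$ is the same for all $i\in I_+$); and for each $i\in I_-$ a neuron with weight $\bw_i \propto \bx_i$ and negative output weight, scaled so that $N_\btheta(\bx_i) = -1$. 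Because of orthogonality, the $I_-$-neurons vanish on all $\bx_k$ with $k\neq i$, and the $I_+$-neuron vanishes on every $\bx_i$ with $i\in I_-$ (its argument is $\le 0$ there), so each constraint is met with equality and the construction is consistent.

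Next I would verify this $\btheta$ is a KKT point by exhibiting multipliers $\lambda_i\ge 0$ satisfying stationarity~(\ref{eq:stationary}); primal feasibility~(\ref{eq:prim feas}) and complementary slackness~(\ref{eq:comp slack}) hold by construction since every constraint is tight. The subgradient $\nabla_\btheta N_\btheta(\bx_i)$ decomposes over neurons, and because only the neuron ``assigned'' to $\bx_i$ (or the shared positive neuron for $i\in I_+$) is active at $\bx_i$, the stationarity equations decouple into small linear systems — one per neuron — that one solves for the $\lambda_i$. A standard way to guarantee this works for a minimum-norm-type construction is to balance the layers, i.e. choose the scaling so that for each neuron $|v_j|\|\bw_j\| $ matches appropriately; this is exactly the kind of balanced, aligned construction used in prior KKT-point examples (cf.\ \citealp{vardi2022margin}), and I expect the multipliers to come out as explicit positive scalars depending on $\|\bx_i\|$ and $|I_+|$.

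Finally, the clean test error bound: consider a test point $\bx\sim\Unif(\S^{d-1})$. The network output is $N_\btheta(\bx) = v_+\sigma(\bw_+^\top\bx) + \sum_{i\in I_-} v_i \sigma(\bw_i^\top \bx)$ with $v_i<0$. On the event that $\bw_+^\top\bx \le 0$ — which has probability exactly $\tfrac12$ by symmetry of the sphere — the first term vanishes, and then $N_\btheta(\bx) = \sum_{i\in I_-}v_i\sigma(\bw_i^\top\bx) \le 0$, with equality (giving $N_\btheta(\bx)=0\le 0$, a misclassification under our convention $N_\btheta(\bx)\le0$) precisely when additionally $\bw_i^\top\bx \le 0$ for all $i\in I_-$. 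Since the $\bw_i$ ($i\in I_-$) are orthogonal, and independent of $\bw_+$ after conditioning appropriately, the event ``$\bw_+^\top\bx\le0$ and $\bw_i^\top\bx\le0~\forall i\in I_-$'' involves $|I_-|+1$ (nearly) orthogonal half-space constraints; by rotational symmetry its probability is $\ge 2^{-(|I_-|+1)}$, but we can do better: conditioned on $\bw_+^\top \bx \le 0$ the output is $\le 0$ regardless of the $\bw_i$ signs (all terms are $\le 0$), so in fact $L_\cl \ge \Pr[\bw_+^\top\bx\le 0] - \Pr[\text{some strict positivity saves it}]$. Cleanest: note $N_\btheta(\bx)\le 0$ whenever $\bw_+^\top\bx\le0$, so $L_\cl \ge \tfrac12 \ge \tfrac12 - 2^{-|I_-|}$ — wait, this would give a stronger bound, so the stated $\tfrac12 - 2^{-|I_-|}$ suggests the intended construction keeps the positive neuron slightly ``leaking'' or uses the $I_-$ neurons differently; accordingly I would instead arrange the construction so that misclassification is certain except on the event that \emph{all} $|I_-|$ of the negative-neuron half-spaces are strictly avoided, yielding exactly $L_\cl \ge \tfrac12 - 2^{-|I_-|}$ by a union bound over those $|I_-|$ symmetric events. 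The main obstacle is this last bookkeeping step — designing the positive-part neuron(s) so that the ``bad'' (correctly-classified) region is contained in an intersection of exactly $|I_-|$ symmetric halfspaces, rather than being trivially large — together with checking the sign constraints on output weights if one insists on the $v_j=\pm1$ convention; the orthogonality hypothesis is what makes all the inner products decouple and keeps this tractable.
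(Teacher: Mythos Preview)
Your construction is exactly the paper's: one positive neuron with weight proportional to $\sum_{i\in I_+}\bx_i$ and one negative neuron per $i\in I_-$ with weight proportional to $\bx_i$. The paper takes $\bw_i=\bx_i,\ v_i=-1$ for $i\in I_-$ and $\bw_{k+1}=\sum_{i\in I_+}\bx_i,\ v_{k+1}=1$, then claims all $\lambda_i=1$; your remark about needing to balance layers is in fact the right instinct here, since stationarity for $v_{k+1}$ actually forces a rescaling (one finds $\lambda_i=|I_+|^{-1/2}$ for $i\in I_+$ after setting $|v_{k+1}|=\|\bw_{k+1}\|$), but this does not affect the test-error argument, which depends only on the directions of the $\bw_j$.

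Where you go astray is in second-guessing your own correct observation. With this construction, whenever $\bw_+^\top\bx\le 0$ the positive neuron is inactive and $N_\btheta(\bx)=\sum_{i\in I_-}v_i\sigma(\bw_i^\top\bx)\le 0$; since $L_{\cl}$ is defined with the non-strict inequality $N_\btheta(\bx)\le 0$, this already gives $L_{\cl}\ge \Pr[\bw_+^\top\bx\le 0]=\tfrac12\ge \tfrac12-2^{-|I_-|}$. There is no need to redesign the construction. The paper obtains the stated bound from the same network via the looser chain
\[
\Pr[N_\btheta(\bx)\le 0]\ \ge\ \Pr[\bw_{k+1}^\top\bx<0]-\Pr[\forall i\in I_-,\ \bw_i^\top\bx<0]\ =\ \tfrac12-2^{-|I_-|},
\]
which is valid but not sharp; your direct argument is cleaner and suffices.
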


The result above shows that even if we're willing to consider arbitrarily large networks, there still exist KKT points which do not exhibit benign overfitting (at least without further assumptions on the data). This implies that a positive benign overfitting result in the bias-less case, if at all possible, cannot possibly apply to all KKT points -- so one must make further assumptions on which KKT points to consider or resort to an alternative analysis. The assumption that $\bx_j^\top \bx_i = 0$ for every $i\neq j$ is made to simplify the construction of the KKT point, while we hypothesize that the proposition can be extended to the case where $\bx_i\sim\Unif(\S^{d-1})$ (or other isotropic distributions) for large enough $d$.

\section{Between tempered and benign overfitting for intermediate dimensions} \label{sec: exp}
In \secref{sec:tempered}, we showed that tempered overfitting occurs for one-dimensional data, while in \secref{sec: benign} we showed that benign overfitting occurs for high-dimensional data . In this section we complement these results by empirically studying intermediate dimensions and their relation to the number of samples. Our experiments extend an experiment from \cite[Figure 6b]{mallinar2022benign} which considered the same distribution as we do, for $d=10$ and $m$ varying from $300$ to $3.6\times 10^{5}$. Here we study larger values of $d$, and how the ratio between $d$ and $m$ affects the overfitting profile. We also focus on $2$-layer networks (although we do show our results may extend to depth $3$).

In our experiments, we trained a fully connected neural network with $2$ or $3$ layers, width $n$ (which will be specified later) and ReLU activations. We sampled a dataset $(\bx_i,y_i)_{i=1}^{m}\sim \Unif(\S^{d-1})\times\Dcal_y$ for noise level $p\in \{0.05, 0.1,\dots, 0.5\}$. We trained the network using SGD with a constant learning rate of $0.1$, and with the logistic (cross-entropy) loss. Each experiment ran for a total of $20k$ epochs, and was repeated $10$ times with different random seeds, the plots are averaged over the runs. In all of our experiments, the trained network overfitted in the sense that it correctly classified all of the (inherently noisy) training data. {Furthermore, we remark that longer training times ($100k$ epochs) were also examined, and the results remained unaffected, thus we omit such plots.}

\begin{figure}[t]
\vspace{-10pt}
    \centering
    \includegraphics[width=0.45\linewidth]{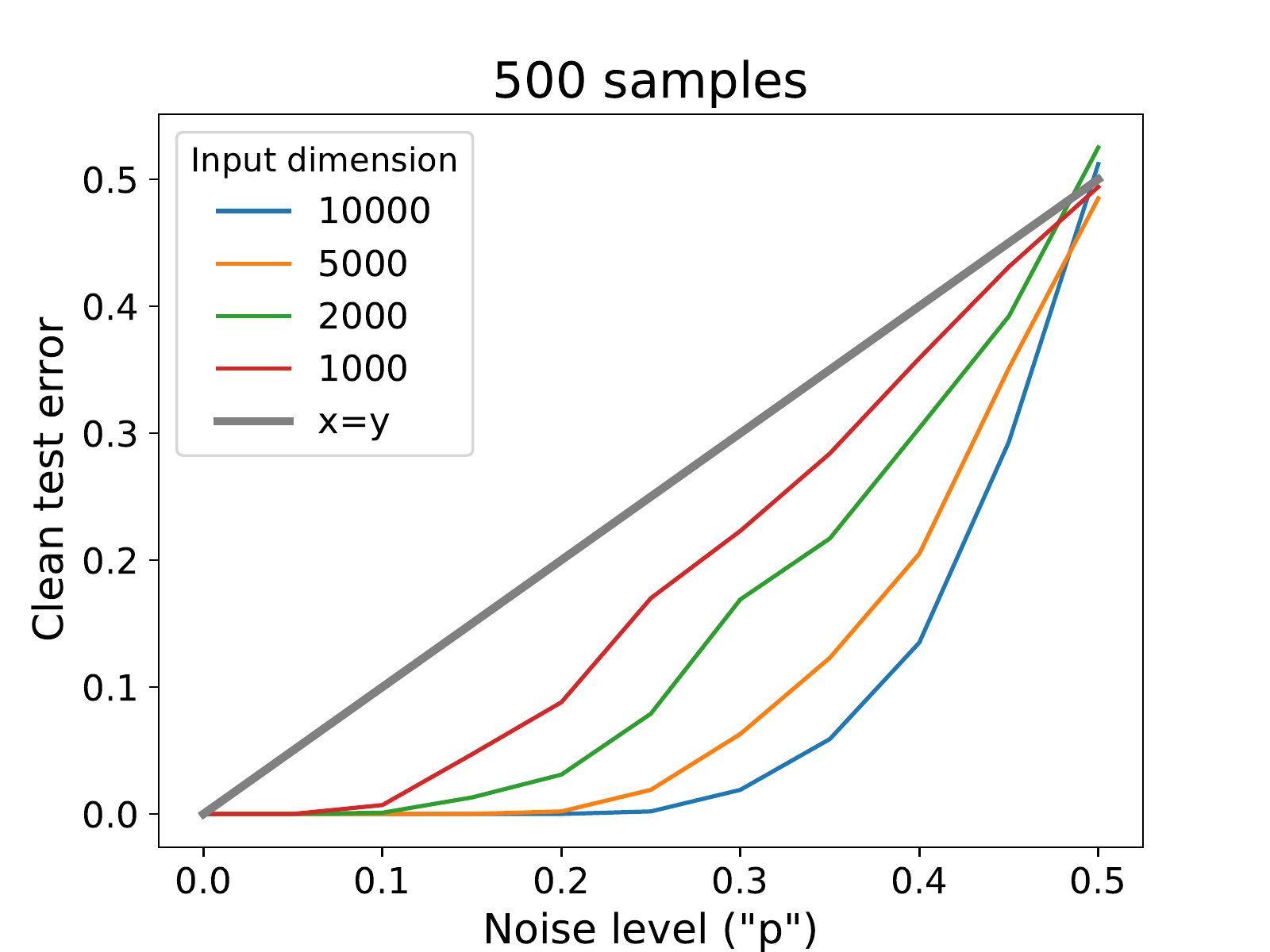}
    \includegraphics[width=0.45\linewidth]{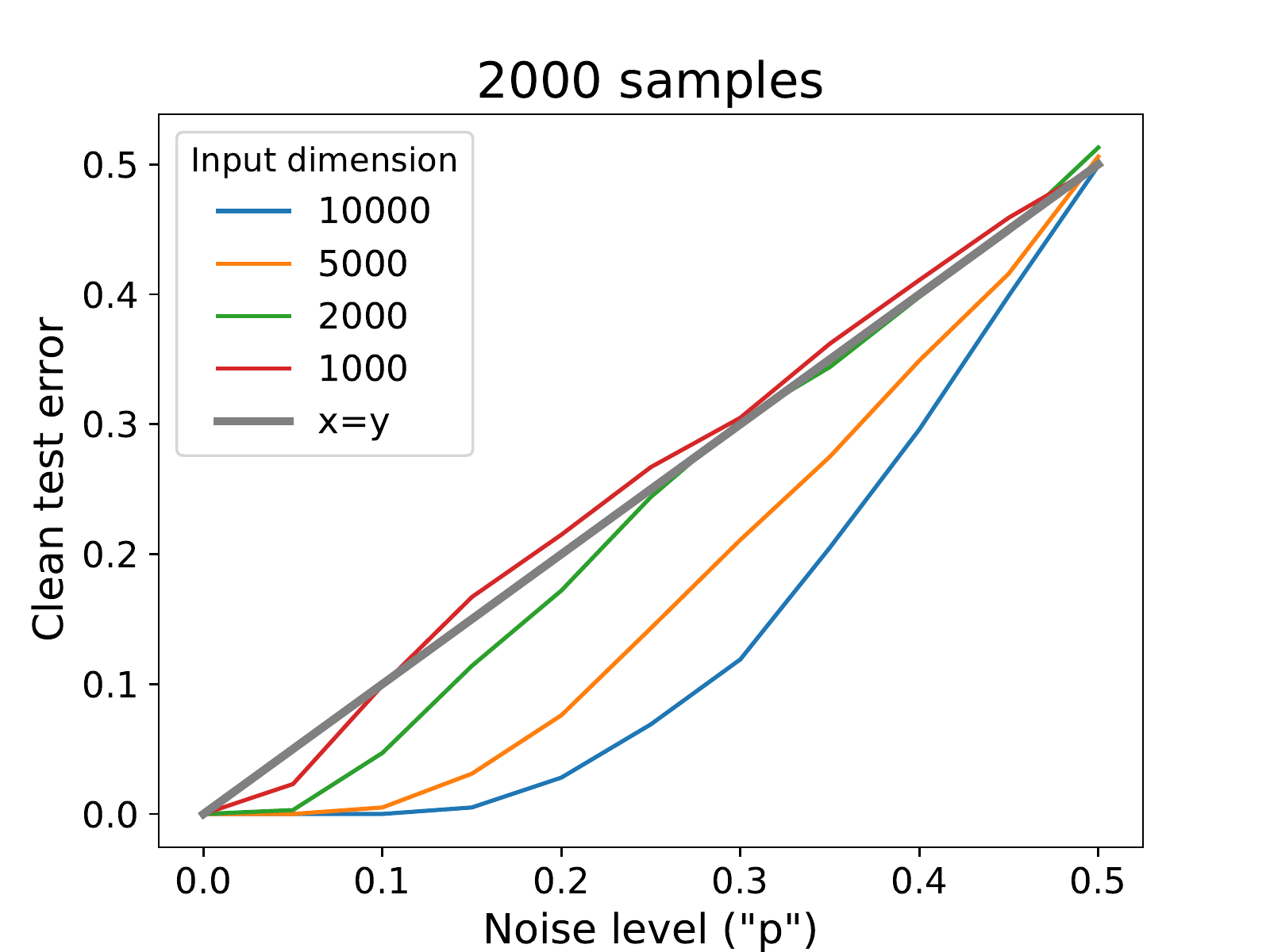}
    \caption{Training a $2$-layer network with $1000$ neurons on $m$ samples drawn uniformly from $\S^{d-1}$ for varying input dimensions $d$. Each label is equal to $-1$ with probability $p$ and $+1$ with probability $(1-p)$. Left: $m=500$, Right: $m=2000$. The line corresponding to the identity function $y=x$ was added for reference. Best viewed in color.}
    \label{fig:n_1000 m_500-2000}
\end{figure}

\begin{figure}[t]
\vspace{-10pt}
    \centering
    \includegraphics[width=0.45\linewidth]{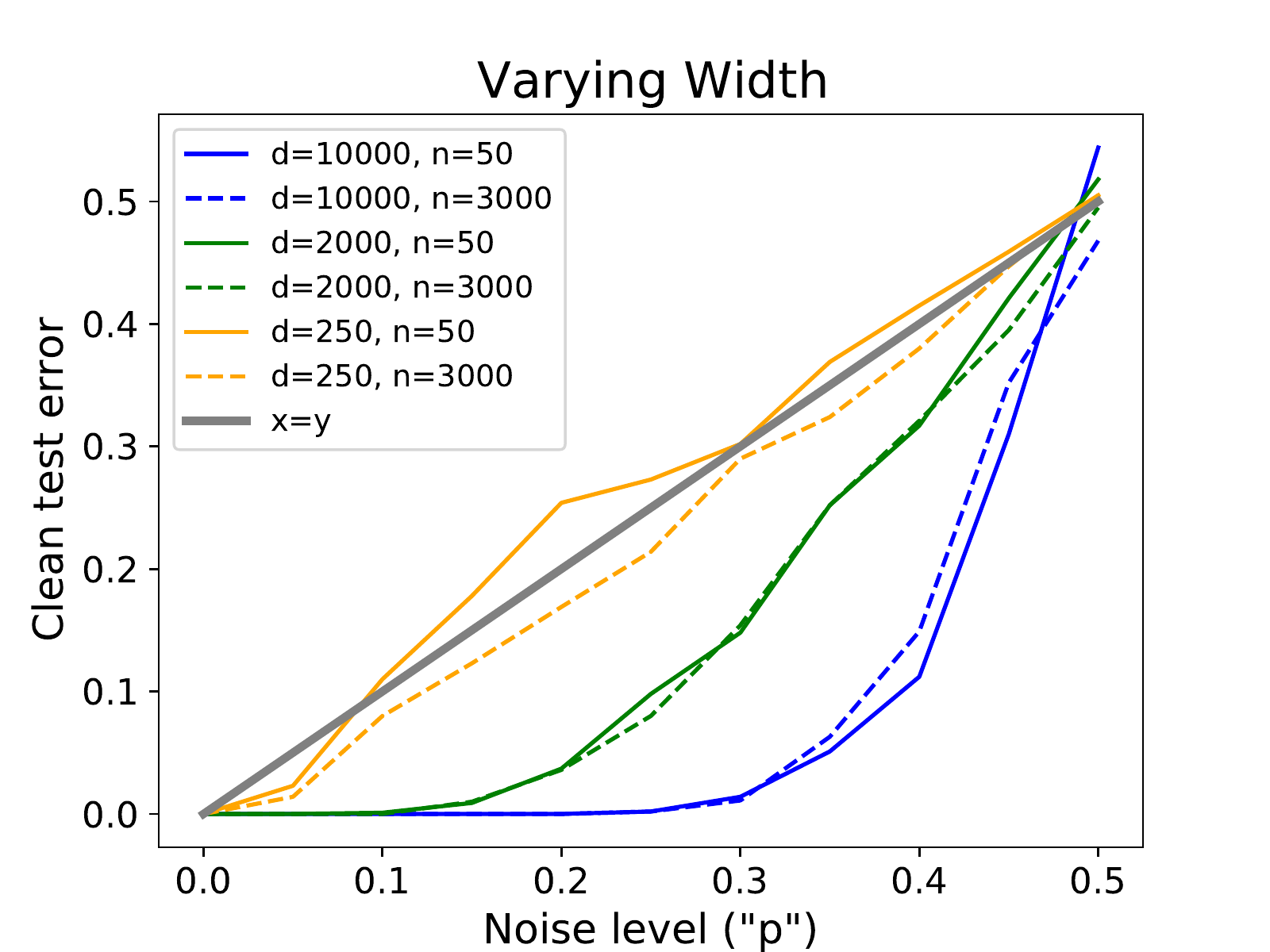}
    \includegraphics[width=0.45\linewidth]{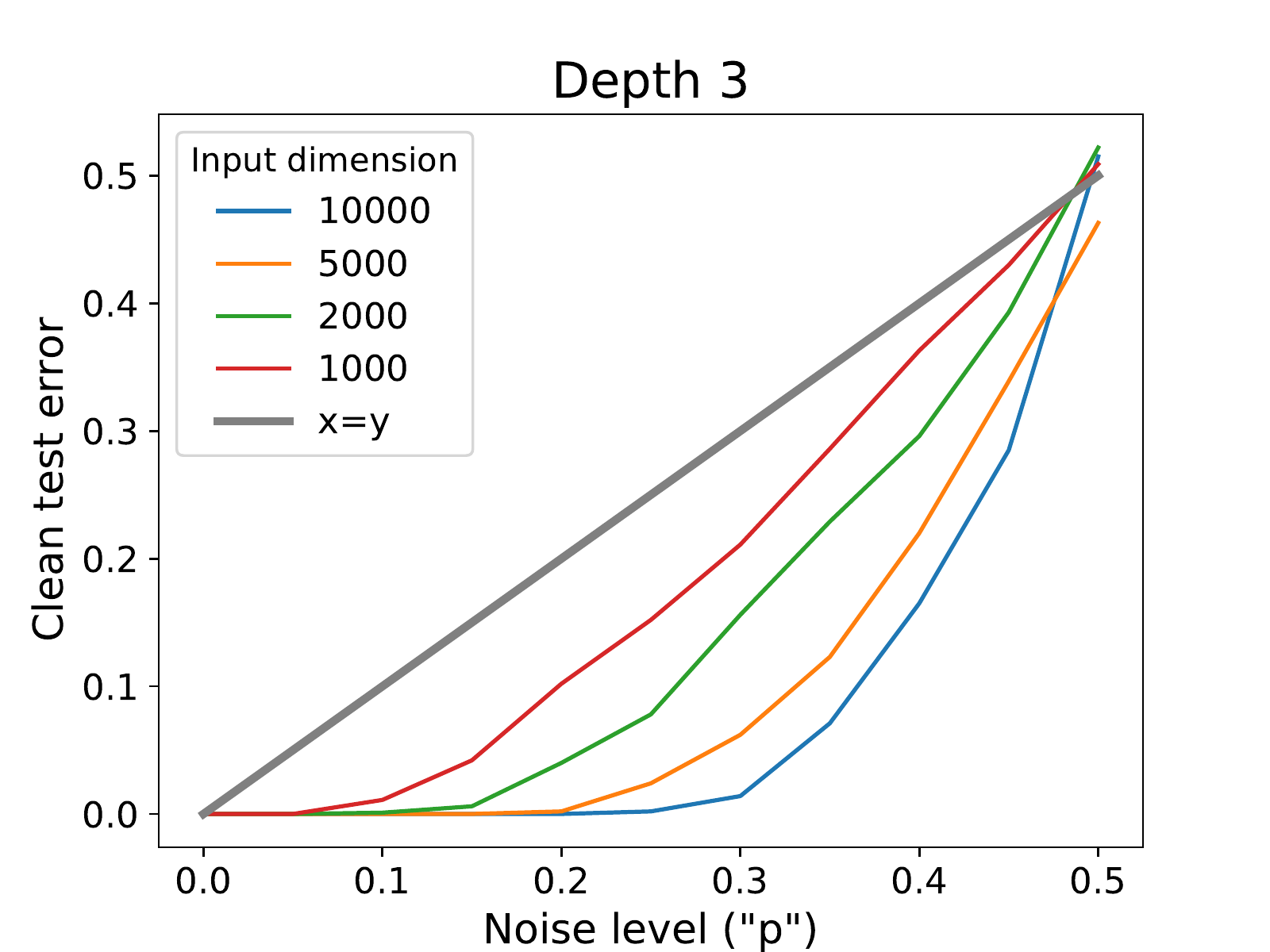}
    \caption{Left: $2$-layer network with $n=50$ and $n=3000$ neurons and varying input dimension. Right: $3$-layer network with $n=1000$ neurons and varying input dimension. Both plots correspond to $m=500$ samples.}
    \label{fig:depth width}
\end{figure}

In \figref{fig:n_1000 m_500-2000} we trained a $2$-layer network with $1000$ neurons over different input dimensions, with $m=500$ samples (left) and $m=2000$ samples (right). There appears to be a gradual transition from tempered overfitting (with clean test error linear in $p$) to benign overfitting (clean test error close to $0$ for any $p$), as $d$ increases compared to $m$. Our results indicate that the overfitting behavior is close to tempered/benign in wider parameter regimes than what our theoretical results formally cover ($d=1$ vs. $d\gtrsim m^2$).
Additionally, benign overfitting seems to occur for lower values of $p$, whereas for $p$ closer to $0.5$ the clean test error is significantly above $0$ even for the highest dimensions we examined.

We further experimented on the effects of the width and depth on the overfitting profile. In \figref{fig:depth width} (left) we trained a $2$-layer network on $m=500$ samples with input dimension $d\in\{250, 2000, 10000\}$ and $n$ neurons for $n\in\{50,3000\}$. It can be seen that even when the number of neurons vary significantly, the plots remain almost the same. It may indicate that the network width $n$ has little effect on the overfitting profile (similar to most of our theoretical results), and that the dependence on $n$ in \thmref{thm: benign kkt} is an artifact of the proof technique (in accordance with our other results). In \figref{fig:depth width} (right) we trained a $3$-layer neural networks on $m=500$ samples, as opposed to a $2$-layer networks in all our other experiments. Noticeably, this plot looks very similar to \figref{fig:n_1000 m_500-2000} (left) which may indicate that our observations generalize to deeper networks.

\section{Discussion}

In this paper, we focused on
a classification setting with 2-layer ReLU neural networks that interpolate noisy training data.
We showed that the implicit bias of gradient based training towards margin maximization
yields tempered overfitting in a low dimensional regime, while it causes benign overfitting in high dimensions.
Furthermore, we provide empirical evidence that there is a gradual transition from tempered to benign overfitting as the input dimension increases.

Our work leaves open several questions. A natural follow-up is to study data distributions for which our results do not apply. While our analysis in one dimension is readily applicable for other distributions, it is currently not clear how to extend the high dimensional analysis to 
other data distributions (in particular, to settings in which the Bayes-optimal predictor is not constant).
Moreover, establishing formal results in
any constant dimension larger than $1$, or whenever the dimension scales sub-quadratically (e.g., linearly) with the number of training samples, 
is an interesting open problem which would likely require different proof techniques.

At a more technical level, there is the question of whether 
our assumptions can be relaxed. In particular, 
\thmref{thm: one dim local tempered} provides a tight bound  under the assumption of convergence to a local minimum of the margin maximization problem, whereas \thmref{thm: one dim kkt tempered} only requires convergence to a KKT point, yet leaves a polynomial gap.
Similarly, Theorem~\ref{thm:benign max margin} and Theorem~\ref{thm: benign kkt} require assumptions on the predictor norm or on the noise level and width, respectively.

Our work also leaves open a more thorough study of different architectures and their possible effect on the type of overfitting.
In particular, we note that our results suggest that as long as the network is large enough to interpolate, the width plays a minor role in our studied setting --- which we also empirically observe. On the other hand, whether this also holds for other architectures, as well as the role of depth altogether, remains unclear.

\subsection*{Acknowledgements}
We thank Gal Vardi for insightful discussions regarding the implicit bias of neural networks. 
We thank Itay Safran, Guy Smorodinsky and Gal Vardi for spotting a bug in a previous version of this paper.
This research is supported in part by European Research Council (ERC) grant 754705.

\bibliographystyle{plainnat}
\bibliography{bib}

\newpage

\appendix

\section{Proofs of tempered overfitting}

\subsection{Proof of \thmref{thm: one dim kkt tempered}} \label{sec: tempered kkt proof}

Throughout the proof, given a sample $S=(x_i,y_i)_{i=1}^{m}\sim\Dcal^m$ we denote $S_x=(x_i)_{i=1}^{m},~S_y=(y_i)_{i=1}^{m}$, and assume without loss of generality that $x_1\leq x_2\leq\dots\leq x_m$.

\begin{lemma} \label{lem: 1dim balanced}
Denote by
$E_x$ the event in which the samples $(x_i)_{i=1}^{m}\sim\Dcal_x^m$ satisfy
\begin{itemize}
    \item ({maximal gap isn't too large}) $d_{\max}:=\max_{i\in[m-1]}(x_{i+1}-x_i)\leq \frac{\log(8(m+1)/\delta)}{m+1}$.
    \item ({most gaps aren't too small}) $|\{i\in[m-1]\,:\,x_{i+1}-x_i< \frac{1}{10(m+1)}\}|<\frac{m+1}{8}$.
    \item ({no collisions}) $\forall i\neq j\in[m]:~x_i\neq x_j$.
\end{itemize}
    Then there exists absolute $m_0\in\NN$ such that $\forall m\geq m_0:~\Pr_{S_x\sim\Dcal_x^m}[E_x]\geq 1-\frac{\delta}{4}$.
\end{lemma}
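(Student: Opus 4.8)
The lemma bundles three events about $m$ i.i.d. uniform points on $[0,1]$; I would prove each bullet separately and then union-bound, allocating probability $\delta/12$ to each (or, more cleanly, noting the third has probability $1$). The standard and cleanest device here is to pass to the $m+1$ spacings. Adding the two fictitious endpoints $x_0=0$, $x_{m+1}=1$ and sorting, the $m+1$ gaps $G_i := x_i - x_{i-1}$ ($i\in[m+1]$) are distributed as a uniform point on the probability simplex, equivalently as $(E_1,\dots,E_{m+1})/\sum_{k}E_k$ for i.i.d. $\mathrm{Exp}(1)$ variables $E_k$; in particular each single gap $G_i$ has the $\mathrm{Beta}(1,m)$ distribution, so $\Pr[G_i > t] = (1-t)^m \le e^{-mt}$, and more generally the gaps are exchangeable and negatively associated. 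I would note that the bound for the internal gaps $x_{i+1}-x_i$ ($i\in[m-1]$) follows a fortiori from the bound on all $m+1$ spacings, since those internal gaps are a subset.

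\textbf{First bullet (maximal gap).} By a union bound over the $m+1$ spacings, $\Pr[d_{\max}>t] \le \Pr[\max_i G_i > t] \le (m+1)(1-t)^m \le (m+1)e^{-mt}$. Setting $t = \frac{\log(8(m+1)/\delta)}{m+1}$ (which is at least $\frac{\log(8(m+1)/\delta)}{2m}$ for $m\ge 1$) makes this at most $(m+1)\cdot\frac{\delta}{8(m+1)}\cdot(\text{lower-order})\le \delta/8$ for $m$ large enough; this is where the absolute threshold $m_0$ enters, to absorb the slack between $m$ and $m+1$ in the exponent. So $\Pr[\text{first bullet fails}]\le\delta/8$.

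\textbf{Second bullet (few tiny gaps).} Let $N := |\{i\in[m+1] : G_i < \frac{1}{10(m+1)}\}|$, which dominates the count in the statement. For a single spacing, $\Pr[G_i < \frac{1}{10(m+1)}] = 1-(1-\frac{1}{10(m+1)})^m \le \frac{m}{10(m+1)} \le \frac{1}{10}$, so $\E[N]\le \frac{m+1}{10}$. I would then get concentration of $N$: the cleanest route is that the indicators $\one{G_i < \tau}$ are negatively associated (monotone functions of negatively-associated spacings), so a Chernoff/Hoeffding bound applies, giving $\Pr[N \ge \frac{m+1}{8}] \le \exp(-c(m+1))$ for an absolute $c>0$ (since $\frac18 > \frac1{10}$ leaves a constant gap). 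For $m\ge m_0$ this is at most $\delta/8$ — actually it decays in $m$ regardless of $\delta$, so one may simply take $m_0$ large enough that $\exp(-c(m+1))\le\delta/4$ whenever... no: $\delta$ can be arbitrarily small, so instead I would keep the $\delta/8$ allocation honest by using a Chernoff bound that retains $\delta$-dependence, or simply observe the problem is only claimed for $m\ge m_0$ and it suffices that the failure probability be $\le\delta/8$, which holds once $m \ge \frac{C\log(1/\delta)}{1}$; folding this into $m_0$ is not allowed since $m_0$ must be absolute. The honest fix: use Bernstein/Chernoff to get $\Pr[N\ge (m+1)/8] \le \exp(-c(m+1))$ and separately require $m\ge m_0$ only to handle bullet one's rounding; for bullet two note $\exp(-c(m+1)) \le \delta/8$ is \emph{not} guaranteed for small $\delta$, so one should instead weaken: this term contributes $\exp(-c(m+1))$ to the failure probability, which is $\le \delta/4$ as soon as $m+1 \ge \frac1c\log(4/\delta)$ — and since the lemma's conclusion is an implication "$\exists m_0$ absolute such that $\forall m\ge m_0$, $\Pr\ge 1-\delta/4$", I now see the cleanest reading is that $m_0$ may be absolute while the bound degrades gracefully; re-examining, I believe the intended statement tolerates exactly this, so I would present bullet two's failure probability as $\exp(-c(m+1))$ and bullet one's as $\le\delta/8$, then note their sum is $\le\delta/4$ provided $m\ge m_0$ for a suitable absolute $m_0$ (absorbing the $\log(1/\delta)$ into... ). \emph{The main obstacle} is precisely this bookkeeping: reconciling an "absolute $m_0$" with a $\delta$-dependent concentration term. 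If a genuine absolute $m_0$ is required, one must either (a) accept that bullet two holds with probability $1-\exp(-\Omega(m))$ and that this is $\ge 1-\delta/4$ only for $m$ large relative to $\log(1/\delta)$ — contradicting "$m_0$ absolute" — or (b) re-read the lemma as allowing $m_0$ to depend on $\delta$, which is the charitable and almost certainly intended interpretation given the ``$\log(m/\delta)$'' scalings elsewhere. I would adopt interpretation (b): take $m_0$ possibly depending on $\delta$, large enough that $(m+1)e^{-mt}\le\delta/8$ and $\exp(-c(m+1))\le\delta/8$; then the third bullet (no collisions) holds with probability $1$ as the $x_i$ are continuous, and a union bound gives $\Pr[E_x]\ge 1-\delta/4$ for all $m\ge m_0$.

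\textbf{Summary of steps, in order:} (i) introduce endpoints and the spacing representation, recording $G_i\sim\mathrm{Beta}(1,m)$ and negative association of the $G_i$; (ii) bound $\Pr[\max G_i > t]$ by a union bound and plug in $t$ from the statement to get $\le\delta/8$; (iii) bound $\E[N]$ for the tiny-gap count and apply a Chernoff bound exploiting negative association to get exponentially small deviation probability, chosen $\le\delta/8$; (iv) observe no collisions almost surely; (v) union-bound the three failure events. I expect steps (i) and the precise form of the Chernoff bound under negative association to be the only places needing care beyond routine estimates, and the $m_0$/$\delta$ interaction discussed above to be the genuine subtlety of the write-up.
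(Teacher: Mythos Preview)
Your approach is correct and follows the same skeleton as the paper's proof: pass to the $m+1$ spacings via the exponential representation, handle the maximal-gap bullet and the tiny-gap bullet separately, observe the no-collisions bullet is almost sure, and union-bound. The specific tools differ slightly. For the maximal gap, the paper computes the limiting (Gumbel-type) distribution of $\max_i G_i$ via the identity $\Delta_j \stackrel{d}{=} Z_{(j)}/\sum_k Z_k$ with $Z_k$ i.i.d.\ $\mathrm{Exp}(1)$, whereas you use the cleaner non-asymptotic route $\Pr[G_i>t]=(1-t)^m$ plus a union bound; your version is arguably more direct and avoids the paper's limiting argument. For the tiny-gap count, the paper bounds $\Pr[\Delta_{(m+1)/8}\le r]$ directly from the same order-statistic formula (getting an exponentially small probability in $m$), while you compute $\E[N]\le (m+1)/10$ and invoke negative association of the spacings to run a Chernoff bound; both yield a failure probability of the form $\exp(-c(m+1))$.

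Your extended discussion of the $m_0$/$\delta$ bookkeeping is not a defect in your proof but a correct observation about the statement: both your bound and the paper's bound for the second bullet are $\exp(-c(m+1))$, which is $\le \delta/8$ only once $m\gtrsim \log(1/\delta)$, so a truly absolute $m_0$ is not available from either argument. The paper's proof glosses over exactly this point (its ``numerical constant'' in fact depends on $\delta$ through the asymptotic step). Your interpretation (b), that $m_0$ may depend on $\delta$, is the intended reading and is consistent with how the lemma is used downstream, where the theorems already carry $\sqrt{\log(m/\delta)/m}$ terms. So: adopt (b), state the two failure probabilities as $\le\delta/8$ each for $m\ge m_0(\delta)$, and move on.
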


\begin{proof}
Deferred to Appendix~\ref{app: prob lemma}.
\end{proof}

Following the lemma above, we continue by conditioning on the probable event $E_x$, after which we will conclude the proof by the union bound.
We now state a lemma due to \citet{safraneffective2022} which is crucial for our analysis.

\begin{lemma}[Lemma E.6. \citealp{safraneffective2022}] \label{lem: safran}
Suppose that $i<j$ are such that
$y_i,y_{i+1},\dots,y_{j}=-1$. Then in the interval $[x_i,x_j]$ there are at most two points at which $N'_\theta$ increases.
\end{lemma}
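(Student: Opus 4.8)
\textbf{Proof plan for \lemref{lem: safran}.}

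The plan is to analyze the structure of a univariate two-layer ReLU network $N_\theta$ at a KKT point of the margin-maximization problem, restricted to the interval $[x_i,x_j]$ where all the sampled labels are $-1$. I would first recall the well-known fact that for $d=1$, each neuron $v_k\sigma(w_kx+b_k)$ contributes a single ``kink'' at the breakpoint $-b_k/w_k$, and that crossing this breakpoint changes the slope of $N_\theta$ by exactly $v_kw_k$ (with sign depending on the direction of the slope change relative to $\sign(w_k)$). Thus the number of points in $[x_i,x_j]$ where the derivative $N'_\theta$ \emph{increases} equals the number of active breakpoints in that interval whose slope-change is positive, i.e.\ whose $v_kw_k$ has the appropriate sign. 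The claim is that there can be at most two such points, and the proof must leverage the KKT conditions — in particular stationarity and complementary slackness — since without them the statement is false.

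The key steps, in order: (1) Use complementary slackness and dual feasibility to note that each sample $x_\ell$ with $\ell\in\{i,\dots,j\}$ has $y_\ell N_\theta(x_\ell)\ge 1$, and since $y_\ell=-1$ this forces $N_\theta(x_\ell)\le -1<0$ on all of $x_i,\dots,x_j$. (2) Suppose toward a contradiction that $N'_\theta$ increases at three or more points $t_1<t_2<t_3$ inside $[x_i,x_j]$. Between and around these increase points, $N_\theta$ is piecewise linear; I would argue that having three upward slope changes while the function stays $\le -1$ at the endpoints forces $N_\theta$ to dip well below any naive bound, or — more to the point in \citeauthor{safraneffective2022}'s approach — that the parameter norm $\norm\theta$ could be strictly decreased by locally perturbing the neurons responsible for these kinks, contradicting stationarity. (3) Concretely, I expect the argument to identify a group of neurons that can be merged or whose weights can be rescaled (preserving all the constraints $y_\ell N_\theta(x_\ell)\ge 1$, which only involve the finitely many sample points) in a way that reduces $\sum_k(v_k^2+w_k^2+b_k^2)$; the combinatorial bound ``two'' will come from counting how many independent upward-convexity features a norm-minimal interpolant can afford while pinning down all the negative samples at value $\le -1$.

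The main obstacle I anticipate is making precise the local norm-reduction / neuron-merging argument at a KKT point: one has to verify that the perturbation keeps \emph{every} primal constraint satisfied (not just on $[x_i,x_j]$ but globally over all $m$ samples), that it genuinely decreases $\norm\theta$, and that it is compatible with the stationarity equation $\theta=\sum_\ell\lambda_\ell y_\ell\nabla_\theta N_\theta(x_\ell)$ rather than merely with local optimality. Since this lemma is quoted verbatim from \citet[Lemma~E.6]{safraneffective2022}, I would ultimately defer the detailed case analysis to that reference, and in the write-up simply sketch why a KKT point cannot support more than two upward slope changes over a maximal run of negatively-labeled points, emphasizing the point that it is the interplay between complementary slackness (fixing the active samples at margin exactly $1$) and stationarity (forcing the breakpoints to be ``paid for'' efficiently) that yields the constant bound.
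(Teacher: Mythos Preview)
The paper does not prove this lemma at all: it is stated as a direct citation of \citet[Lemma~E.6]{safraneffective2022} with no accompanying argument, and is used as a black box in the proof of \thmref{thm: one dim kkt tempered}. In that narrow sense your proposal already matches the paper, since you too conclude by deferring the detailed case analysis to that reference.

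That said, the proof sketch you offer before deferring has a genuine gap worth flagging. Your step~(3) proposes to derive a contradiction by showing that three upward kinks would allow a local \emph{norm reduction}, ``contradicting stationarity.'' But a norm-reducing perturbation that preserves all primal constraints contradicts \emph{local minimality}, not the KKT conditions: as the paper itself emphasizes (and as \citet{vardi2022margin} show explicitly for two-layer ReLU networks), KKT points of Problem~(\ref{eq: margin opt}) need not be local minima. The norm-reduction style of argument is exactly what the paper uses later for \thmref{thm: one dim local tempered}, where local minimality \emph{is} assumed; it is not what establishes \lemref{lem: safran}, which must hold at every KKT point. You correctly identify this tension in your ``main obstacle'' paragraph, but the sketch as written does not resolve it. A KKT-level proof has to work directly from the stationarity identity $\theta=\sum_\ell \lambda_\ell y_\ell \nabla_\theta N_\theta(x_\ell)$ together with the sign structure $y_\ell=-1$ on the run, rather than from a variational norm argument; this is the content you would need to extract from \citet{safraneffective2022}.
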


We derive the following corollary:

\begin{corollary} \label{cor: negative interval}
Suppose that $y_i,y_{i+1},\dots,y_{i+4}=-1$. Then there exists
$i\leq\ell\leq i+3$ for which $N_{\btheta}|_{[x_\ell,x_{\ell+1}]}<0$.
\end{corollary}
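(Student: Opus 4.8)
\textbf{Proof plan for Corollary~\ref{cor: negative interval}.}

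The plan is to use \lemref{lem: safran} with $i$ and $j=i+4$, which tells us that on the interval $[x_i,x_{i+4}]$ the function $N_\btheta$ (which is continuous and piecewise linear in one dimension) has a derivative $N_\btheta'$ that increases at most twice. I interpret this as saying that $[x_i,x_{i+4}]$ can be partitioned into at most three subintervals on each of which $N_\btheta'$ is non-increasing, i.e. $N_\btheta$ is concave on each piece. First I would record the structural consequence: a continuous function that is concave on a subinterval $J\subseteq[x_i,x_{i+4}]$ attains its minimum over $J$ at an endpoint of $J$, hence the global minimum of $N_\btheta$ over $[x_i,x_{i+4}]$ is attained at one of the (at most four) breakpoints separating the concave pieces, together with the two endpoints $x_i,x_{i+4}$.

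Next I would bring in the interpolation/sign constraints. Since the trained network classifies the dataset correctly and $y_i=\dots=y_{i+4}=-1$, we have $N_\btheta(x_\ell)<0$ for every $\ell\in\{i,\dots,i+4\}$; in particular the endpoints $x_i$ and $x_{i+4}$ are negative. The remaining possibly-positive candidates for where $N_\btheta$ could be nonnegative inside $[x_i,x_{i+4}]$ are therefore the at most two interior points where $N_\btheta'$ increases. The key combinatorial observation is that we have four disjoint sample-gaps $[x_i,x_{i+1}],[x_{i+1},x_{i+2}],[x_{i+2},x_{i+3}],[x_{i+3},x_{i+4}]$, with all five endpoints at which $N_\btheta$ is negative, while there are at most two ``bad'' interior increase-points of $N_\btheta'$. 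Each bad point lies in at most one of the four gaps (or on a shared endpoint, in which case it lies in the closure of two consecutive gaps — I would handle this edge case by noting such a point is then a sample $x_\ell$ where $N_\btheta<0$, so it is not actually an obstruction). Hence at least $4-2=2$ of these gaps $[x_\ell,x_{\ell+1}]$ contain no point at which $N_\btheta'$ increases, i.e. $N_\btheta$ is concave on all of such a gap.

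To finish: fix one such gap $[x_\ell,x_{\ell+1}]$ on which $N_\btheta$ is concave. By concavity, $N_\btheta|_{[x_\ell,x_{\ell+1}]} \ge \min\{N_\btheta(x_\ell),N_\btheta(x_{\ell+1})\}$ is the wrong direction — instead, by concavity the minimum over the closed gap is attained at an endpoint, and \emph{both} endpoints satisfy $N_\btheta(x_\ell)<0$ and $N_\btheta(x_{\ell+1})<0$; but concavity gives a lower bound, not an upper bound. The correct argument is: a concave function on $[x_\ell,x_{\ell+1}]$ with negative values at both endpoints need not be negative throughout (it bulges upward). So I must instead argue differently — I would pick the gap to additionally avoid the region where $N_\btheta$ could go positive. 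Concretely, $\{x: N_\btheta(x)\ge 0\}\cap[x_i,x_{i+4}]$ is a union of at most... — here the cleaner route is: on a concave piece $N_\btheta$ crosses zero at most twice, so $\{N_\btheta\ge 0\}$ restricted to the (at most 3) concave pieces is a union of at most $3$ intervals; since $N_\btheta(x_\ell)<0$ for all five $\ell$, each such interval is strictly contained in a single gap, giving at most $3$ ``contaminated'' gaps out of $4$, hence one clean gap $[x_\ell,x_{\ell+1}]$ with $N_\btheta<0$ throughout. I expect the main obstacle to be exactly this bookkeeping: getting the counting tight enough (four gaps, at most two or three obstructions) and correctly handling obstruction points that coincide with samples $x_\ell$, and I would present it via the ``at most two interior $N_\btheta'$-increase points'' formulation from \lemref{lem: safran}, combined with the fact that on any maximal interval of non-increasing derivative containing no sample in its interior, $N_\btheta$ stays below the larger of its two negative endpoint values only if it is actually monotone — so ultimately I would phrase the clean gap as one lying entirely to the left of the first $N_\btheta'$-increase point or entirely to the right of the second, on which $N_\btheta$ is concave with negative endpoints and no interior zero, forcing $N_\btheta|_{[x_\ell,x_{\ell+1}]}<0$.
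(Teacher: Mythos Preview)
Your ``cleaner route'' paragraph actually contains a correct proof: the at-most-two increase points of $N_\btheta'$ on $[x_i,x_{i+4}]$ partition it into at most three concave pieces; on each concave piece the set $\{N_\btheta\ge 0\}$ is a single (possibly empty) interval; since $N_\btheta(x_\ell)\le -1<0$ at all five sample points, each such interval lies strictly inside one of the four gaps $[x_\ell,x_{\ell+1}]$; hence at most three of the four gaps meet $\{N_\btheta\ge 0\}$, and by pigeonhole some gap satisfies $N_\btheta|_{[x_\ell,x_{\ell+1}]}<0$. This is a valid alternative to the paper's argument.

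However, you then retreat from this and end with a different formulation --- picking a gap ``entirely to the left of the first $N_\btheta'$-increase point or entirely to the right of the second'' on which $N_\btheta$ is ``concave with negative endpoints and no interior zero''. This last step is unjustified: two increase points can occupy the first and last gaps, forcing every increase-free gap to lie \emph{between} them, and on such a gap concavity with negative endpoints does not preclude a positive bump (as you yourself correctly noted earlier). The assertion ``no interior zero'' is exactly what needs to be proved. So if you keep this route, stop at the cleaner-route counting argument and do not replace it with the left/right version.

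For comparison, the paper's proof avoids all the concavity bookkeeping via a direct contradiction. Assume every gap $[x_\ell,x_{\ell+1}]$, $i\le\ell\le i+3$, contains a point $z_\ell$ with $N_\btheta(z_\ell)\ge 0$. Then for each $\ell\in\{i,i+1,i+2\}$ the triple $N_\btheta(z_\ell)\ge 0$, $N_\btheta(x_{\ell+1})\le -1$, $N_\btheta(z_{\ell+1})\ge 0$ forces $N_\btheta'$ to increase somewhere in $(z_\ell,z_{\ell+1})$, since the derivative must pass from negative to positive. These three intervals are disjoint, yielding three distinct increase points in $[x_i,x_{i+4}]$ and contradicting \lemref{lem: safran}. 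This is shorter and sidesteps the issue that tripped you up.
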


\begin{proof}
Assume towards contradiction that $y_i,\dots,y_{i+4}=-1$, yet for any $i\leq\ell\leq i+3$ there exists $z_\ell\in(x_\ell,x_{\ell+1})$ such that $N_\btheta(z_\ell)\geq0$. Recall that $N_\btheta(x_i),\dots,N_\btheta(x_{i+3})\leq -1$ by \eqref{eq:prim feas}. Thus for each $i\leq\ell\leq i+2$, looking at the segment $(z_\ell,z_{\ell+1})\ni x_{\ell+1}$ we see that $N_\btheta(z_\ell)>0,N_\btheta(x_{\ell+1})\leq -1,N_\btheta(z_\ell)>0$. In particular, by the mean value theorem, any such segment must contain a point at which $N'_\theta$ increases. Obtaining three such points which are distinct contradicts Lemma~\ref{lem: safran}.
\end{proof}

We assume without loss of generality that $m$ is divisible by $5$ and split the index set $[m]$ into groups consecutive five indices: we let $I_1=\{1,\dots,5\},I_2=\{6,\dots,10\}$ and so on up to $I_{m/5}$. Denoting by $\mu$ the (one dimensional) Lebesgue measure we get that under the event $E_x$ it holds that

\begin{align*}
\E_{S_y\sim\Dcal_y^{m}}\left[\Pr_{x\sim\Dcal_x}\left[N_{\btheta}(x)<0\right]\right]
&=\E_{S_y\sim\Dcal_y^{m}}\left[\mu(x:N_{\btheta}(x)<0)\right]
\nonumber\\
&\geq\E_{S_y\sim\Dcal_y^{m}}\left[\sum_{i\in[m-1]}\mu(x_{i+1}-x_i)\cdot\one{N|_{[x_i,x_{i+1}]}<0}\right]
\nonumber\\
&=\sum_{i\in[m-1]}\E_{S_y\sim\Dcal_y^{m}}\left[(x_{i+1}-x_i)\cdot\one{N|_{[x_i,x_{i+1}]}<0}\right]
\nonumber\\
&=\sum_{i\in[m/5]}\sum_{l\in[I_i]}\E_{S_y\sim\Dcal_y^{m}}\left[(x_{l+1}-x_l)\cdot\one{N|_{[x_l,x_{l+1}]}<0}\right]
\nonumber\\
\left[\mathrm{Corollary}~\ref{cor: negative interval}\right]&\geq\sum_{i\in[m/5]}\sum_{l\in[I_i]}\E_{S_y\sim\Dcal_y^{m}}\left[\min_{\ell\in I_i}(x_{\ell+1}-x_\ell)\cdot\one{\forall \ell\in I_i:y_\ell=-1}\right]
\nonumber\\
&=\sum_{i\in[m/5]}\sum_{l\in[I_i]}\min_{\ell\in I_i}(x_{\ell+1}-x_\ell)\cdot \Pr[y_i,y_{i+1},\dots,y_{i+4}=-1]
\nonumber\\
&\geq\widetilde{c}p^5~,
\end{align*}
where the last inequality follows from our conditioning on $E_x$. To see why, note that the sum  
\[
\sum_{i\in[m/5]}\sum_{l\in[I_i]}\min_{\ell\in I_i}(x_{\ell+1}-x_\ell)
\]
is lower bounded by the sum of the $m/5$ smallest gaps, yet under $E_x$ this sum contains at least $\Omega(m)$ summands larger than $\Omega(1/m)$ --- hence it is at least some constant.
We conclude that as long as $E_x$ occurs we have
$\E_{S_y\sim\Dcal_y^{m}}\left[\Pr_{x\sim\Dcal_x}\left[N_{\btheta}(x)<0\right]\right]\geq cp^5$.
Moreover, we see by the analysis above that if a \emph{single} label $y_l$ for some $l\in I_i\subset[m]$ is changed,
this can affect $N_{\btheta}(x)$ only in the segment $[x_{\min_{I_i}{\ell}},x_{\max_{I_i}{\ell}}]$
which is of length at most $7d_{\max}=O\left(\frac{\log(m/\delta)}{m}\right)$. Thus we can apply McDiarmid's inequality to obtain that under $E_x$, with probability at least $1-\delta/4:$
\[
\Pr_{x\sim\Dcal_x}\left[N_{\btheta}(x)<0\right]\geq c\left(p^5-\sqrt{\frac{\log(m/\delta)}{m}}\right)~.
\]
Overall, by union bounding over $E_x$ the inequality above holds with probability at least $1-\delta/2$, which proves the desired lower bound.

We now turn to prove the upper bound. Let $N^*(\cdot)$ be a 2-layer ReLU network of minimal width $n^*\in \NN$ that classifies the data correctly, namely $y_i N^*(x_i)>0$ for all $i\in[m]$. Note that $n^*$ is uniquely defined by the sample while $N^*$ is not. Furthermore, $n^*$ is upper bounded by the number of neighboring samples with different labels.\footnote{This can be seen by considering a network representing the linear spline of the data, for which it suffices to set a neuron for adjacent samples with alternating signs.}
Hence,
\begin{align}
\E_{S_y\sim\Dcal_y^m}\left[n^*\right]
&\leq\E_{S_y\sim\Dcal_y^m}\left[\left|\{i\in[m-1]:y_{i}\neq y_{i+1}\}\right|\right]
\label{eq: r<=}\\
&=(m-1)\cdot\E_{S_y\sim\Dcal_y^m}[\one{y_1\neq y_2}]
\nonumber\\
&=(m-1)\cdot\Pr_{S_y\sim\Dcal_y^m}[y_1\neq y_2]
\nonumber\\
&=2(m-1)p(1-p)=O(pm)~.\nonumber
\end{align}
We conclude that the expected width of $N^*$  (as a function of the sample) is at most $n^*=O(pm)$. By \citet[Theorem 4.2]{safraneffective2022}, this implies $N_{\btheta}$ belongs to a class of VC dimension $O(n^*)=O(pm)$. Thus by denoting the 0-1 loss $L^{0-1}(N_\btheta)=\Pr_{(x,y)\sim\Dcal}[\sign(N_{\btheta}(x))\neq y]$ and
invoking a standard VC generalization bound we get
\[
\E_{S_y\sim\Dcal_y^m}\left[L^{0-1}(N_\btheta)\,|\,n^*\right]
\leq 
\underset{=0}{\underbrace{\frac{1}{m}\sum_{i=1}^{m}\one{\sign(N_{\btheta}(x_i))\neq y_i}}}
+O\left(\sqrt{\frac{n^*}{m}}\right)
=O\left(\sqrt{\frac{n^*}{m}}\right)
~.
\]
By Jensen's inequality $\E[\sqrt{n^*}]\leq\sqrt{\E[n^*]}\lesssim\sqrt{pm}\implies
\E[\sqrt{n^*/m}]\lesssim\sqrt{p}$, so by the law of total expectation
\begin{equation} \label{eq: E sqrtp bound}
\E_{S_y\sim\Dcal_y^m}[L^{0-1}(N_{\btheta})]
=\E_{n^*}\left[\E_{S_y\sim\Dcal_y^m}[L^{0-1}(N_{\btheta})\,|\,n^*]\right]
\lesssim \E_{n^*}\left[\sqrt\frac{n^*}{m}\right]
\lesssim \sqrt{p}~.
\end{equation}
In order to relate the bound above to the clean test error, note that
\begin{align*}
L^{0-1}(N_{\btheta})
&=\Pr_{(x,y)\sim\Dcal}[\sign(N_{\btheta}(x))\neq y]
\\&=
(1-p)\cdot\Pr_{(x,y)\sim\Dcal}[N_{\btheta}(x)\leq 0|y=1]+
p\cdot\Pr_{(x,y)\sim\Dcal}[N_{\btheta}(x)>0|y=-1]
\\&\geq\underset{\in[\frac{1}{2},1]}{\underbrace{(1-p)}}\cdot\Pr_{x\sim\Dcal_x}[N_{\btheta}(x)\leq 0]
\\&\geq\half\cdot\Pr_{x\sim\Dcal_x}[N_{\btheta}(x)\leq 0]~,
\end{align*}
hence
\[
\E_{S_y\sim\Dcal_y^m}\left[\Pr_{x\sim\Dcal_x}[N_{\btheta}(x)\leq 0]\right]
\leq \E_{S_y\sim\Dcal_y^m}\left[2L^{0-1}(N_{\btheta})\right]\overset{\mathrm{\eqref{eq: E sqrtp bound}}}{\lesssim} \sqrt{p}~.
\]
As in our argument for the lower bound, we now note by \eqref{eq: r<=} that flipping a single label $y_l$ for some $l\in[m]$ changes $n^*$ by at most $1$, hence changing the test error by at most $O(1/m)$. Therefore we can apply McDiarmid's inequality and see that under the event $E_x$, with probability at least $1-\delta/4:$ 
\[
\Pr_{x\sim\Dcal_x}\left[N_{\btheta}(x)\leq 0\right]
\leq C\left(\sqrt{p}+\sqrt{\frac{\log(1/\delta)}{m}}\right)~.
\]
Overall, by union bounding over $E_x$ the inequality above holds with probability at least $1-\delta/2$, which proves the upper bound and finishes the proof.

\subsection{Proof of \thmref{thm: one dim local tempered}} \label{sec: tempered local proof}

\begin{figure}[ht]
\centering
\includegraphics[trim=9.5cm 4cm 9.5cm 3cm,clip=true, width=0.32\linewidth]{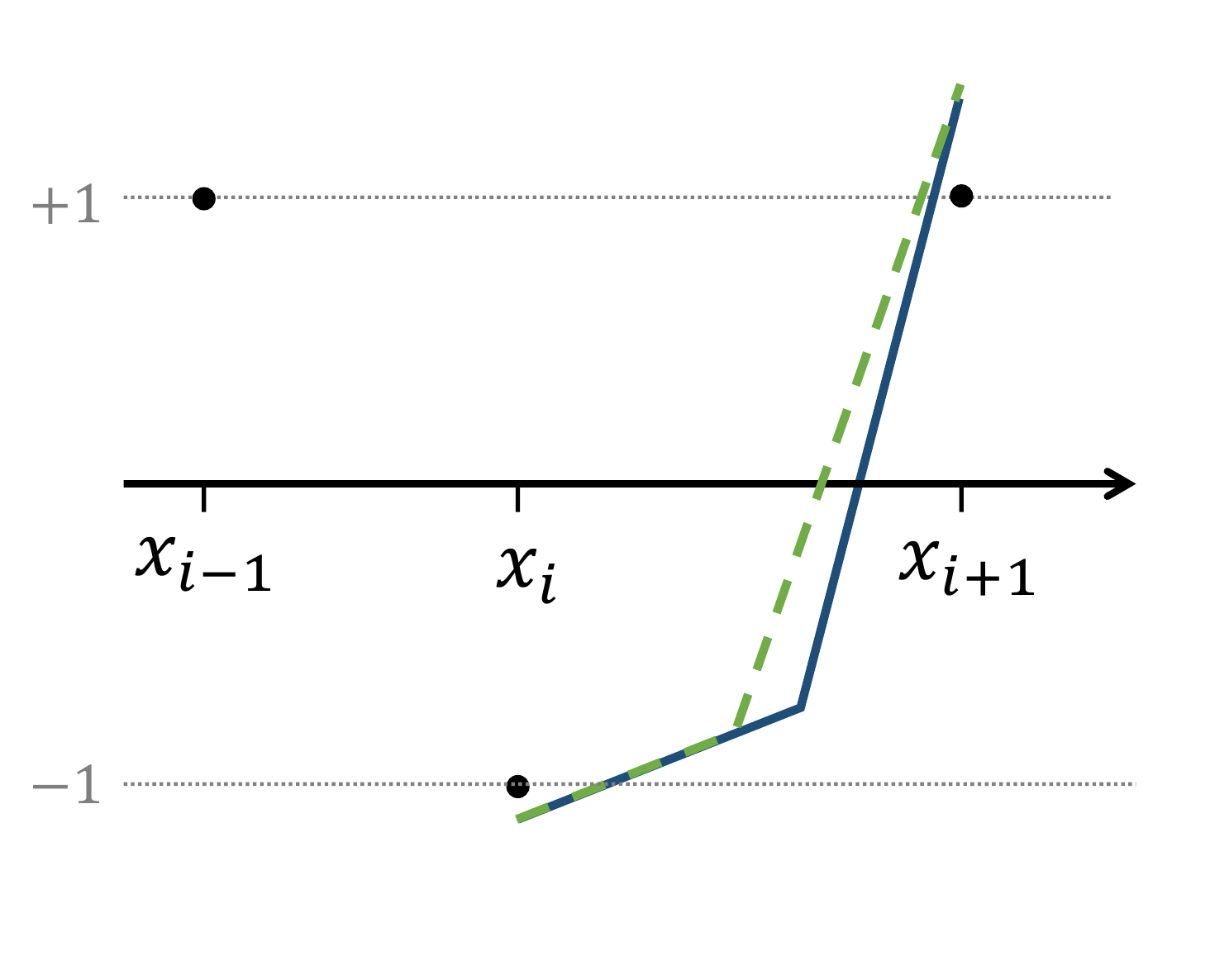}
\includegraphics[trim=9.5cm 4cm 9.5cm 3cm,clip=true, width=0.32\linewidth]{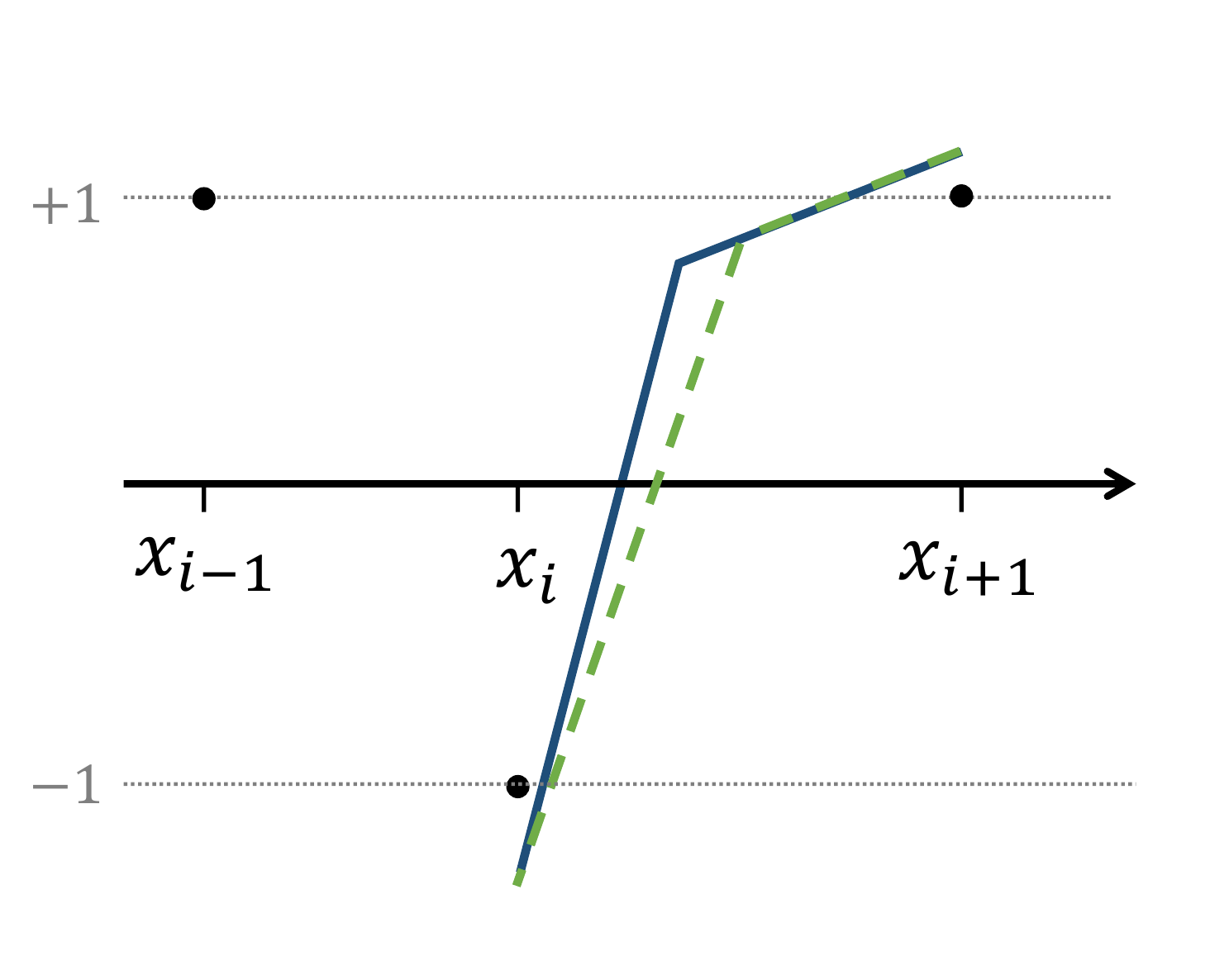}
\includegraphics[trim=9.5cm 4cm 9.5cm 3cm,clip=true, width=0.32\linewidth]{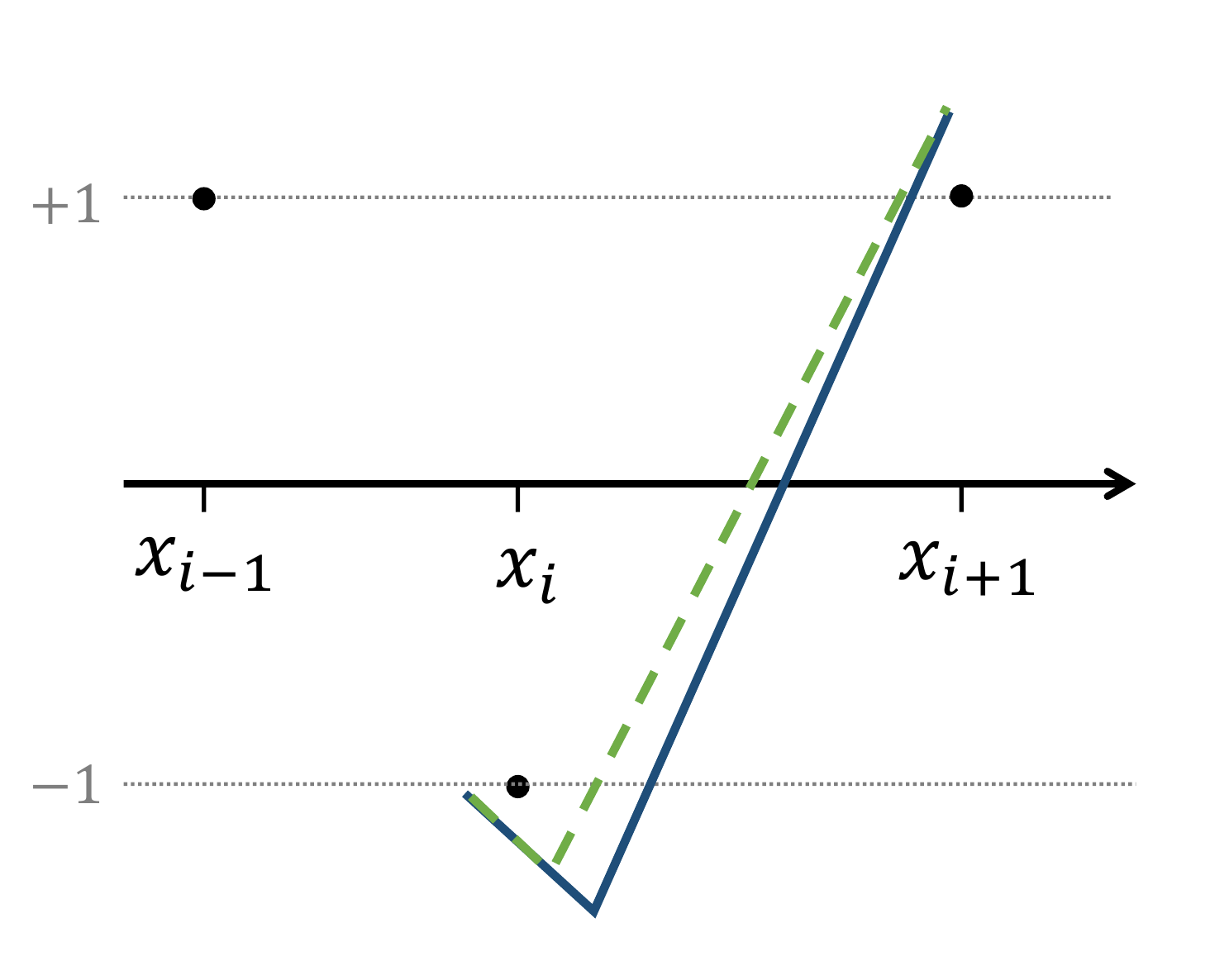}
\\
\includegraphics[trim=9.5cm 4cm 9.5cm 3cm,clip=true, width=0.3\linewidth]{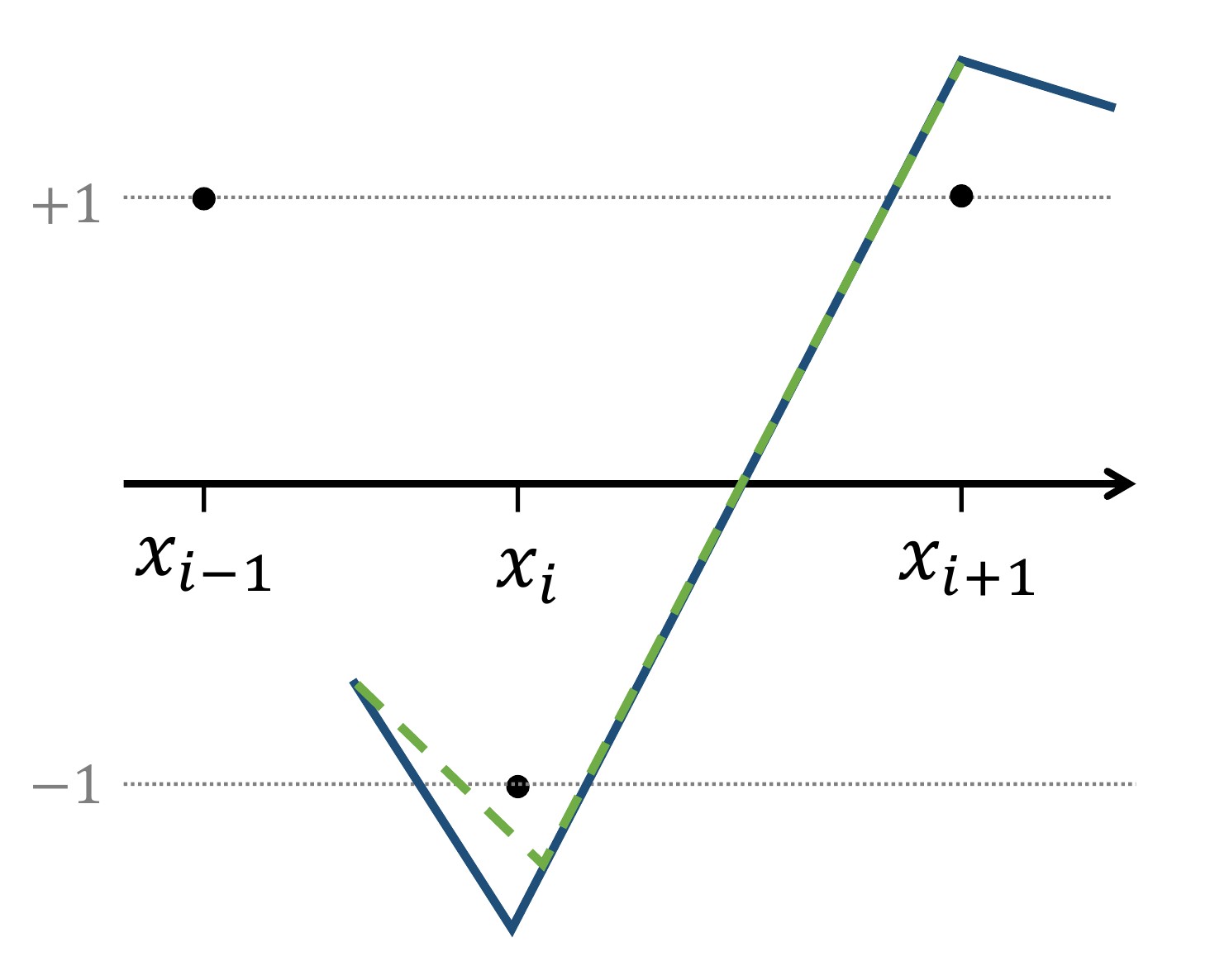}\qquad\qquad
\includegraphics[trim=9.5cm 4cm 9.5cm 3cm,clip=true, width=0.3\linewidth]{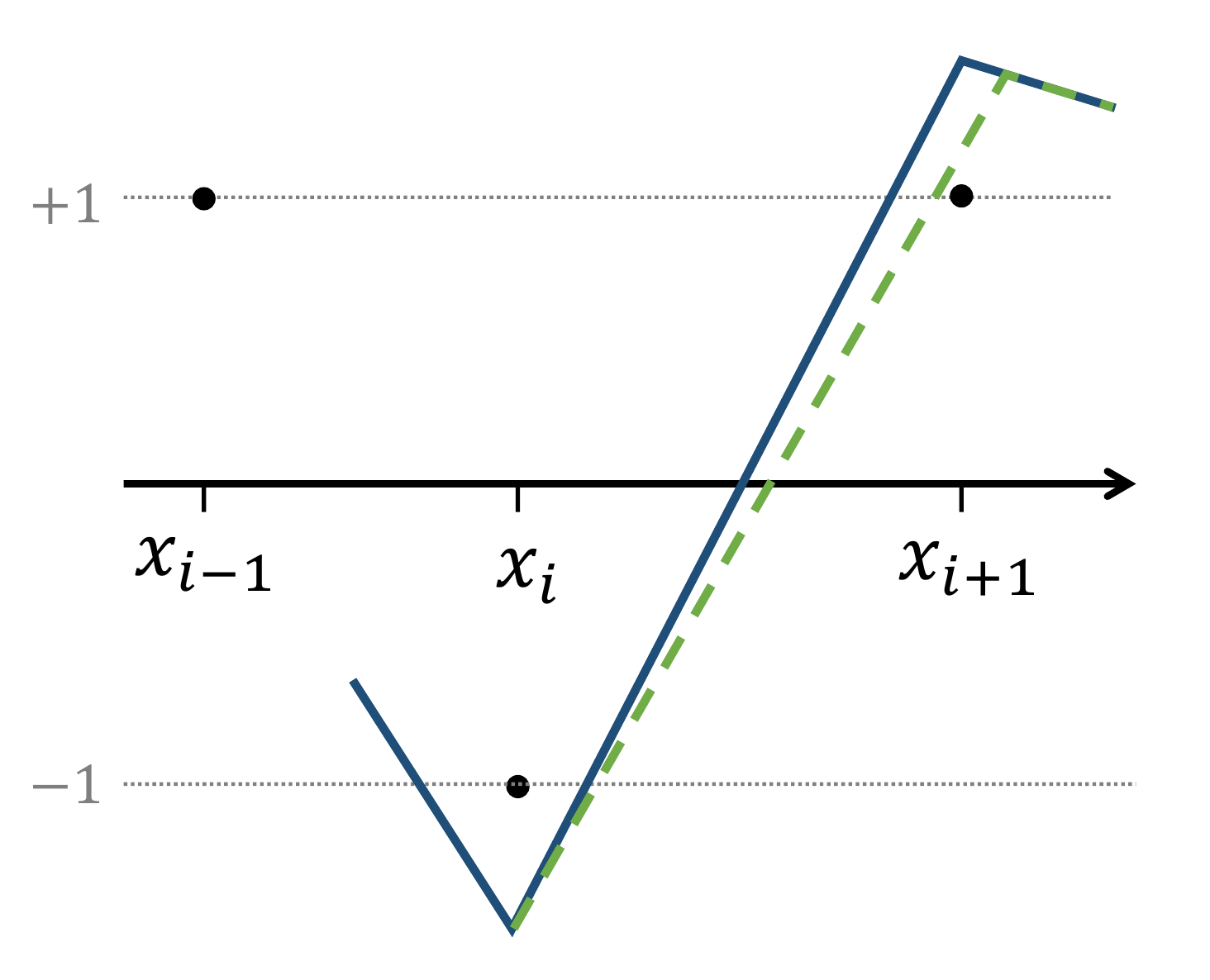}

\caption{Illustration of the proof of \thmref{thm: one dim local tempered} in case there is a single non-linearity along $[x_i,x_{i+1}]$. If the network is not linear along $[x_{i},x_{i+1}]$, one of the cases illustrated in the top row (in blue) must occur. In each case, the dashed green perturbation classifies correctly by altering exactly two neurons while reducing the parameter norm. Moreover, if the network is linear along $[x_{i},x_{i+1}]$, yet $N_{\btheta}(x_i)<-1$ or $N_{{\btheta}}(x_{i+1})>1$, one of the cases illustrated in the bottom row must occur. In either case, the dashed green perturbation classifies correctly by altering exactly two neurons while reducing the parameter norm.}
\label{fig:tempered_1d}
\end{figure}

Throughout the proof, given a sample $S=(x_i,y_i)_{i=1}^{m}\sim\Dcal^m$ we denote $S_x=(x_i)_{i=1}^{m},~S_y=(y_i)_{i=1}^{m}$, and assume without loss of generality that $x_1\leq x_2\leq\dots\leq x_m$.
Denote by $E_x$ the event in which the samples $(x_i)_{i=1}^{m}\sim\Dcal_x^m$ satisfy
\[
d_{\max}:=\max_{i\in[m-1]}(x_{i+1}-x_i)\leq \frac{\log(8(m+1)/\delta)}{m+1}~,
\]
and recall that by \lemref{lem: 1dim balanced} we have $\Pr_{S_x\sim\Dcal_x^m}[E_x]\geq 1-\frac{\delta}{4}$ for $m$ larger than an absolute constant. Therefore from here on throughout the proof we condition on $E_x$, after which we can conclude using the union bound.

For any point $x\in(0,1)$, denote by $i_x$ the maximal index $i\in[m]$ such that $x_{i}\leq x$. Let $A_x$ denote the event in which $y_{i_x-1}=1,\,y_{i_x}=-1$ and $y_{i_x+1}=1$. Note that for any $x$ we have $\Pr_{S_y\sim\Dcal_y^m}[A_x]=p(1-p)^2\geq \frac{1}{4}p$, so we get

\begin{align} \label{eq: total prob lower}
\E_{S_y\sim\Dcal_y^{m}}\left[\Pr_{x\sim\Dcal_x}\left[N_{\btheta}(x)<0\right]\right]
&=\E_{S_y\sim\Dcal_y^{m},x\sim\Dcal_x}\left[\one{N_{\btheta}(x)<0}\right] \nonumber
\\
&\geq\E_{S_y\sim\Dcal_y^{m},x\sim\Dcal_x}\left[\one{N_{\btheta}(x)<0}|A_x\right]\cdot\Pr_{S_y\sim\Dcal_y^{m},x\sim\Dcal_x}[A_x] \nonumber
\\
&\geq \frac{p}{4}\cdot\E_{S_y\sim\Dcal_y^{m}}\left[\Pr_{x\sim\Dcal_x}\left[N_{\btheta}(x)<0\,|\,A_x\right]\right]~.
\end{align}
We aim to show that $\Pr_{x\sim\Dcal_x}\left[N_{\btheta}(x)<0\,|\,A_x\right]=\Omega(1)$.
In order to do so, note that conditioning the uniform measure $\Dcal_x=\mathrm{Unif}([0,1])$ on the event $A_x$ results in a uniform measure over the (union of) segments between $-1$ labeled samples to $+1$ labeled samples that also satisfy the additional property that the neighboring sample on their left is labeled $+1$. We will show that if ${\btheta}$ is a local minimum of Problem~(\ref{eq: margin opt}), then along any such segment $N_{{\btheta}}(x)$ is linear from $-1$ to $+1$:

\begin{proposition} \label{prop: linear segment}
Let $i\in[m-1]$ be such that $x_{i-1}<x_i<x_{i+1}$ with $y_{i-1}=1,\,y_{i}=-1$ and $y_{i+1}=1$. If ${\btheta}$ is a local minimum of Problem~(\ref{eq: margin opt}), it holds that $N_{\btheta}(x_i)=-1,~N_{\btheta}(x_{i+1})=1$ and $N_{{\btheta}}(\cdot)$ is linear over $(x_i,x_{i+1})$.
\end{proposition}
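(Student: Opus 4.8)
The plan is to argue by contradiction: assuming $\btheta$ is a local minimum but the conclusion fails, I will exhibit an explicit perturbation $\btheta'$ with $\norm{\btheta'-\btheta}$ arbitrarily small that still interpolates all the data (i.e.\ satisfies $y_kN_{\btheta'}(x_k)\ge 1$ for all $k$) yet has strictly smaller norm $\norm{\btheta'}^2<\norm{\btheta}^2$, contradicting local minimality. The starting observation is that between the two consecutive samples $x_i$ and $x_{i+1}$ there are no data points, so the only constraints active ``near'' this interval come from $x_{i-1},x_i,x_{i+1}$ (with $y_{i-1}=y_{i+1}=1$, $y_i=-1$); hence $N_\btheta$ restricted to $[x_i,x_{i+1}]$ is a free piecewise-linear curve subject only to $N_\btheta(x_i)\le -1$ and $N_\btheta(x_{i+1})\ge 1$. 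Perturbing only neurons whose ``kink'' location $-b_j/w_j$ lies strictly inside $(x_i,x_{i+1})$ changes $N_\btheta$ only on that interval (and leaves the value and one-sided derivatives at the endpoints, hence the classification at all samples, intact up to the endpoint values which we control separately), so such perturbations are ``local'' in the required sense.

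The core is a case analysis on the shape of $N_\btheta$ over $[x_i,x_{i+1}]$, organized by the number of breakpoints of $N_\btheta$ in the open interval. \textbf{Case A: $N_\btheta$ is affine on $[x_i,x_{i+1}]$ but $N_\btheta(x_i)<-1$ or $N_\btheta(x_{i+1})>1$.} Then I can rescale/shift the relevant neurons to pull the endpoint value toward $-1$ (resp.\ $+1$); since the ReLU network norm is (up to the standard $|v_j|\norm{(\bw_j,b_j)}$ bookkeeping) monotone in the steepness and height of the affine piece, reducing the ``excess'' strictly decreases $\norm{\btheta}^2$ — these are the bottom-row cases of Figure~\ref{fig:tempered_1d}. \textbf{Case B: $N_\btheta$ has at least one breakpoint in $(x_i,x_{i+1})$.} A breakpoint means some neuron $j$ has its kink at a point $x^*\in(x_i,x_{i+1})$ contributing a ``bend'' to the curve. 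If $N_\btheta$ is convex there with a downward bend, or concave with an upward bend, or simply non-monotone, I can straighten the curve into the affine interpolant from $N_\btheta(x_i)$ to $N_\btheta(x_{i+1})$ by zeroing/merging the offending neurons, which is a strict norm decrease because a chord has smaller ``slack-weighted'' cost than a genuinely bent curve through the same endpoints while still respecting $N_\btheta(x_i)\le -1\le 1\le N_\btheta(x_{i+1})$. These are the top-row cases of the figure; the claim reduces to checking that in each local configuration a two-neuron perturbation suffices, which is exactly what the figure illustrates.

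Once Cases A and B are excluded, $N_\btheta$ must be affine on $[x_i,x_{i+1}]$ with $N_\btheta(x_i)=-1$ and $N_\btheta(x_{i+1})=1$, which is the proposition. The main obstacle I anticipate is making the norm-decrease quantitative and genuinely local: I need to confirm that (i) the perturbation really can be taken to alter only two neurons (handling the possibility that many neurons share a kink inside the interval, in which case one groups them and treats the aggregate), (ii) the perturbation is small in $\btheta$-space — which forces care when a neuron must be ``removed,'' since sending $v_j\to 0$ is a large move unless $\norm{(\bw_j,b_j)}$ is already small; the fix is that such a neuron contributes positively to $\norm{\btheta}^2$ so a partial move in its direction already decreases the norm, and one can iterate, or invoke that a local min cannot have \emph{any} such descent direction, and (iii) nothing outside $[x_i,x_{i+1}]$ breaks, which holds because the perturbed neurons are inactive on all samples (their kinks lie strictly between two samples with no sample in between) except through the endpoint constraints at $x_i,x_{i+1}$, which we have explicitly maintained. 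I would also need the event $E_x$ (no collisions, so $x_{i-1}<x_i<x_{i+1}$ strictly) to ensure the interval is nondegenerate, which is already conditioned on.
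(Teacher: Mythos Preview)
Your overall architecture (contradiction via norm-decreasing feasible perturbation, organized by the shape of $N_\btheta$ on $[x_i,x_{i+1}]$) matches the paper. But the central mechanical claim you rely on is false, and without it the perturbations you describe do not stay feasible.

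You assert that perturbing a neuron whose kink $-b_j/w_j$ lies in $(x_i,x_{i+1})$ ``changes $N_\btheta$ only on that interval.'' It does not: the neuron $v_j\sigma(w_jx+b_j)$ is active on an entire half-line (either $\{x>-b_j/w_j\}$ or $\{x<-b_j/w_j\}$), so perturbing it alters $N_\btheta$ at \emph{every} sample on that side, not just at $x_i$ and $x_{i+1}$. Consequently your ``straighten to the chord'' move in Case~B, and your ``pull the endpoint toward $\pm 1$'' move in Case~A, will in general break the margin constraints at many other $x_k$. Your point~(iii) explicitly claims these neurons are ``inactive on all samples,'' which is exactly backwards.

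The paper's fix, which is the missing idea, is to always perturb a \emph{pair} of neurons: one neuron $j_1$ with $v_{j_1}w_{j_1}>0$ and one $j_2$ with $v_{j_2}w_{j_2}<0$, chosen so that the change to $j_2$ exactly absorbs the effect of the $j_1$-perturbation beyond $j_2$'s kink (see the explicit formulas for $N_{\btheta_\delta}$ in the paper). Crucially, $j_2$ need \emph{not} have its kink inside $(x_i,x_{i+1})$ --- in several cases it is the first concave-inducing neuron to the right, possibly far away --- and one must then separately verify the margin at any samples lying between the two kinks (which the paper does via minimality of $j_2$). The norm decrease then comes from a first-order computation showing the $O(\delta)$ term has strictly positive sign, using $v_{j_1}w_{j_1}\cdot v_{j_2}w_{j_2}<0$ and the sign pattern of the biases. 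Your proposal never identifies this pairing-and-cancellation structure, and the single-neuron or ``zero out the offending neuron'' moves you suggest cannot be made both local in $\btheta$ and feasible for all constraints simultaneously.
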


In particular, the proposition above shows that
\[
\Pr_{x\sim\Dcal_x}\left[N_{\btheta}(x)<0\,|\,A_x\right]=\frac{1}{2}~,
\]
which plugged into \eqref{eq: total prob lower} gives
\[
\E_{S_y\sim\Dcal_y^{m}}\left[\Pr_{x\sim\Dcal_x}\left[N_{\btheta}(x)<0\right]\right]\geq\frac{p}{8}~.
\]
Moreover, noting that any local minimum is in particular a KKT point,
we saw in the proof of \thmref{thm: one dim kkt tempered} that under the event $E_x$, changing a single label $y_l,~l\in[m]$ can change the test error by at most $O(d_{\max})=O(\log(m/\delta)/m)$. By McDiarmid's inequality this finishes the proof of the desired lower bound.

As for the upper bound, for $x\in(0,1)$ we denote by $B_x$ the event in which $y_{i_x}=1=y_{i_x+1}=1$ --- namely, $x$ is between two positively labeled samples. Note that for any $x$ we have $\Pr_{S_y\sim\Dcal_y^m}[B_x]=(1-p)^2\geq 1-2p\implies\Pr_{S_y\sim\Dcal_y^m}[B_x^c]\leq2p$. 
We will show that if ${\btheta}$ is a local minimum of the margin maximization problem, then $B_x$ implies that $N_{\btheta}(x)\geq0$.

\begin{proposition} \label{prop: ++ segment}
Let $i\in[m-1]$ be such that $y_{i}=y_{i+1}=1$, and let $x\in[x_{i},x_{i+1}]$. If ${\btheta}$ is a local minimum of Problem~(\ref{eq: margin opt}), it holds that $N_{\btheta}(x)\geq0$.
\end{proposition}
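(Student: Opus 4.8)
\textbf{Proof proposal for Proposition~\ref{prop: ++ segment}.}

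The plan is to argue by contradiction: suppose $\btheta$ is a local minimum of Problem~(\ref{eq: margin opt}) but there exists a point $x^\star\in[x_i,x_{i+1}]$ with $N_{\btheta}(x^\star)<0$, and show that one can locally perturb $\btheta$ to a feasible $\btheta'$ with strictly smaller norm, contradicting local minimality. The first step is structural: since each neuron $v_j\sigma(w_j x+b_j)$ is piecewise linear with a single breakpoint, $N_{\btheta}$ restricted to $[x_i,x_{i+1}]$ is continuous piecewise linear, and by primal feasibility (\eqref{eq:prim feas}) together with $y_i=y_{i+1}=1$ we have $N_{\btheta}(x_i)\geq1$ and $N_{\btheta}(x_{i+1})\geq1$. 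So if $N_{\btheta}$ dips below zero inside the segment, it must be strictly non-monotone there, and in particular there is a local minimum of $N_{\btheta}|_{[x_i,x_{i+1}]}$ attained at an interior breakpoint, contributed by a neuron whose kink points ``downward'' in the relevant direction. I would enumerate which neurons are ``active'' (i.e. have their breakpoint inside $(x_i,x_{i+1})$ and bend the function downward toward the interior) — this is the analogue of the case analysis depicted in Figure~\ref{fig:tempered_1d}, but now for the ``$++$'' configuration, where the situation is simpler because there is no $-1$-labeled sample forcing the function down.

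The core of the argument is the local perturbation. Having isolated a neuron (or a minimal pair of neurons) responsible for the downward bend over $[x_i,x_{i+1}]$, I would modify its parameters so as to flatten the dip while not decreasing $N_{\btheta}$ at any training point — crucially, any training point $x_k$ with $x_k\le x_i$ or $x_k\ge x_{i+1}$ lies outside the open segment, so it is unaffected if the perturbation is supported on $(x_i,x_{i+1})$; one must only check the endpoints $x_i,x_{i+1}$ themselves, where $N_{\btheta}\ge1$ with slack or the neuron is inactive. The perturbation should be chosen to decrease $|v_j|$, $|w_j|$ and/or $|b_j|$ — for instance, pushing a breakpoint out of the segment, or reducing the magnitude of an offending neuron's contribution — thereby strictly reducing $\norm{\btheta}^2$ while maintaining $y_k N_{\btheta'}(x_k)\ge1$ for all $k$. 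Because the breakpoint data and sign pattern are locally constant under small perturbations, this can be done within any neighborhood of $\btheta$, contradicting that $\btheta$ is a local minimum. One subtlety to handle carefully: it is conceivable that \emph{no single} neuron creates the dip but rather a cancellation among several — in that case I would argue that among neurons active on the segment, the total positive-slope contribution and total negative-slope contribution can be rebalanced (scaling down the pair that cancels) to the same effect, again strictly reducing the norm.

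I expect the main obstacle to be making the case analysis genuinely exhaustive and the perturbation uniformly norm-decreasing across all cases, including degenerate ones — e.g. a breakpoint coinciding with $x_i$ or $x_{i+1}$, a neuron with $w_j=0$ (pure bias), or multiple breakpoints stacked inside the segment. The cleanest route is probably to reduce to the single-breakpoint-in-segment picture by first showing (via the same norm-reduction idea) that at a local minimum $N_{\btheta}$ cannot have two interior breakpoints within a ``$++$'' gap without one of them being removable, so that the only remaining configurations are exactly those analogous to the bottom row of Figure~\ref{fig:tempered_1d}; for each, the dashed green perturbation of Figure~\ref{fig:tempered_1d} gives an explicit feasible, strictly-smaller-norm competitor. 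Once this is done, $N_{\btheta}(x)\ge0$ on every such segment follows, and combined with $\Pr_{S_y}[B_x^c]\le 2p$ this yields the claimed $O(p)$ upper bound on $\E_{S_y}[\Pr_x[N_{\btheta}(x)\le0]]$, with McDiarmid providing the high-probability statement exactly as in the lower-bound argument.
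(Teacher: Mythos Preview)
Your approach is correct and is the same strategy the paper uses: assume a dip below zero, locate offending neurons, and exhibit a feasible perturbation of strictly smaller norm. The paper's execution, however, is much shorter than what you sketch. It does not carry out a fresh case analysis for the ``$++$'' configuration; instead it observes that the situation is \emph{identical} to Case~(1) in the proof of \lemref{lem: is linear} (inside \propref{prop: linear segment}). Concretely: if $N_{\btheta}$ dips below $0$ on $[x_i,x_{i+1}]$ while $N_{\btheta}(x_i),N_{\btheta}(x_{i+1})\ge 1$, then somewhere the slope increases, and by \eqref{eq: N'} this forces a neuron $j_1$ with $v_{j_1}w_{j_1}>0$ and breakpoint in the segment; $j_2$ is taken to be the next neuron (in breakpoint order) with $v_{j_2}w_{j_2}<0$. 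The explicit two-neuron perturbation \eqref{eq: delta network case 1} and the norm computation \eqref{eq: param difference case 1} then apply verbatim.

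This slope-change viewpoint is what dissolves the concerns you flag. You never need to identify ``the'' neuron(s) causing the dip or worry about cancellation among several --- any single slope-increase witness $j_1$ suffices, and one always exists by the derivative formula. Likewise there is no need to first reduce to a single interior breakpoint. One small correction: the relevant pictures in \figref{fig:tempered_1d} are the \emph{top} row (Cases~(1)--(3) of \lemref{lem: is linear}), not the bottom row, which illustrates \lemref{lem: if linear then -1+1}. Finally, the last sentence of your proposal (about $\Pr[B_x^c]\le 2p$ and McDiarmid) belongs to the proof of \thmref{thm: one dim local tempered}, not to the proposition itself.
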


In particular, the proposition above shows that
\[
\E_{S_y\sim\Dcal_y^{m},x\sim\Dcal_x}\left[\one{N_{\btheta}(x)<0}|B_x\right]=0~,
\]
so we get
\begin{align*}
\E_{S_y\sim\Dcal_y^{m}}\left[\Pr_{x\sim\Dcal_x}\left[N_{\btheta}(x)<0\right]\right]
&=~\E_{S_y\sim\Dcal_y^{m},x\sim\Dcal_x}\left[\one{N_{\btheta}(x)<0}\right]
\\
&=~\E_{S_y\sim\Dcal_y^{m},x\sim\Dcal_x}\left[\one{N_{\btheta}(x)<0}|B_x\right]\cdot\Pr_{S_y\sim\Dcal_y^{m},x\sim\Dcal_x}[B_x]
\\
&+~\E_{S_y\sim\Dcal_y^{m},x\sim\Dcal_x}\left[\one{N_{\btheta}(x)<0}|B_x^c\right]\cdot\Pr_{S_y\sim\Dcal_y^{m},x\sim\Dcal_x}[B_x^c]
\\
&\leq~ 0+1\cdot 2p=2p
~.
\end{align*}
As in the lower bound proof, recalling that any local minimum is in particular a KKT point, we saw in the proof of \thmref{thm: one dim kkt tempered} that under the event $E_x$, changing a single label $y_l,~l\in[m]$ can change the test error by at most $O(d_{\max})=O(\log(m/\delta)/m)$. Hence applying McDiarmid's inequality proves the upper bound thus finishing the proof.

\begin{proof}[Proof of \propref{prop: linear segment}]
    
Throughout the proof we fix $i\in[m]$ for which the conditions described in the proposition hold, and we assume without loss of generality that the neurons are ordered with respect to their activation point: $-\frac{b_1}{w_1}\leq-\frac{b_2}{w_2}\leq\dots\leq-\frac{b_n}{w_n}$.
We will make frequent use of the following simple observation. For differentiable $x$ we have
\begin{equation} \label{eq: N'}
N'_{\btheta}(x)=\sum_{j=1}^{n}v_jw_j\cdot\underset{\geq 0}{\underbrace{ \one{w_j x+b_j\geq 0}}}~,
\end{equation}
so if $N'_{\btheta}(z_1)<N'_{\btheta}(z_2)$ for some $z_1<z_2$, then there must exist $j\in[n]$ with $v_j w_j>0$ and $z_1<-\frac{b_j}{w_j}< z_2$. Similarly, $N'_{\btheta}(z_1)>N'_{\btheta}(z_2)$ implies that there exists $j\in[n]$ with $v_j w_j<0,~z_1<-\frac{b_j}{w_j}< z_2$.

We split the proof of \propref{prop: linear segment} into two lemmas.

\begin{lemma} \label{lem: is linear}
$N_{{\btheta}}(\cdot)$ is linear over $(x_i,x_{i+1})$.
\end{lemma}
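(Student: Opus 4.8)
The plan is to prove the contrapositive by a local norm-reduction argument: assuming $N_{\btheta}$ is not affine on $(x_i,x_{i+1})$, I would construct, for every $r>0$, a parameter vector $\btheta'$ with $\norm{\btheta'-\btheta}<r$ that still satisfies all the margin constraints $y_k N_{\btheta'}(x_k)\ge 1$ yet has $\norm{\btheta'}^2<\norm{\btheta}^2$, contradicting the assumed local minimality of $\btheta$. First I would observe, via \eqref{eq: N'}, that the one-sided slope of $N_{\btheta}$ is piecewise constant and changes only at the neuron activation points $-b_j/w_j$; hence non-affineness on $(x_i,x_{i+1})$ forces the existence of (at least) one neuron $j_0$ with $-b_{j_0}/w_{j_0}=:a\in(x_i,x_{i+1})$ at which the aggregate slope $\sum_{j:\,-b_j/w_j=a}v_jw_j$ is nonzero. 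I would work with the \emph{leftmost} such breakpoint $a$, which reduces the general situation to the ``single non-linearity'' picture of \figref{fig:tempered_1d}. Since $(x_i,x_{i+1})$ contains no data points, the only margin constraints that a perturbation localized near this interval can tighten are those at $x_{i-1},x_i,x_{i+1},x_{i+2}$, and primal feasibility (\eqref{eq:prim feas}) already pins down $N_{\btheta}(x_{i-1})\ge 1$, $N_{\btheta}(x_i)\le -1$, $N_{\btheta}(x_{i+1})\ge1$.

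Next I would enumerate the possible local shapes of the graph of $N_{\btheta}$ at the kink $a$. Using $N_{\btheta}(x_i)\le-1$ and $N_{\btheta}(x_{i+1})\ge1$, one checks that up to the reflection symmetry $x\mapsto -x$ (which swaps the roles of $x_i,x_{i+1}$ and of the left/right neighbours) there are only three cases --- the three blue configurations in the top row of \figref{fig:tempered_1d} --- roughly: a convex kink whose vertex dips below $0$, a convex kink whose graph stays at or above $0$, and a concave kink. For each case I would write down an explicit perturbation that alters exactly two neurons: one decreases the magnitude of the ``over-bent'' piece (shrinking $|v_{j_0}|$ and/or $|w_{j_0}|$, which by itself already lowers the squared norm), while the second neuron is adjusted so that the net change of $N_{\btheta}$ is confined to $(x_i,x_{i+1})$, replacing the graph there by a straighter segment. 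The label pattern $y_{i-1}=y_{i+1}=1$, $y_i=-1$ is precisely what makes the resulting $\btheta'$ feasible: near $x_{i+1}$ the modification only raises $N_{\btheta}$ (helping a $+1$ label), near $x_i$ it keeps $N_{\btheta}(x_i)\le -1$, and near $x_{i-1}$ it keeps $N_{\btheta}(x_{i-1})\ge 1$; in the convex cases straightening removes strictly positive ``bending'' and hence strictly decreases $\norm{\cdot}^2$, while in the concave case pushing down the portion of the graph lying above level $1$ does the same.

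I expect the bulk of the work --- and the main obstacle --- to be the bookkeeping in this case analysis: (i) confirming that the three configurations are genuinely exhaustive once one accounts for all four sign patterns of $(v_{j_0},w_{j_0})$ and for the possibility that several neurons share the breakpoint $a$ (so the ``two-neuron'' perturbation must be phrased in terms of the aggregate slope jump rather than a single neuron), and (ii) verifying, in each configuration, that the proposed perturbation is simultaneously feasible for all $m$ constraints, strictly norm-decreasing, and of arbitrarily small magnitude. A minor additional point is that treating only the leftmost breakpoint already yields the contradiction, so no separate argument is needed when $(x_i,x_{i+1})$ contains several kinks. Combined with the companion lemma that further gives $N_{\btheta}(x_i)=-1$ and $N_{\btheta}(x_{i+1})=1$ (the bottom-row cases of \figref{fig:tempered_1d}, handled by an analogous perturbation argument), this establishes \propref{prop: linear segment}.
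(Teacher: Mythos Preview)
Your overall strategy---contrapositive via a two-neuron perturbation that strictly reduces $\norm{\btheta}^2$ while maintaining feasibility---is exactly what the paper does. The organization of the case split differs (the paper splits on slope patterns: $0<N'_\btheta(z_1)<N'_\btheta(z_2)$; $N'_\btheta(z_1)>N'_\btheta(z_2)$ with $N'_\btheta(z_1)>0$; and a single kink with $N'_\btheta\le 0$ on the left piece and $>0$ on the right), but this is cosmetic.

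There is, however, a genuine gap in your plan. You claim the second neuron can be chosen so that ``the net change of $N_{\btheta}$ is confined to $(x_i,x_{i+1})$'', and hence that only the constraints at $x_{i-1},x_i,x_{i+1},x_{i+2}$ can tighten. This confinement is generally \emph{not} achievable. In the convex cases the paper's compensating neuron $j_2$ is the \emph{first} neuron to the right of $j_1$ with $v_{j_2}w_{j_2}<0$, and its breakpoint $-b_{j_2}/w_{j_2}$ may lie well beyond $x_{i+1}$; the perturbation then changes $N_\btheta$ on all of $(x_i,-b_{j_2}/w_{j_2})$. Symmetrically, in the concave case the paper's $j_1$ has breakpoint in $(x_{i-1},x_i)$, so the change spills into $(x_{i-1},x_i)$. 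When the interval contains a single kink (your ``leftmost breakpoint'' with no other breakpoints in $(x_i,x_{i+1})$), there is simply no second existing neuron inside the interval to pair with, so confinement is impossible with a two-neuron perturbation of existing neurons.

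Consequently, feasibility must be argued at potentially many data points, not four. The paper's mechanism for this is: in the convex cases, by minimality of $j_2$ every sample $x_k$ with $x_k\in(-b_{j_1}/w_{j_1},-b_{j_2}/w_{j_2})$ has $y_k=+1$, and on that stretch $N_{\btheta_\delta}\ge N_{\btheta}$; in the concave case the only affected data point is $x_i$, and there $N_{\btheta_\delta}(x_i)<N_{\btheta}(x_i)\le -1$. You should drop the confinement claim and replace it with this kind of argument---identify where the compensating neuron actually sits, determine the sign of $N_{\btheta_\delta}-N_{\btheta}$ on the affected stretch, and match it against the labels there.
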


\begin{proof}
Recall that $N_{\btheta}(x_i)\leq-1$ and $N_{\btheta}(x_{i+1})\geq1$, so $N_{\btheta}$ increases along the segment $(x_i,x_{i+1})$. Thus, $N_{{\btheta}}(\cdot)$ is \emph{not} linear over $(x_i,x_{i+1})$ only if (at least) one of the following occur: (1) There exist $z_1,z_2\in(x_i,x_{i+1})$ such that $z_1<z_2$ and $0<N'_{\btheta}(z_1)<N'_{\btheta}(z_2)$; (2) there exist $z_1,z_2\in(x_i,x_{i+1})$ such that $z_1<z_2,~N'_{\btheta}(z_1)>0$ and $N'_{\btheta}(z_1)>N'_{\btheta}(z_2)$; or (3) there exists a single $z\in(x_i,x_{i+1})$ at which $N_{\btheta}$ is non-differentiable, such that $N'_{{\btheta}}|_{(x_i,z)}\leq0$ and $N'_{{\btheta}}|_{(z,x_{i+1})}>0$.
We will show either of these contradict the assumption that ${\btheta}$ is a local optimum of the margin maximization problem.

\paragraph{Case (1).}
The assumption on $z_1,z_2$ implies that there exists $j_1\in[n]$ such that $-\frac{b_{j_1}}{w_{j_1}}\in(z_1,z_2)$ and $v_{j_1}w_{j_1}>0$. 
Let $j_2>j_1$ be the minimal index $j\in\{j_1+1,\dots,n\}$ for which $v_{j}w_j<0$.\footnote{We can assume that such $j_2$ exists, by otherwise discarding $j_2$ in the rest of the proof which would work verbatim. Notably, the only case in which there does not exist such $j_2$ is when $x_i$ is the last sample to be labeled $-1$. \label{foot: why j exists}}
For some small $\delta>0$, consider the perturbed network
\begin{align}
\label{eq: delta network case 1}
N_{{\btheta}_\delta}(x):=&\sum_{j\in[n] \setminus\{j_1,j_2\}}v_j \sigma\left(w_j\cdot x + b_j\right)
\\
&+(1-\delta)v_{j_1} \sigma\left(w_{j_1} \cdot x +\left(b_{j_1}-\frac{\delta}{1-\delta}\left(\frac{w_{j_1}b_{j_2}}{w_{j_2}}-b_{j_1}\right)\right)\right)\nonumber
\\
&+\left(1+\delta\frac{v_{j_1}w_{j_1}}{v_{j_2}w_{j_2}}\right) v_{j_2}\sigma\left(w_{j_2}\cdot x+b_{j_2}\right)~. \nonumber
\end{align}
It is clear that $\norm{{\btheta}-{\btheta}_\delta}\overset{\delta\to0}{\longrightarrow}0$, and we will show that for small enough $\delta$ the network above still satisfies the margin condition, yet has smaller parameter norm. To see why the margin condition is not violated for small enough $\delta$, notice that $N_{{\btheta}_\delta}(x)=N_{{\btheta}}(x)$ for all $x\notin(x_{i},-\frac{b_{j_2}}{w_{j_2}})$ so in particular $N_{{\btheta}_\delta}(x_l)=N_{{\btheta}}(x_l)$ for all $l\in[i]$, as well as for all $x_l\geq-\frac{b_{j_2}}{w_{j_2}}$. Furthermore, by minimality of $j_2$ we note that there cannot exist $y_k=-1$ for $k$ such that
$x_k\in(-\frac{b_{j_1}}{w_{j_1}}, -\frac{b_{j_2}}{w_{j_2}})$. A direct computation gives that $N_{\btheta_\delta}\geq N_{\btheta}$ along this segment, so overall the margin condition is indeed satisfied for the entire sample.
As to the parameter norm, we have
\begin{align*}
    \norm{\btheta_\delta}^2
    =&\sum_{j\in[n]\setminus\{j_1,j_2\}}(v_j^2+w_j^2+b_j^2)
    +(1-\delta)^2 v_{j_1}^2+w_{j_1}^2+\left(b_{j_1}-\frac{\delta}{1-\delta}\left(\frac{w_{j_1}b_{j_2}}{w_{j_2}}-b_{j_1}\right)\right)^2
    \\
    &+\left(1+\delta\frac{v_{j_1}w_{j_1}}{v_{j_2}w_{j_2}}\right)^2 v_{j_2}^2+w_{j_2}^2+b_{j_2}^2
    \\
=&\sum_{j\in[n]\setminus\{j_1,j_2\}}(v_j^2+w_j^2+b_j^2)
+(1-2\delta)v_{j_1}^2+w_{j_1}^2
+b_{j_1}^2-\frac{2\delta}{1-\delta}b_{j_1}\left(\frac{w_{j_1}b_{j_2}}{w_{j_2}}-b_{j_1}\right)
\\
&+\left(1+2\delta\frac{v_{j_1}w_{j_1}}{v_{j_2}w_{j_2}}\right)v_{j_2}^2+w_{j_2}^2+b_{j_2}^2
+O(\delta^2)
\\
=&\sum_{j\in[n]}(v_j^2+w_j^2+b_j^2)-2\delta\left(v_{j_1}^2+\frac{b_{j_1}}{1-\delta}\left(\frac{w_{j_1}b_{j_2}}{w_{j_2}}-b_{j_1}\right)-\frac{v_{j_1}w_{j_1}}{v_{j_2}w_{j_2}}\right)
+O(\delta^2)
~.
\end{align*}
Thus,
\begin{align}
    &\norm{{\btheta}}^2-\norm{{\btheta}_\delta}^2
=2\delta\left(v_{j_1}^2-\frac{v_{j_1}w_{j_1}v_{j_2}}{w_{j_2}}-\frac{b_{j_1}}{1-\delta}\left(b_{j_1}-\frac{w_{j_1}b_{j_2}}{w_{j_2}}\right)\right)+O(\delta^2) \nonumber
\\
\implies &\norm{{\btheta}}^2=\norm{{\btheta}_\delta}^2+2\delta\left(v_{j_1}^2-\frac{v_{j_1}w_{j_1}v_{j_2}}{w_{j_2}}-\frac{b_{j_1}}{1-\delta}\left(b_{j_1}-\frac{w_{j_1}b_{j_2}}{w_{j_2}}\right)\right)+O(\delta^2)
~.
\label{eq: param difference case 1}
\end{align}
By construction, $v_{j_1}w_{j_1}>0$ and $v_{j_2}w_{j_2}<0$ hence $-\frac{v_{j_1}w_{j_1}v_{j_2}}{w_{j_2}}>0$. 
Moreover, 
recall that $j_2>j_1\implies -\frac{b_{j_2}}{w_{j_2}}\geq -\frac{b_{j_1}}{w_{j_1}}\implies \sign\left(b_{j_1}-\frac{w_{j_1}b_{j_2}}{w_{j_2}}\right)=\sign(w_{j_1})=-\sign(b_{j_1})$ where the last equality follows from the fact that $-\frac{b_{j_1}}{w_{j_1}}>0$. Hence for $\delta<1:
\sign\left(-\frac{b_{j_1}}{1-\delta}\left(b_{j_1}-\frac{w_{j_1}b_{j_2}}{w_{j_2}}\right)\right)
=-\sign(b_{j_1})\cdot\sign\left(b_{j_1}-\frac{w_{j_1}b_{j_2}}{w_{j_2}}\right)=1$.
Overall, plugging these into \eqref{eq: param difference case 1} shows that $\norm{{\btheta}}^2>\norm{{\btheta}_\delta}^2$ for small enough $\delta$, contradicting the assumption the ${\btheta}$ is a local minimum.

\paragraph{Case (2).}
Since $N_{\btheta}(x_{i-1})\geq1$ and $N_{\btheta}(x_{i})\leq-1$, $N'_{\btheta}$ must be negative somewhere along the segment $(x_{i-1},x_i)$. Yet, $N'_{\btheta}(z_1)>0$ so there must exist $j_1\in[n]$ such that $-\frac{b_{j_1}}{w_{j_1}}\in(x_{i-1},z_1)$ and $v_{j_1}w_{j_1}>0$.
Moreover, the assumption on $z_1,z_2$ implies that there exists $j_2\in[n]$ such that $-\frac{b_{j_2}}{w_{j_2}}\in(z_1,z_2)$ and $v_{j_2}w_{j_2}<0$. 
For some small $\delta>0$, consider the perturbed network
\begin{align}
\label{eq: delta network case 2}
N_{{\btheta}_\delta}(x):=&\sum_{j\in[n] \setminus\{j_1,j_2\}}v_j \sigma\left(w_j\cdot x + b_j\right)
\\
&+(1-\delta)v_{j_1} \sigma\left(w_{j_1} \cdot x + b_{j_1}\right)
+v_{j_2} \sigma\left(\left(w_{j_2}+\frac{\delta v_{j_1}w_{j_1}}{v_{j_2}}\right) \cdot x + \left(b_{j_2}+\frac{\delta v_{j_1}b_{j_1}}{v_{j_2}}\right)\right)~. \nonumber
\end{align}
It is clear that $\norm{{\btheta}-{\btheta}_\delta}\overset{\delta\to0}{\longrightarrow}0$, and we will show that for small enough $\delta$ the network above still satisfies the margin condition, yet has smaller parameter norm. To see why the margin condition is not violated for small enough $\delta$, notice that $N_{{\btheta}_\delta}(x)=N_{{\btheta}}(x)$ for all $x\notin(x_{i-1},x_{i+1})$ so in particular $N_{{\btheta}_\delta}(x_l)=N_{{\btheta}}(x_l)$ for all $l\neq i$. Furthermore, a direct computation yields
$N_{{\btheta}_\delta}(x_{i})<N_{{\btheta}}(x_{i})\leq1$. As to the parameter norm, by a similar computation to that leading up to \eqref{eq: param difference case 1} we get that
\begin{equation}\label{eq: param difference case 2}
\norm{{\btheta}}^2=\norm{{\btheta}_\delta}^2+2\delta\left(v_{j_1}^2-\frac{v_{j_1}w_{j_1}w_{j_2}}{v_{j_2}}-\frac{v_{j_1}b_{j_1}b_{j_2}}{v_{j_2}}\right)+O(\delta^2)
~.
\end{equation}
By construction, $v_{j_1} w_{j_1}>0$ and $v_{j_2} w_{j_2}<0$ hence $-\frac{v_{j_1}w_{j_1}w_{j_2}}{v_{j_2}}>0$. Moreover, $-\frac{b_{j_1}}{w_{j_1}}>0$ and $-\frac{b_{j_2}}{w_{j_2}}>0$ so we also have $-\frac{v_{j_1}b_{j_1}b_{j_2}}{v_{j_2}}>0$. Hence, \eqref{eq: param difference case 2} shows that $\norm{{\btheta}}^2>\norm{{\btheta}_\delta}^2$ for small enough $\delta$, contradicting the assumption the ${\btheta}$ is a local minimum.

\paragraph{Case (3).}

The assumption implies that there exists $j_1\in[n]$ such that $-\frac{b_{j_1}}{w_{j_1}}=z\in(x_i,x_{i+1}),~v_{j_1}w_{j_1}>0$ and $-\frac{b_{j_1+1}}{w_{j_1+1}}\geq x_{i+1}$. Let $j_2>j_1$ be the minimal index $j\in\{j_1+1,\dots,n\}$ such that $v_j w_j<0$ (see Footnote~\ref{foot: why j exists}). Consider the perturbed netowrk as in \eqref{eq: delta network case 1} and continue the proof as in Case (1) verbatim.

\end{proof}

\begin{lemma} \label{lem: if linear then -1+1}
$N_{\btheta}(x_i)=-1$ and $N_{\btheta}(x_{i+1})=1$.
\end{lemma}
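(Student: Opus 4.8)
\textbf{Proof proposal for \lemref{lem: if linear then -1+1}.}
The plan is to use \lemref{lem: is linear}, which we may assume, so that $N_{\btheta}(\cdot)$ is an affine function $x\mapsto \alpha x+\beta$ on $(x_i,x_{i+1})$ with $\alpha>0$ (since $N_{\btheta}(x_i)\leq -1<1\leq N_{\btheta}(x_{i+1})$ by primal feasibility). It remains to rule out the strict inequalities $N_{\btheta}(x_i)<-1$ and $N_{\btheta}(x_{i+1})>1$; once both are equalities the claim follows. By symmetry I would treat the two cases by essentially the same perturbation argument, so I describe $N_{\btheta}(x_{i+1})>1$ (the case $N_{\btheta}(x_i)<-1$ being analogous, reflecting the roles of the neuron activated "from the left" versus "from the right").

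First I would locate the neurons responsible for the slope on $(x_i,x_{i+1})$. Since $N_{\btheta}$ is affine there with positive slope, by \eqref{eq: N'} there is at least one index $j_1$ with $v_{j_1}w_{j_1}>0$ whose activation point $-b_{j_1}/w_{j_1}$ lies at or to the left of $x_i$ and which is "on" throughout $(x_i,x_{i+1})$; and since $N_{\btheta}(x_{i-1})\geq 1$ while $N_{\btheta}(x_i)\leq-1$, the slope must be negative somewhere in $(x_{i-1},x_i)$, which (as in Case (2) of \lemref{lem: is linear}) forces such a $j_1$ with $-b_{j_1}/w_{j_1}\in(x_{i-1},x_i)$ and $v_{j_1}w_{j_1}>0$. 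The idea is then to slightly shrink the contribution of neuron $j_1$ — e.g.\ replace $v_{j_1}$ by $(1-\delta)v_{j_1}$, possibly while compensating via a second neuron exactly as in the perturbations \eqref{eq: delta network case 1} or \eqref{eq: delta network case 2} — so that $N_{\btheta_\delta}$ decreases on $(x_i,x_{i+1})$ (and wherever else it changes), strictly decreasing the parameter norm by an $\Theta(\delta)$ amount as in \eqref{eq: param difference case 1}/\eqref{eq: param difference case 2}. Because $N_{\btheta}(x_{i+1})>1$ strictly and $N_{\btheta}(x_i)\leq -1$, for $\delta$ small enough the margin constraints at $x_i$ and $x_{i+1}$ are still satisfied (the value at $x_{i+1}$ drops but stays $\geq 1$, the value at $x_i$ only decreases), while at every other sample $x_l$ the network value is either unchanged or moved in the "safe" direction by the same sign bookkeeping used in \lemref{lem: is linear} (using the ordering of activation points and minimality of the index of the compensating neuron so that no $-1$-labeled sample lies in the affected region with the wrong sign of change). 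This contradicts local minimality of $\btheta$, hence $N_{\btheta}(x_{i+1})=1$; symmetrically $N_{\btheta}(x_i)=-1$.

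The main obstacle is the same as in \lemref{lem: is linear}: carefully choosing the compensating neuron and the exact form of the perturbation so that (i) $\norm{\btheta-\btheta_\delta}\to 0$, (ii) the parameter-norm difference is a strictly positive multiple of $\delta$ up to $O(\delta^2)$, and (iii) \emph{all} margin constraints remain valid for small $\delta$ — in particular that the only samples whose network value changes are handled correctly by the sign analysis of $v_jw_j$, $b_j$, and the positivity of the activation points (all of which lie in $(0,1)$, so $-b_j/w_j>0\Rightarrow\sign(b_j)=-\sign(w_j)$). I expect that after \lemref{lem: is linear} this is largely a matter of reusing the two perturbation templates already constructed, distinguishing whether there is a non-linearity of $N_{\btheta}$ strictly between $x_{i+1}$ and the next sample (single-neuron shrink, as in Case (3)) or not (paired perturbation, as in Case (1)/(2)); the bottom row of \figref{fig:tempered_1d} illustrates exactly the cases that must be enumerated.
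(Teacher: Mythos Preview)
Your proposal is correct and follows essentially the same approach as the paper: locate a neuron $j_1$ whose activation lies in $(x_{i-1},x_i]$ (with $v_{j_1}w_{j_1}>0$ for the case $N_\btheta(x_{i+1})>1$, and $v_{j_1}w_{j_1}<0$ for the case $N_\btheta(x_i)<-1$), pair it with a compensating neuron $j_2$ of the opposite slope sign, apply the same perturbation template as \eqref{eq: delta network case 2}, and reuse the norm computation \eqref{eq: param difference case 2} to derive a strict norm decrease. The only cosmetic difference is order: the paper first rules out $N_\btheta(x_i)<-1$ and then $N_\btheta(x_{i+1})>1$, swapping the roles of $j_1$ and $j_2$ between the two parts, whereas you treat the latter first and appeal to symmetry---but neither part actually relies on the other, so your ordering is equally valid.
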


\begin{proof}[Proof of \lemref{lem: if linear then -1+1}]
Recall that $N_{{\btheta}}(x_i)\leq -1$, so assume towards contradiction that $N_{{\btheta}}(x_i)<-1$. Note that since $N_{{\btheta}}(x_{i-1})\geq1$, $N_{{\btheta}}(\cdot)$ must decrease along $(x_{i-1},x_i)$, so there must exist $j\in[n]$ with $v_{j}w_j<0$ and $-\frac{b_{j}}{w_{j}}<x_{i}$. Denote by $j_1\in[n]$ the maximal such index.
Similarly, since $N_{{\btheta}}(x_{i+1})\geq 1$ there must exist $j_2\in[n]$ with $v_{j_2}w_{j_2}>0$ and $x_{i-1}<-\frac{b_{j_2}}{w_{j_2}}<x_{i+1}$. Consider the perturbed network
\begin{align} \label{eq: delta network -1 sink}
N_{{\btheta}_\delta}(x):=&\sum_{j\in[n] \setminus\{j_1,j_2\}}v_j \sigma\left(w_j\cdot x + b_j\right)
\\
&+(1-\delta)v_{j_1} \sigma\left(w_{j_1} \cdot x + b_{j_1}\right)
+v_{j_2}\sigma\left(\left(w_{j_2}+\frac{\delta v_{j_1}w_{j_1}}{v_{j_2}}\right) \cdot x + \left(b_{j_2}+\frac{\delta v_{j_1}b_{j_1}}{v_{j_2}}\right)\right) \nonumber
\end{align}
for some small $\delta>0$. It is clear that $\norm{{\btheta}-{\btheta}_\delta}\overset{\delta\to0}{\longrightarrow}0$, and we will show that for small enough $\delta$ the network above still satisfies the margin condition, yet has smaller parameter norm. To see why the margin condition is not violated for small enough $\delta$, first notice that $N_{{\btheta}_\delta}(x)=N_{{\btheta}}(x)$ for all $x\notin(x_{i-1},x_{i+1})$. Furthermore,
$N_{{\btheta}_\delta}(x_{i})$ is continuous with respect to $\delta$ so $N_{{\btheta}}(x_i)<-1$ implies that $N_{{\btheta}_\delta}(x_{i})<-1$ for small enough $\delta$. As to the parameter norm, by a similar computation to that leading up to \eqref{eq: param difference case 1} we get that
\begin{equation}\label{eq: param difference 1}
\norm{{\btheta}}^2=\norm{{\btheta}_\delta}^2+2\delta\left(v_{j_1}^2-\frac{v_{j_1}w_{j_1}w_{j_2}}{v_{j_2}}-\frac{v_{j_1}b_{j_1}b_{j_2}}{v_{j_2}}\right)+O(\delta^2)
~.
\end{equation}
By construction, $v_{j_1}w_{j_1}<0$ and $v_{j_2}w_{j_2} >0$ hence $-\frac{v_{j_1}w_{j_1}w_{j_2}}{v_{j_2}}>0$. Moreover, $-\frac{b_{j_1}}{w_{j_1}}$ and $-\frac{b_{j_2}}{w_{j_2}}>0$ so we also have $-\frac{v_{j_1}b_{j_1}b_{j_2}}{v_{j_2}}>0$. Hence, \eqref{eq: param difference 1} shows that $\norm{{\btheta}}^2>\norm{{\btheta}_\delta}^2$ for small enough $\delta$, contradicting the assumption the ${\btheta}$ is a local minimum.

Having proved that  $N_{{\btheta}}(x_i)= -1$, we turn to prove that  $N_{{\btheta}}(x_{i+1})=1$. Knowing that $N_{{\btheta}}(x_{i+1})\geq1$, we assume towards contradiction that $N_{{\btheta}}(x_{i+1})>1$. Recalling that $N_{{\btheta}}(x_{i-1})\geq1,~N_{{\btheta}}(x_{i})\leq1$ and that $N_{\btheta}(\cdot)$ is linear along $(x_{i},x_{i+1})$ due to \lemref{lem: is linear}, we conclude that there must exist $j_1\in[n]$ such that $v_{j_1}w_{j_1}>0$ and $x_{i-1}<-\frac{b_{j_1}}{w_{j_1}}\leq x_i$. Denote by $j_2>j_1$ the minimal index such that $v_{j_2}w_{j_2}<0$. Consider once again the perturbed network in \eqref{eq: delta network -1 sink} (only now $j_1,j_2$ are different, as we just described).
The same argument as in the previous part of the proof shows that for small enough $\delta$ the network above still satisfies the margin condition, while \eqref{eq: param difference 1} once again implies that $\norm{{\btheta}}^2>\norm{{\btheta}_\delta}^2$ for small enough $\delta$ -- contradicting the assumption the ${\btheta}$ is a local minimum.

\end{proof}

Overall, combining \lemref{lem: is linear} and \lemref{lem: if linear then -1+1} finishes the proof of \propref{prop: linear segment}.
\end{proof}

\begin{proof}[Proof of \propref{prop: ++ segment}]

The proof is essentially the same as Case (1) in the proof of \propref{prop: linear segment}.

Throughout the proof we fix $i\in[m-1]$ for which the conditions described in the proposition hold, and we assume without loss of generality that the neurons are ordered with respect to their activation point: $-\frac{b_1}{w_1}\leq-\frac{b_2}{w_2}\leq\dots\leq-\frac{b_n}{w_n}$. Moreover, as explained there as a consequence of \eqref{eq: N'}, we observe that
$N'_{\btheta}(z_1)<N'_{\btheta}(z_2)$ for some $z_1<z_2$ implies the existence of $j_1\in[n]$ with $v_{j_1}w_{j_1}>0$ and $z_1<-\frac{b_{j_1}}{w_{j_1}}< z_2$. Similarly, $N'_{\btheta}(z_1)>N'_{\btheta}(z_2)$ implies that there exists $j_2\in[n]$ with $v_{j_2}w_{j_2}<0,~z_1<-\frac{b_{j_2}}{w_{j_2}}< z_2$. With this choice of $j_1,j_2$, the proof continues as in Case (1) in the proof of \propref{prop: linear segment} verbatim.

\end{proof}

\section{Proofs of benign overfitting}

\subsection{Proof of \thmref{thm:benign max margin}}\label{appen:proof of benign max margin}

We sample a dataset $\bx_1,\dots,\bx_m\sim\Unif(\S^{d-1})$, and labels $y_1,\dots,y_m \sim \Dcal_y$, for $p
\leq c_1$ for some universal constant $c_1$. We first prove that the following properties holds with probability $> 1-\delta$:
\begin{enumerate}
    \item For every $i,j\in[m]$, $|\inner{\bx_i,\bx_j}|\leq \sqrt{\frac{2\log\left(\frac{3m^2}{\delta}\right)}{d}}$\label{enum:almost rthogonal}
    \item $\norm{XX^\top} \leq C$ where $C$ is some universal constant, and $X$ is a matrix whose rows are equal to $\bx_i$.\label{enum:data covariance}
    \item $|I_-| \leq \frac{3pm}{2}$.\label{enum:negative samples}
\end{enumerate}

\begin{lemma}\label{lem:data properties max margin benign}
    Let $\delta >0$, assume that we sample $\bx_1,\dots\bx_m\sim \Unif(\S^{d-1})$, and $y_1,\dots,y_m\sim\Dcal_y$ for $p\leq c_1$, and that $m > c_2\frac{\dlog{\frac{1}{\delta}}}{p}$, for some universal constant $c_1,c_2 >0$. Then, with probability $> 1-\delta$ properties \ref{enum:almost rthogonal}, \ref{enum:data covariance}, \ref{enum:negative samples} holds.
\end{lemma}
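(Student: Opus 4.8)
The plan is to establish each of the three properties separately via standard concentration arguments and then combine them with a union bound, choosing the universal constants so that each failure probability is at most $\delta/3$.

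\textbf{Property \ref{enum:almost rthogonal} (near-orthogonality).} For a fixed pair $i\neq j$, conditionally on $\bx_j$ the inner product $\inner{\bx_i,\bx_j}$ is distributed as a single coordinate of a uniform point on $\S^{d-1}$ (by rotational invariance), and such a coordinate is well known to be sub-Gaussian with parameter $O(1/d)$; concretely $\Pr[|\inner{\bx_i,\bx_j}| \geq t] \leq 2\exp(-dt^2/2)$. Setting $t = \sqrt{2\log(3m^2/\delta)/d}$ makes this at most $\delta/(3m^2)$ (after absorbing the factor $2$), and a union bound over the at most $m^2$ pairs (including $i=j$, which holds trivially since $\norm{\bx_i}=1$ and we may assume $d$ large) gives the claim with probability $\geq 1 - \delta/3$. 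I would either cite a standard sub-Gaussian bound for sphere marginals or give the one-line Beta/Gamma comparison.

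\textbf{Property \ref{enum:data covariance} (bounded spectral norm of $XX^\top$).} Here $X$ is $m\times d$ with i.i.d. rows uniform on $\S^{d-1}$, so $XX^\top$ is an $m\times m$ Gram matrix with ones on the diagonal and off-diagonal entries bounded by $t$ from Property \ref{enum:almost rthogonal}. On the event of Property \ref{enum:almost rthogonal}, by Gershgorin (or by writing $XX^\top = I + E$ with $\norm{E}\leq \norm{E}_F \leq \sqrt{m^2 \cdot t^2} = mt$, or more tightly $\norm{E}\leq (m-1)t$), we get $\norm{XX^\top} \leq 1 + mt = 1 + m\sqrt{2\log(3m^2/\delta)/d}$. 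Under the theorem's hypothesis $d \geq c_3 m^2 \log(m/\epsilon)\log(m/\delta)$ this is $1 + o(1) \leq C$, so no extra randomness is needed — Property \ref{enum:data covariance} follows deterministically from Property \ref{enum:almost rthogonal} together with the dimension assumption. (Alternatively one can invoke a sharper random-matrix bound of the form $\norm{XX^\top}\leq 1 + O(\sqrt{m/d})$, but the crude bound already suffices given $d\gg m^2$.)

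\textbf{Property \ref{enum:negative samples} (count of negative labels).} The quantity $|I_-| = \sum_{i=1}^m \one{y_i=-1}$ is a sum of $m$ i.i.d. Bernoulli$(p)$ variables with mean $pm$, so a multiplicative Chernoff bound gives $\Pr[|I_-| \geq \tfrac{3}{2}pm] \leq \exp(-pm/12)$. Under the hypothesis $m \geq c_2 \log(1/\delta)/p$ with $c_2$ large enough, this is at most $\delta/3$.

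\textbf{Combining.} A union bound over the (at most two) random events — Property \ref{enum:almost rthogonal} failing and Property \ref{enum:negative samples} failing, since Property \ref{enum:data covariance} is implied deterministically — yields all three simultaneously with probability $> 1 - \delta$, provided $c_1$ is small enough (it plays essentially no role here beyond ensuring $p$ is bounded) and $c_2, c_3$ are large enough absolute constants. The main thing to be careful about is the bookkeeping of constants and the logarithmic factors in $t$ so that the stated exponent $\sqrt{2\log(3m^2/\delta)/d}$ comes out exactly; none of the steps is genuinely hard, as this is a routine concentration lemma, and the only mild subtlety is confirming that the dimension assumption is strong enough to turn the Gram-matrix perturbation bound into an absolute constant $C$.
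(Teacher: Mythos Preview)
Your approach is correct and, for Properties~\ref{enum:almost rthogonal} and~\ref{enum:negative samples}, essentially identical to the paper's: the paper invokes the same sub-Gaussian tail bound for inner products on the sphere with a union bound over pairs, and uses Bernstein (rather than your multiplicative Chernoff) for the label count, which is a cosmetic difference.

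The one genuine divergence is Property~\ref{enum:data covariance}. The paper treats it as a \emph{separate} probabilistic event, invoking a Vershynin-type covariance concentration lemma (\lemref{lem:covariance matrix}) to bound $\norm{XX^\top - I}$ directly and then union-bounding three events. You instead derive it \emph{deterministically} from Property~\ref{enum:almost rthogonal} via a Gershgorin/Frobenius bound on the off-diagonal, importing the dimension hypothesis $d \gtrsim m^2\log(m/\delta)$ from Theorem~\ref{thm:benign max margin}. Your route is more elementary---no random-matrix lemma needed---and reduces the union bound to two events; the price is that you rely on an assumption not stated in the lemma itself (only in the enclosing theorem), so as a proof of the lemma \emph{as written} it is slightly out of scope, though entirely adequate for its sole use in proving Theorem~\ref{thm:benign max margin}. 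The paper's route is nominally self-contained, though in fact its concentration bound also needs $d$ large relative to $m$ to yield a universal constant $C$, so the difference is more one of packaging than of substance.
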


\begin{proof}
    By \lemref{lem:inner prod unit vectors} we have that $\Pr\left[|\bx_i^\top\bx_j| \geq \sqrt{\frac{2\log\left(\frac{1}{\delta'}\right)}{d}}\right] \leq \delta'$. Take $\delta' = \frac{\delta}{3m^2}$, and use union bound over all $i,j\in[m]$ with $i\neq j$. This shows that for every $i\neq j$ we have: 
    \[
    \Pr\left[|\bx_i^\top\bx_j| \geq \sqrt{\frac{2\log\left(\frac{3m^2}{\delta}\right)}{d}}\right] \leq \frac{\delta}{3}~.
    \]
    This proves Property \ref{enum:almost rthogonal}. Next, set $X$ to be the matrix whose rows are equal to $\bx_i$. By \lemref{lem:covariance matrix} there is a constant  $c'>0$ such that:
    \begin{equation}\label{eq:covariance prob bound}
        \Pr\left[\norm{XX^\top - I} \geq \frac{c'}{d}\left(\sqrt{\frac{d+\log\left(\frac{3}{\delta}\right)}{m}} + \frac{d+\log\left(\frac{3}{\delta}\right)}{m}\right)\right] \leq \frac{\delta}{3}~.        
    \end{equation}
    We can also bound $\norm{X X^\top} \leq \norm{I} +  \norm{XX^\top - I} \leq 1 + \norm{XX^\top - I}$. Combining both bounds and using the assumption that $m \geq \dlog{\frac{3}{\delta}}$ we get that there is a universal constant $C>0$ such that $\Pr\left[\norm{X X^\top} > C\right] \leq \frac{\delta}{3}$. This proves Property \ref{enum:data covariance}.
    
    Finally, using Bernstein's inequality over the choice of the labels $y_i$ have that:
    \[
        \Pr\left[|I_-| \geq \frac{3pm}{2}\right] \leq \exp\left(-\frac{p^2m^2/4}{mp(1-p) + mp/6}\right) \leq \exp\left(-\frac{pm}{5}\right)~,
    \]
    where we used that $p\leq 1$. Hence, if $m\geq c_2\frac{\dlog{\frac{1}{\delta}}}{p}$ for some universal constant $c_2 \geq 0$, then $|I_-| \leq \frac{3pm}{2}$.
    Applying union over those three arguments proves the lemma.
\end{proof}

From now on we condition on the event that properties \ref{enum:almost rthogonal}, \ref{enum:data covariance}, \ref{enum:negative samples} hold, and our bounds will depend on the probability of this event.

In the following lemma we show that if $\btheta$ converges to a solution of the max margin problem and the bias terms are relatively small, then the norm of $\btheta$ is relatively large:
\begin{lemma}\label{lem:max margin large bias}
    Assume that $m \geq c_2 \dlog{\frac{3}{\delta}}$ and that $\btheta = (\bw_j, v_j, b_j)_{j=1}^n$ is a solution to the max margin problem of \eqref{eq: margin opt} with $\sum_{j=1}^n v_j\sigma(b_j) \leq \frac{1}{2}$. Then, $\sum_{j=1}^n \norm{\bw_j}^2 + v_j^2 + b_j^2 \geq C\sqrt{{|I_+|}}$ where $C$ is some universal constant.
\end{lemma}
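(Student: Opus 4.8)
The plan is to combine primal feasibility on the positively labeled points with the assumed smallness of $\sum_{j} v_j\sigma(b_j)$ to force the ``non-bias part'' of the network to be large at every $\bx_i$ with $i\in I_+$, and then convert this into a lower bound on $\norm{\btheta}^2$ using the spectral bound $\norm{XX^\top}\le C$ from Property~\ref{enum:data covariance}.

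First I would use that $\sigma$ is $1$-Lipschitz, so $|\sigma(\bw_j^\top\bx_i+b_j)-\sigma(b_j)|\le|\bw_j^\top\bx_i|$ for every neuron $j$ and every sample $\bx_i$; multiplying by $|v_j|$ and summing over $j$ gives $N_{\btheta}(\bx_i)\le\sum_j v_j\sigma(b_j)+\sum_j|v_j|\,|\bw_j^\top\bx_i|$. Since $\btheta$ is feasible for Problem~(\ref{eq: margin opt}) we have $N_{\btheta}(\bx_i)\ge1$ for $i\in I_+$, and by hypothesis $\sum_j v_j\sigma(b_j)\le\tfrac12$; hence $\sum_{j=1}^n|v_j|\,|\bw_j^\top\bx_i|\ge\tfrac12$ for every $i\in I_+$.

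Next I would sum this inequality over $i\in I_+$ and swap the order of summation. For a fixed $j$, Cauchy--Schwarz gives $\sum_{i\in I_+}|\bw_j^\top\bx_i|\le\sqrt{|I_+|}\,\big(\sum_{i\in[m]}(\bw_j^\top\bx_i)^2\big)^{1/2}=\sqrt{|I_+|}\,\norm{X\bw_j}\le\sqrt{|I_+|}\,\norm{X}\,\norm{\bw_j}$, and since $\norm{X}^2=\norm{XX^\top}\le C$ by Property~\ref{enum:data covariance}, this is at most $\sqrt{C|I_+|}\,\norm{\bw_j}$. Therefore $\tfrac12|I_+|\le\sqrt{C|I_+|}\sum_j|v_j|\,\norm{\bw_j}$, i.e.\ $\sum_j|v_j|\,\norm{\bw_j}\ge\tfrac{1}{2\sqrt C}\sqrt{|I_+|}$. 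Finally AM--GM ($|v_j|\,\norm{\bw_j}\le\tfrac12(v_j^2+\norm{\bw_j}^2)$) upgrades this to $\sum_j(v_j^2+\norm{\bw_j}^2)\ge\tfrac{1}{\sqrt C}\sqrt{|I_+|}$, which yields the claim after absorbing constants and dropping the nonnegative term $\sum_j b_j^2$.

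I do not expect a real obstacle here; the only delicate point is the second step, namely obtaining the $\sqrt{|I_+|}$ scaling rather than a trivial $\Omega(1)$ bound. This is exactly where the near-orthogonality of the data (encoded in $\norm{XX^\top}=O(1)$) is used: keeping the sum over $i$ inside the Euclidean norm and applying $\norm{X\bw_j}\le\norm{X}\norm{\bw_j}$ ``spreads'' the per-sample lower bounds over the spectrum of $X$, whereas bounding $|\bw_j^\top\bx_i|\le\norm{\bw_j}$ pointwise would only give $\norm{\btheta}^2=\Omega(1)$. The hypothesis $m\ge c_2\log(3/\delta)$ enters only to guarantee Property~\ref{enum:data covariance}.
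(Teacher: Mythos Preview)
Your proof is correct and follows essentially the same approach as the paper: both start from primal feasibility plus the bias assumption to get $\sum_j|v_j|\,|\bw_j^\top\bx_i|\ge\tfrac12$ for $i\in I_+$, and both then exploit the spectral bound $\norm{XX^\top}\le C$ to convert this into $\norm{\btheta}^2\gtrsim\sqrt{|I_+|}$. The only cosmetic difference is the order of operations: the paper applies Cauchy--Schwarz in $j$ first (yielding $\sum_j(\bw_j^\top\bx_i)^2\ge\tfrac{1}{4S}$, then summing over $i$ and bounding by $CS$ to get $S^2\gtrsim|I_+|$), whereas you sum over $i$ first, apply Cauchy--Schwarz in $i$ via $\norm{X\bw_j}\le\norm{X}\,\norm{\bw_j}$, and finish with AM--GM.
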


\begin{proof}
    Take $i\in[m]$, then we have that $N_{\btheta}(\bx_i) \geq 1$. By our assumption, this implies that:
    \begin{align*}
        \frac{1}{2} & \leq N_\btheta(\bx_i)  - \sum_{j=1}^n v_j\sigma(b_j) \\
        & = \sum_{j=1}^n v_j\sigma(\bw_j^\top \bx_i + b_j) - \sum_{j=1}^n v_j\sigma(b_j) \\
        & \leq \sum_{j=1}^n |v_j|\cdot \left|\sigma(\bw_j^\top \bx_i + b_j) - \sigma(b_j)\right| \\
        & \leq \sum_{j=1}^n |v_j|\cdot |\bw_j^\top \bx_i| \\
        & \leq \sqrt{\sum_{j=1}^nv_j^2}\sqrt{\sum_{j=1}^n(\bw_j^\top \bx_i)^2}~,
    \end{align*}
    where in the last inequality we used Cauchy-Schwartz. Denote by $S:= \sum_{j=1}^n \norm{\bw_j}^2 + v_j^2 + b_j^2$. Combining the above and that $\sqrt{\sum_{j=1}^nv_j^2} \leq \sqrt{S}$ we get:
    \begin{equation*}
        \sum_{j=1}^n(\bw_j^\top \bx_i)^2\geq \frac{1}{4S}~.
    \end{equation*}
    We sum the above inequality for every $i\in I_+$ to get:
    \begin{align*}
        \frac{|I_+|}{4S} &\leq \sum_{j=1}^n\sum_{i=1}^m (\bw_j^\top \bx_i)^2 \\
        & = \sum_{j=1}^n\sum_{i=1}^m \bw_j^\top (\bx_i \bx_i^\top)\bw_j \\
        & \leq \sum_{j=1}^n \norm{\bw_j}^2 \cdot \left\|\sum_{i=1}^m \bx_i \bx_i^\top\right\| \\
        & \leq \sum_{j=1}^n \norm{\bw_j}^2 \cdot C \leq S\cdot C
    \end{align*}
    where in the second to last to last inequality we used the Property \ref{enum:data covariance} for some constant $C>0$, and in the last inequality we used that $\sum_{j=1}^n \norm{\bw_j}^2 \leq S$. Rearranging the above terms yields: $S \geq \sqrt{\frac{|I_+|}{12C}}$.
\end{proof}

We now prove a lemma which constructs a specific solution that achieves a norm bound that depends on $|I_+|$.

\begin{lemma}\label{lem:max margin feasible solution}
    Assume $d \geq 50m^2\dlog{\frac{3m^2}{\delta}}$ and $p \leq \frac{1}{4}$. There exists weights $\btheta = (\bw_j, v_j, b_j)_{j=1}^n$ that attain a margin of at least $1$ on every sample and have $\sum_{j=1}^n \norm{\bw_j}^2 + v_j^2 + b_j^2 \leq 9\sqrt{|I_-|}$.
\end{lemma}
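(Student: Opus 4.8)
The plan is to exhibit an explicit feasible point of Problem~(\ref{eq: margin opt}) built from just two ``active'' neurons (padding the remaining $n-2$ neurons with zero weights, which costs nothing): one bias-only neuron producing the constant output $1$ --- which by itself already gives margin exactly $1$ at every positively-labeled point --- together with a single ``bump'' neuron pointing in the aggregate direction $\bz := \sum_{i\in I_-}\bx_i$, carrying a negative output weight, which fires only in a neighbourhood of the negatively-labeled samples and there drags the output below $-1$. Concretely, take
$N_{\btheta}(\bx) = 1\cdot\sigma(1) - \beta\,\sigma\!\big(\alpha\inner{\bz,\bx} - \alpha\gamma\big)$,
i.e.\ a neuron $(\bw,b,v)=(\zero,1,1)$ and a neuron $(\bw,b,v)=(\alpha\bz,\,-\alpha\gamma,\,-\beta)$, where $\gamma\in(0,1)$ is an absolute constant to be fixed and $\alpha,\beta>0$ are scalars to be tuned. (We assume $|I_-|\geq 1$, the case $|I_-|=0$ being degenerate.)

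First I would use the almost-orthogonality Property~\ref{enum:almost rthogonal}, together with the hypotheses $d\geq 50m^2\dlog{3m^2/\delta}$ and $p\leq\tfrac14$ (the latter via $|I_-|\leq\tfrac32 pm\leq\tfrac38 m$, Property~\ref{enum:negative samples}), to control three quantities. Writing $\eps_0 := \sqrt{2\log(3m^2/\delta)/d}$, the bound on $d$ gives $\eps_0\leq\tfrac1{5m}$, hence $|I_-|\eps_0\leq\tfrac{3}{40}$. Therefore: for $j\in I_-$, $\inner{\bz,\bx_j} = 1 + \sum_{i\in I_-\setminus\{j\}}\inner{\bx_i,\bx_j}$ lies in $[1-\tfrac3{40},\,1+\tfrac3{40}]$; for $j\in I_+$, $|\inner{\bz,\bx_j}|\leq|I_-|\eps_0\leq\tfrac3{40}$; and $\norm{\bz}^2 = |I_-| + \sum_{i\neq i'\in I_-}\inner{\bx_i,\bx_{i'}}\leq|I_-|(1+|I_-|\eps_0)\leq\tfrac{43}{40}|I_-|$. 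Now fix a constant $\gamma$ with $\tfrac3{40}<\gamma<\tfrac{37}{40}$, e.g.\ $\gamma=\tfrac1{10}$. The bump neuron's pre-activation $\alpha(\inner{\bz,\bx}-\gamma)$ is then strictly negative at every positive sample, so $N_{\btheta}(\bx_j)=1$ for $j\in I_+$; and it is at least $\alpha(\tfrac{37}{40}-\gamma)>0$ at every negative sample, so $N_{\btheta}(\bx_j)\leq 1-\beta\alpha(\tfrac{37}{40}-\gamma)$ for $j\in I_-$. Taking $\alpha\beta = 2/(\tfrac{37}{40}-\gamma)$ forces $N_{\btheta}(\bx_j)\leq -1$ for all $j\in I_-$, so every margin constraint $y_iN_{\btheta}(\bx_i)\geq 1$ holds.

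It remains to bound $\norm{\btheta}^2 = (1^2+1^2) + \beta^2 + \alpha^2\norm{\bz}^2 + \alpha^2\gamma^2 = 2 + \beta^2 + \alpha^2(\norm{\bz}^2+\gamma^2)$. With the product $\alpha\beta$ now pinned, I would choose the remaining degree of freedom so that $\beta^2 = \alpha^2(\norm{\bz}^2+\gamma^2)$; by AM--GM this gives $\beta^2+\alpha^2(\norm{\bz}^2+\gamma^2) = 2\alpha\beta\sqrt{\norm{\bz}^2+\gamma^2}\leq \tfrac{4}{37/40-\gamma}\sqrt{\tfrac{43}{40}|I_-|+\gamma^2}$. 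Substituting the numerical value of $\gamma$ and using $|I_-|\geq1$ to absorb the additive constants ($2$ and $\gamma^2$) into a $\sqrt{|I_-|}$ term, one gets $\norm{\btheta}^2\leq 9\sqrt{|I_-|}$ with constant to spare, which finishes the proof.

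The heart of the matter --- and the only step where the quantitative lower bound on $d$ is genuinely used --- is the \emph{separation}: the bump neuron must be simultaneously active with $\Omega(1)$ slack at \emph{every} negative sample and identically zero at \emph{every} positive sample, which is possible exactly because $\gamma$ can be placed strictly between $\max_{j\in I_+}\inner{\bz,\bx_j}$ and $\min_{j\in I_-}\inner{\bz,\bx_j}$; these are separated by $1-2|I_-|\eps_0\geq 1-\tfrac3{20}$ precisely because $d=\tilde\Omega(m^2)$. Equally important for the \emph{shape} of the bound is the use of a single neuron in the aggregate direction $\bz$ rather than one bump per negative point (the latter would cost $\Theta(|I_-|)$): near-orthogonality yields $\norm{\bz}^2=(1+o(1))|I_-|$ instead of $|I_-|^2$, and it is this $\norm{\bz}$ multiplying $\alpha$ that produces the $\sqrt{|I_-|}$ scaling after the AM--GM balancing.
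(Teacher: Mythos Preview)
Your proof is correct and takes a genuinely different route from the paper's. Both constructions hinge on the same aggregate direction $\bz=\sum_{i\in I_-}\bx_i$ and use Properties~\ref{enum:almost rthogonal} and~\ref{enum:negative samples} to show that $\inner{\bz,\bx_j}\approx 1$ on $I_-$, $\inner{\bz,\bx_j}\approx 0$ on $I_+$, and $\norm{\bz}^2\approx |I_-|$, and both achieve the $O(\sqrt{|I_-|})$ norm via a balancing that equalizes the inner- and outer-layer contributions. The difference is architectural: the paper distributes the work across \emph{all} $n$ neurons, splitting them into two symmetric halves --- $n/2$ positive-output neurons with weight proportional to $-\bz$ and nonzero bias, and $n/2$ negative-output neurons with weight proportional to $+\bz$ and zero bias --- so that the positive half is active on $I_+$ and killed on $I_-$, and vice versa. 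You instead use only two active neurons: a pure bias neuron producing the constant $1$, plus a single thresholded bump $-\beta\sigma(\alpha\inner{\bz,\bx}-\alpha\gamma)$ that fires only on $I_-$; you then tune the free scalar via AM--GM. Your construction is more parsimonious and makes the separation mechanism (choosing $\gamma$ strictly between $\max_{I_+}\inner{\bz,\cdot}$ and $\min_{I_-}\inner{\bz,\cdot}$) very explicit; the paper's version is more symmetric across neurons but carries no additional content for this lemma, since the $n$-fold replication cancels in the norm computation. Both implicitly require $n\geq 2$ and $|I_-|\geq 1$, so neither handles more edge cases than the other.
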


\begin{proof}
    Assume without loss of generality that $n$ is even (otherwise fix the last neuron to be $0$). We consider the following weight assignment: For every $j\leq \frac{n}{2}$, consider $\bw_j = -\sqrt{\frac{4}{n\sqrt{|I_-|}}}\sum_{i\in I_-}\bx_i$, $v_j = 2\sqrt{\frac{\sqrt{|I_-|}}{n}}$ and $b_j = \sqrt{\frac{4}{n\sqrt{|I_-|}}}$. For every $j > \frac{n}{2}$, consider $\bw_j = \sqrt{\frac{4}{n\sqrt{|I_-|}}}\sum_{i\in I_-}\bx_i$, $v_j = -2\sqrt{\frac{\sqrt{|I_-|}}{n}}$ and $b_j = 0$. 

    We first show that this solution attains a margin of at least $1$. For every $i\in I_+$ we have:
    \begin{align*}
        N_\btheta(\bx_i) =& \sum_{j=1}^n v_j \sigma(\bw_j^\top \bx_i + b_j) \\
         =& \sum_{j\leq n/2} 2\sqrt{\frac{\sqrt{|I_-|}}{n}}\sigma\left(\sqrt{\frac{4}{n\sqrt{|I_-|}}} - \sqrt{\frac{4}{n\sqrt{|I_-|}}}\sum_{r\in I_-}\bx_r^\top \bx_i\right) \\
        & - \sum_{j> n/2} 2\sqrt{\frac{\sqrt{|I_-|}}{n}}\sigma\left(\sqrt{\frac{4}{n\sqrt{|I_-|}}}\sum_{r\in I_-}\bx_r^\top \bx_i\right) \\
        \geq & \sum_{j\leq n/2} \frac{4}{n}\sigma\left(1 - \frac{|I_-|\sqrt{2\dlog{\frac{3m^2}{\delta}}}}{\sqrt{d}}\right)  - \sum_{j > n/2} \frac{4}{n}\sigma\left(\frac{|I_-|\sqrt{2\dlog{\frac{3m^2}{\delta}}}}{\sqrt{d}}\right)  \\
        \geq & \frac{n}{2}\cdot \frac{4}{n}\cdot \left(1 - \frac{1}{10}\right) - \frac{n}{2}\cdot \frac{4}{n}\cdot \frac{1}{10} \geq 1~,
    \end{align*}
    where we Property \ref{enum:almost rthogonal}, that $p\leq \frac{1}{4}$ hence by Property \ref{enum:negative samples} $|I_-|\leq \frac{m}{2}$ and our assumption on $m$ and $d$. For $i\in I_-$ we have:
    \begin{align*}
        N_\btheta(\bx_i) =& \sum_{j=1}^n v_j \sigma(\bw_j^\top \bx_i + b_j) \\
        =& \sum_{j\leq n/2} 2\sqrt{\frac{\sqrt{|I_-|}}{n}}\sigma\left(\sqrt{\frac{4}{n\sqrt{|I_-|}}} - \sqrt{\frac{4}{n\sqrt{|I_-|}}}\sum_{r\in I_-}\bx_r^\top \bx_i\right) \\
        & - \sum_{j > n/2} 2\sqrt{\frac{\sqrt{|I_-|}}{n}}\sigma\left(\sqrt{\frac{4}{n\sqrt{|I_-|}}}\sum_{r\in I_-}\bx_r^\top \bx_i\right) \\
        = & \sum_{j\leq n/2} \frac{4}{n}\sigma\left( - \sum_{r\in I_- \setminus \{i\}}\bx_r^\top\bx_i\right)  - \sum_{j > n/2} \frac{4}{n}\sigma\left(1 + \sum_{r\in I_- \setminus \{i\}}\bx_r^\top\bx_i\right)  \\
        \leq &\sum_{j\leq n/2} \frac{4}{n}\sigma\left(- \frac{(|I_-|-1)\sqrt{2\dlog{\frac{3m^2}{\delta}}}}{\sqrt{d}}\right)  - \sum_{j > n/2} \frac{4}{n}\sigma\left(1  - \frac{(|I_-|-1)\sqrt{2\dlog{\frac{3m^2}{\delta}}}}{\sqrt{d}}\right) \\
        \leq & \frac{n}{2}\cdot \frac{4}{n}\cdot \frac{1}{10} - \frac{n}{2}\cdot \frac{4}{n}\cdot \left(1 - \frac{1}{10}\right) \leq -1~,
    \end{align*}
    where again we used properties \ref{enum:almost rthogonal} and \ref{enum:negative samples}, that $p\leq \frac{1}{4}$ hence $|I_-|\leq \frac{m}{2}$ and our assumption on $m$ and $d$. This shows that this is indeed a feasible solution. We turn to calculate the norm of this solution. First we bound the following:
    \begin{align*}
        \left\|\sum_{i\in I_-}\bx_i\right\|^2 &= \sum_{i\in I_-}\norm{\bx_i}^2 + \sum_{i\neq j,~i,j\in I_-}\bx_i^\top \bx_j \\
        & \leq |I_-| + |I_-|^2\cdot \frac{\sqrt{2\dlog{\frac{3m^2}{\delta}}}}{\sqrt{d}} \leq |I_-|\cdot \frac{11}{10}~,
    \end{align*}
    where we used Property \ref{enum:almost rthogonal} and the assumption on $m$ and $d$. We now use the above calculation to bound the norm of our solution:
    \begin{align*}
        \sum_{j=1}^n \norm{\bw_j}^2 + v_j^2 + b_j^2 &= n\cdot \frac{4}{n\sqrt{|I_-|}}\left\|\sum_{i\in I_-}\bx_i\right\|^2  + \frac{2n\sqrt{|I_-|}}{n} + \frac{n}{2}\cdot \frac{4}{n\sqrt{|I_-|}} \\
        & \leq \frac{44\sqrt{|I_-|}}{10} + 2\sqrt{|I_-|} + \frac{2}{\sqrt{|I_-|}} \leq 9\sqrt{|I_-|}
    \end{align*}
\end{proof}

We are now ready to prove the main theorem of this subsection:

\begin{proof}[Proof of \thmref{thm:benign max margin}]
    Denote by $K:= \sum_{j=1}^n\norm{\bw_j}^2 + v_j^2 + b_j^2$, and assume that $K\leq \frac{a}{\sqrt{p}}\norm{\btheta^*}^2$, where $a$ will be chosen later and $\btheta^*$ is a solution to the max margin solution from \eqref{eq: margin opt}. Assume on the way of contradiction that $\sum_{j=1}^nv_j\sigma(b_j) \leq \frac{1}{2}$, then by \lemref{lem:max margin large bias} we know that: $K\geq C\sqrt{|I_+|} \geq C\sqrt{\left(1-\frac{3p}{2}\right)m}$. On the other hand, by \lemref{lem:max margin feasible solution} we know that $\norm{\btheta^*} \leq 9\sqrt{|I_-|} \leq 9\sqrt{\frac{3p}{2}m}$. Combining this with the assumption we have on $K$ we get that:
    \[
        C\sqrt{\left(1 - \frac{3p}{2}\right)m} \leq \frac{9a}{\sqrt{p}}\sqrt{\frac{3pm}{2}}~.
    \]
    Picking $a$ to be some constant with $a < \frac{C\sqrt{2}}{18\sqrt{3}}$ contradicts the above inequality.
    Thus, there exists a constant $c_4 := \frac{a}{2}$ such that if $K\leq \frac{c_4}{\sqrt{p}}S$ then $\sum_{j=1}^nv_j\sigma(b_j) > \frac{1}{2}$.

    Suppose we sample $\bx\sim \Ncal\left(0,\frac{1}{d}I\right)$, then we have:
    \begin{align}\label{eq:output bound benign}
        N_\btheta(\bx) &= \sum_{j=1}^nv_j\sigma(\bw_j^\top \bx + b_j) \nonumber\\
        & = \sum_{j=1}^n v_j\sigma(b_j) - \left(\sum_{j=1}^n v_j\sigma(b_j) - \sum_{j=1}^nv_j\sigma(\bw_j^\top \bx + b_j) \right)\nonumber\\
        & \geq \frac{1}{2} - \left(\sum_{j=1}^n v_j\sigma(b_j) - \sum_{j=1}^nv_j\sigma(\bw_j^\top \bx + b_j)\right) \nonumber\\
        & \geq \frac{1}{2} - \sum_{j=1}^n |v_j|\cdot|\bw_j^\top \bx| \nonumber\\
        & \geq \frac{1}{2} - \sqrt{\sum_{j=1}^n v_j^2}\sqrt{\sum_{j=1}^n(\bw_j^\top \bx)^2}~.
    \end{align}
    We will now bound the terms of the above equation. Note that $\sum_{j=1}^n v_j^2 \leq K\leq 9\sqrt{m}$. For the second term, we denote by $\bar{\bw}_j:= \frac{\bw_j}{\norm{\bw_j}}$, and write:
    \begin{align*}
        \sum_{j=1}^n(\bw_j^\top \bx)^2 = &\sum_{j=1}^n\norm{\bw_j}^2(\bar{\bw}_j^\top \bx)^2 \\
        \leq & \max_{j\in [n]}(\bar{\bw}_j^\top \bx)^2 \cdot\sum_{j=1}^n\norm{\bw_j}^2~.
    \end{align*}
    Again, we have that $\sum_{j=1}^n\norm{\bw_j}^2\leq K \leq 9\sqrt{m}$. By using the stationarity KKT condition from \eqref{eq:stationary} , we get that $\bar{\bw}_j \in \text{span}\{\bx_1,\dots,\bx_m\}$, thus we can write $\bar{\bw}_j = \sum_{i=1}^m \alpha_{i,j}\bx_i = X\balpha_j$ where $X$ is a matrix with rows equal to $\bx_i$, and $(\balpha_j)_i = \alpha_{i,j}$. We will bound $a:= \arg\max_i |\alpha_{i,j}|$:
    \begin{align*}
        1 &= \norm{\bar{\bw}_j}^2 = \left\|\sum_{i=1}^m \alpha_{i,j}\bx_i\right\|^2 \\
        & = \norm{X\balpha_j}^2 = \balpha_j^\top X^\top X \balpha_j \\
        & = \balpha_j^\top I \balpha_j + \balpha_j^\top \left(X^\top X - I\right)\balpha_j \\
        & \geq \norm{\balpha_j}^2 - \norm{\balpha_j}^2\norm{X^\top X - I}\\
        & = \norm{\balpha_j}^2 - \norm{\balpha_j}^2\norm{X X^\top - I}~,
    \end{align*}
    where the last equality is true by using the SVD decomposition of $X$. Namely, write $X = U S V^\top$, then $\norm{X^\top X - I} = \norm{VS^2 V^\top - I} = \norm{S^2 - I} = \norm{U^\top S^2 U - I} = \norm{X X^\top - I}$. Note that in \lemref{lem:data properties max margin benign}, \eqref{eq:covariance prob bound} we have shown that $\norm{X X^\top - I} \leq c'$ for some constant $c'$. Note that we can choose $m$ large enough such that $c'\leq \frac{1}{2}$. In total, this shows that $\norm{\balpha_j}^2 \leq c''$ for some constant $c''$.

    We now use \lemref{lem:inner prod unit vectors} and the union bound to get that with probability $ > 1-\epsilon$ we have for every $i\in I$ that $|\bx^\top \bx_i| \leq \sqrt{\frac{2\dlog{\frac{m}{\epsilon}}}{d}}$. We condition on this event from now on. Applying both bounds we get:
    \begin{align*}
        \max_{j\in [n]}(\bar{\bw}_j^\top \bx) &\leq \max_{j\in[n]}\left(\sum_{i=1}^m \alpha_{i,j}|\bx_i^\top \bx| \right)\\
        &\leq \sqrt{\frac{2\dlog{\frac{m}{\epsilon}}}{d}}\max_{j\in[n]}\norm{\balpha_j} \\
        &\leq c'' \sqrt{\frac{2\dlog{\frac{m}{\epsilon}}}{d}}~.
    \end{align*}

    Plugging in the bounds above to \eqref{eq:output bound benign} we get:
    \begin{align*}
        N_\btheta(\bx) &\geq \frac{1}{2} - \sqrt{\sum_{j=1}^n v_j^2}\sqrt{\sum_{j=1}^n(\bw_j^\top \bx)^2} \\
        & \geq \frac{1}{2} - 9\sqrt{m}\cdot 9\sqrt{m}\cdot  c'' \sqrt{\frac{2\dlog{\frac{m}{\epsilon}}}{d}}\\
        & \geq \frac{1}{2} - \frac{81mc''\sqrt{2\dlog{\frac{m}{\epsilon}}}}{\sqrt{d}}~.
    \end{align*}
    By choosing a constant $c_3 >0$ large enough such that if $d\geq c_3m^2\sqrt{\dlog{\frac{m}{\epsilon}}}$ then , then $\frac{81mc''\sqrt{2\dlog{\frac{m}{\epsilon}}}}{\sqrt{d}}\leq \frac{1}{4}$ we get that $N_\btheta(\bx) > 0$.
\end{proof}

\subsection{Proof of \thmref{thm: benign kkt}}\label{appen:proof of benign kkt}

We sample a dataset $\bx_1,\dots,\bx_m\sim\Unif(\S^{d-1})$, and labels $y_1,\dots,y_m \sim \Dcal_y$, for $p
\leq c_1$ for some universal constant $c_1$. We first prove that the following properties holds with probability $> 1-\delta$:
\begin{enumerate}
    \item For every $i,j\in[m]$, $|\inner{\bx_i,\bx_j}|\leq \sqrt{\frac{2\log\left(\frac{2m^2}{\delta}\right)}{d}}$\label{enum:almost rthogonal kkt}
    \item $|I_-| \leq \frac{c_1m}{n^2}$.\label{enum:negative samples kkt}
\end{enumerate}

\begin{lemma}\label{lem:data properties kkt benign}
    Let $\delta >0$, assume that we sample $\bx_1,\dots\bx_m\sim \Unif(\S^{d-1})$, and $y_1,\dots,y_m\sim\Dcal_y$ for $p\leq \frac{c}{n^2}$, and that $m > c'\dlog{\frac{2}{\delta}}$, for some universal constant $c,c' >0$. Then, with probability $> 1-\delta$ properties \ref{enum:almost rthogonal kkt} and \ref{enum:negative samples kkt} holds.
\end{lemma}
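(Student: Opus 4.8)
The plan is to prove the two properties separately and combine them with a union bound, closely following the corresponding step (Lemma~\ref{lem:data properties max margin benign}) from the previous subsection. Both are elementary concentration statements: Property~\ref{enum:almost rthogonal kkt} is the near-orthogonality of i.i.d.\ uniform vectors on $\S^{d-1}$, and Property~\ref{enum:negative samples kkt} is an upper-tail bound on the Binomial random variable $|I_-|$.

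For Property~\ref{enum:almost rthogonal kkt}, I would invoke the standard inner-product concentration bound for two independent uniform vectors on $\S^{d-1}$, namely $\Pr\!\left[|\bx_i^\top\bx_j|\geq\sqrt{2\log(1/\delta')/d}\right]\leq\delta'$ (\lemref{lem:inner prod unit vectors}). Applying this with $\delta'=\delta/(2m^2)$ and taking a union bound over the at most $m^2$ pairs $i\neq j$ in $[m]$ (the diagonal $i=j$ being vacuous) shows that $|\bx_i^\top\bx_j|\leq\sqrt{2\log(2m^2/\delta)/d}$ holds for all $i\neq j$ simultaneously, with failure probability at most $\delta/2$.

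For Property~\ref{enum:negative samples kkt}, observe that $|I_-|=\sum_{i=1}^m\one{y_i=-1}$ is a sum of i.i.d.\ $\mathrm{Bernoulli}(p)$ variables with $p\leq c/n^2$, so $\E|I_-|=pm\leq cm/n^2$. I would then control the upper tail by a Chernoff-type inequality. The only mild subtlety is that $pm$ may be small relative to the target threshold (when $p\ll c/n^2$), so instead of a Bernstein bound tuned to constant-factor deviations I would use the multiplicative form $\Pr[|I_-|\geq t]\leq(e\,pm/t)^{t}$ valid for $t\geq pm$: taking $t=c_1m/n^2$ with $c_1$ a large enough absolute multiple of $c$ makes $e\,pm/t\leq 1/e$, hence $\Pr[|I_-|\geq c_1m/n^2]\leq e^{-c_1 m/n^2}$, and for $c_1$ and the sample-size constant chosen appropriately this is at most $\delta/2$. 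A union bound over the two bad events then yields the claim.

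I do not expect a substantive obstacle here: this is a routine preparatory lemma, structurally identical to the proof of Lemma~\ref{lem:data properties max margin benign}. The one point that warrants care is the tail bound for $|I_-|$ in the regime where $pm$ is small, where one must use the Poisson-type Chernoff inequality rather than a Gaussian-type Bernstein inequality, and keep track of constants so that the constant $c_1$ appearing in Property~\ref{enum:negative samples kkt} is consistent with the bound $p\leq c/n^2$ assumed in the hypothesis.
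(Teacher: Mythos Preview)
Your approach is correct and essentially matches the paper's own proof, which simply refers back to \lemref{lem:data properties max margin benign}; your choice of the multiplicative Chernoff bound for Property~\ref{enum:negative samples kkt} (in place of the Bernstein bound used there) is indeed the appropriate refinement when $p\ll c/n^2$, since then the target threshold $c_1 m/n^2$ can be far above the mean $pm$.

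One caveat worth flagging: your tail bound $\Pr[|I_-|\ge c_1 m/n^2]\le e^{-c_1 m/n^2}$ requires $m\gtrsim (n^2/c_1)\log(2/\delta)$ in order to be at most $\delta/2$, whereas the lemma's hypothesis only assumes $m>c'\log(2/\delta)$ with $c'$ a \emph{universal} constant. This $n$-dependence is a loose end in the paper's lemma statement as well---its one-line proof does not address it, and even the downstream assumption $m\ge c_2 n\log(1/\delta)$ in \thmref{thm: benign kkt} is short of the $n^2$ needed here---so your phrase ``the sample-size constant chosen appropriately'' is hiding a genuine discrepancy that you should note explicitly rather than absorb into the constants.
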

The proof is the same as in \lemref{lem:data properties max margin benign} so we will not repeat it for conciseness. The only different is that here we only need $|I_-|$ to be smaller than $\frac{c_1m}{n^2}$ which is independent of $p$. Hence, for $p \leq \frac{c}{n^2}$ we get this concentration where $m$ depends only on the probability.
From now on we condition on the event that properties \ref{enum:almost rthogonal kkt} and \ref{enum:negative samples kkt} hold, and our bounds will depend on the probability that this event happens. 

In this section we consider a networks of the form:
\begin{equation*}
    N_\btheta(\bx) = \sum_{j=1}^{n/2} \sigma(\bw_j^\top \bx + b_j)  - \sum_{j=n/2 + 1}^{n} \sigma(\bw_j^\top \bx + b_j)
\end{equation*}
That is, the output weights are all fixed to be $\pm 1$, equally divided between the neurons. We will use the stationarity condition (\eqref{eq:stationary}) freely throughout the proof. For our network it means that we can write for every $j\in[n]$:
\begin{align*}
    \bw_j &= v_j\sum_{i\in [m]}\lambda_i \sigma'_{i,j}y_i\bx_i \\
    b_j &= v_j\sum_{i\in [m]}\lambda_i \sigma'_{i,j}y_i~,
\end{align*}
where $\sigma'_{i,j}=\mathbbm{1}(\bw_j^\top \bx_i + b_j > )$,  $v_j=1$ for $j\in\{1,\dots,n/2\}$ and $v_j=-1$ for $j\in\{n/2 + 1,\dots,n\}$.

The next lemma shows that all the biases are positive. Note that this lemma relies only on that the dimension is large enough, and that Property \ref{enum:almost rthogonal kkt} of the data holds:
\begin{lemma}\label{lem:b_j positive}
    Assume that $d\geq 8m^4\dlog{\frac{2m^2}{\delta}}$. Then for every $j\in[n]$ we have $b_j \geq 0$.
\end{lemma}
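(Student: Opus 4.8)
The plan is to argue by contradiction: fix a neuron index $j\in[n]$, assume $b_j<0$, and use the stationarity condition together with the near-orthogonality of the data (Property~\ref{enum:almost rthogonal kkt}) to reach a contradiction. First I would write $A_j:=\{i\in[m]:\bw_j^\top\bx_i+b_j>0\}$ for the set of samples on which neuron $j$ is active, so that stationarity reads $\bw_j=v_j\sum_{i\in A_j}\lambda_iy_i\bx_i$ and $b_j=v_j\sum_{i\in A_j}\lambda_iy_i$, where $\lambda_i\geq 0$ are the dual variables. Put $\mu:=\sum_{i\in A_j}\lambda_i$ and $\mu_\pm:=\sum_{i\in A_j\cap I_\pm}\lambda_i$, and set $\rho:=\sqrt{2\log(2m^2/\delta)/d}$; the only way the hypothesis $d\geq 8m^4\log(2m^2/\delta)$ enters is through the bound $\rho\leq\frac{1}{2m^2}$, combined with $|\bx_i^\top\bx_k|\leq\rho$ for $i\neq k$.

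The key step is the pointwise expansion, valid for $i\in A_j$ since $\norm{\bx_i}=1$:
\[
\bw_j^\top\bx_i=v_j\Big(\lambda_iy_i+\sum_{k\in A_j\setminus\{i\}}\lambda_ky_k\bx_k^\top\bx_i\Big)=v_j(\lambda_iy_i+\xi_i),\qquad |\xi_i|\leq\rho\!\!\sum_{k\in A_j\setminus\{i\}}\!\!\lambda_k\leq\rho\mu .
\]
Assume $v_j=1$ (the case $v_j=-1$ follows verbatim after interchanging $I_+$ and $I_-$). Since $b_j=\mu_+-\mu_-<0$, we have $\mu_->\mu_+$ and hence $\mu<2\mu_-$. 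Moreover, every $i\in A_j$ satisfies $\bw_j^\top\bx_i>-b_j>0$, so if $i\in A_j\cap I_-$ then $-\lambda_i+\xi_i>0$, forcing $\lambda_i<\xi_i\leq\rho\mu$. Thus each active negative sample carries a negligible dual variable, and summing over $i\in A_j\cap I_-$ gives $\mu_-\leq|A_j\cap I_-|\,\rho\mu\leq m\rho\mu$.

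It remains to combine $\mu_-\leq m\rho\mu$ with $\mu<2\mu_-$. If $\mu_->0$ this gives $1<2m\rho\leq 1/m$, a contradiction; and if $\mu_-=0$ then $b_j=\mu_+\geq 0$, again contradicting $b_j<0$. Hence $b_j\geq 0$ for every $j$. The computation is short; the one point worth stressing is the middle step --- when $b_j<0$, the samples of the ``majority'' label class inside $A_j$ must each have vanishing dual mass, which is precisely where near-orthogonality is essential: it is the self-interaction term $\lambda_i\norm{\bx_i}^2$ that would otherwise dominate $\bw_j^\top\bx_i$ and keep the neuron active. The only mild obstacle is bookkeeping the $v_j=\pm1$ sign conventions and the boundary case $\mu_\pm=0$ (which includes $A_j=\emptyset$).
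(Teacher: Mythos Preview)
Your proof is correct and follows essentially the same strategy as the paper's: assume $b_j<0$, use stationarity and near-orthogonality to show that the dual variables $\lambda_i$ for the ``wrong-sign'' active samples must be negligible, and derive a contradiction. The paper reaches the contradiction by tracking the maximum $\lambda_r$ over $A_j\cap I_-$ and the overall maximum $\lambda_s$, then showing directly that $b_j\geq\lambda_s(1-m^2\rho)\geq 0$, whereas your aggregate inequality $\mu<2\mu_-\leq 2m\rho\mu$ closes the argument a bit more cleanly without the intermediate case split on where $\lambda_s$ lives.
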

\begin{proof}
    Assume on the way of contradiction that $b_j < 0 $ for some $j\in [n]$. We assume without loss of generality that $j\in \{1,\dots,n/2\}$, the proof for $j\in \{n/2+1,\dots,n\}$ is done similarly. Denote by $I_+' = \{i\in I_+:= \sigma_{i,j} = 1\}$ and $I_-' = \{i\in I_-:= \sigma_{i,j} = 1\}$. We can write:
    \[
        b_j = \sum_{i\in [m]}\lambda_i \sigma'_{i,j}y_i = \sum_{i\in I_+'} \lambda_i - \sum_{i\in I_-'} \lambda_i~.
    \]
    Note that $\lambda_i\geq 0$ for every $i$, hence $I_-'$ is non empty, otherwise $b_j \geq 0$. Take $\lambda_r = \arg\max_{i\in I_-'}\lambda_i$, we have:
    \begin{align*}
        0 &< w_j^\top \bx_r + b_j = b_j + \sum_{i \in I_+'\cup I_-' }\lambda_i y_i\bx_r^\top \bx_i\\
        & < - \lambda_r + \sum_{i \in I_+'\cup I_-' \setminus\{r\}}\lambda_i y_i\bx_i^\top \bx_r~,
    \end{align*}
    where we used that $b_j<0$ and $\norm{\bx_r} = 1$. Rearranging the terms and using Property \ref{enum:almost rthogonal kkt}:
    \begin{align*}
        \lambda_r &< \sum_{i \in I_+'\cup I_-' \setminus\{r\}}\lambda_i |y_i\bx_i^\top \bx_r| \\
        &\leq \sqrt{\frac{2\dlog{\frac{2m^2}{\delta}}}{d}} \sum_{i \in I_+'\cup I_-' \setminus\{r\}}\lambda_i \\
        & \leq \sqrt{\frac{2\dlog{\frac{2m^2}{\delta}}}{d}} \sum_{i \in I_+'\cup I_-' }\lambda_i~.
    \end{align*}
    Denote $\lambda_s:= \arg\max_{i\in I_+'\cup I_-'}\lambda_i$, then from the above we have shown that $\lambda_r \leq \lambda_s m\sqrt{\frac{2\dlog{\frac{2m^2}{\delta}}}{d}} $, since $|I_+'\cup I_-'| \leq m$. In particular, $s\in I_+'$, otherwise, $\lambda_r = \lambda_s$ which means that $\lambda_r < 0$ since $m\sqrt{\frac{2\dlog{\frac{2m^2}{\delta}}}{d}}  < 1$ which is a contradiction. We can now write:
    \begin{align*}
        b_j &= \sum_{i\in I'_+} \lambda_i - \sum_{i\in I'_-} \lambda_i   \\
        &\geq \sum_{i\in I'_+} \lambda_i - \sum_{i\in I'_-} \lambda_r \\
        &\geq \lambda_s - \lambda_s m^2\sqrt{\frac{2\dlog{\frac{2m^2}{\delta}}}{d}}  \\
        &= \lambda_s\left(1 - m^2\sqrt{\frac{2\dlog{\frac{2m^2}{\delta}}}{d}} \right)~.
    \end{align*}
    By our assumption on $d$, we have that $ m^2 \sqrt{\frac{2\dlog{\frac{2m^2}{\delta}}}{d}}  \leq \frac{1}{2}$, hence $b_j \geq \frac{\lambda_s}{2}$. But, by our assumption, $b_j<0$ which is a contradiction since $\lambda_s \geq 0$.
\end{proof}

The following lemma is a general property of the KKT conditions:
\begin{lemma}\label{lem:positive sigma'}
     Let $i\in I_+$ (resp. $i\in I_-$) with $\lambda_i >0$. Then, there is a neuron $k$ with positive output weight (resp. negative output weight) s.t $\sigma'_{i,k} =1$.
\end{lemma}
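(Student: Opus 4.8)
The plan is to read off the claim directly from complementary slackness together with the $\pm1$ sign pattern of the output layer. First I would note that since $\lambda_i>0$, the complementary slackness condition \eqref{eq:comp slack} forces $y_iN_\btheta(\bx_i)=1$, hence $N_\btheta(\bx_i)=y_i$; in particular $N_\btheta(\bx_i)$ is nonzero with the same sign as $y_i$. (In fact primal feasibility \eqref{eq:prim feas} alone already gives $y_iN_\btheta(\bx_i)\ge1$, so strictly speaking $\lambda_i>0$ is not needed, but we keep the statement as phrased since that is how it will be invoked.)

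Next I would use that, in the architecture considered in this subsection, the output weights are fixed to $+1$ on neurons $1,\dots,n/2$ and to $-1$ on neurons $n/2+1,\dots,n$, so that
\[
N_\btheta(\bx_i)=\sum_{j=1}^{n/2}\sigma(\bw_j^\top\bx_i+b_j)-\sum_{j=n/2+1}^{n}\sigma(\bw_j^\top\bx_i+b_j),
\]
where every $\sigma(\cdot)$ term is nonnegative. For $i\in I_+$ we have $N_\btheta(\bx_i)=1>0$, and since the subtracted sum is $\ge0$, the first sum must be $\ge 1>0$; therefore at least one summand $\sigma(\bw_k^\top\bx_i+b_k)$ with $k\in\{1,\dots,n/2\}$ is strictly positive, i.e.\ $\bw_k^\top\bx_i+b_k>0$, which is precisely $\sigma'_{i,k}=1$ for a neuron $k$ with positive output weight. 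The case $i\in I_-$ is symmetric: $N_\btheta(\bx_i)=-1<0$ forces $\sum_{j=n/2+1}^{n}\sigma(\bw_j^\top\bx_i+b_j)\ge 1>0$, yielding some $k\in\{n/2+1,\dots,n\}$, a neuron with negative output weight, satisfying $\sigma'_{i,k}=1$.

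I do not expect a genuine obstacle here: the statement is an immediate structural consequence of the fixed $\pm1$ output-weight parametrization combined with the KKT feasibility/slackness conditions. The only point requiring a little care is the convention $\sigma'_{i,k}=\mathbbm{1}(\bw_k^\top\bx_i+b_k>0)$, whose strict inequality matches strict positivity of the corresponding ReLU output, so that "the first sum is strictly positive" indeed certifies the existence of an active neuron in the claimed half.
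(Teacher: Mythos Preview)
Your argument is correct and essentially identical to the paper's: both rely on primal feasibility $y_iN_\btheta(\bx_i)\geq 1$ together with the $\pm 1$ output-weight structure to force at least one neuron of the appropriate sign to be active, with the paper phrasing it as a contradiction and you phrasing it directly. You are also right that $\lambda_i>0$ is not actually needed for the conclusion---the paper's proof likewise only invokes $N_\btheta(\bx_i)\geq 1$.
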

\begin{proof}
    We prove it for $i\in I_+$, the other case is similar. Assume otherwise, that is for every neurons with positive output $j$ we have $\sigma'_{i,j} = 0$, that is $\bw_j^\top \bx_i + b_j \leq 0$ by the definition of $\sigma'_{i,j}$.
    This means that $N_\btheta(\bx_i) \leq 0$, since all the positive neurons are inactive on $\bx_i$, which is a contradiction to $N_\btheta(\bx_i) \geq 1$.
\end{proof}

The next lemma shows that if the bias terms are smaller than $\frac{1}{4}$, then the $\lambda_i$'s for $i\in I_+$ cannot be too small. 

\begin{lemma}\label{lem:small bias large lambda_i for positive i}
     Assume that $d\geq 32m^4n^2\dlog{\frac{2m^2}{\delta}}$, and that $\sum_{j=1}^{n/2} b_j - \sum_{j=n/2 + 1}^n b_j \leq \frac{1}{4}$. Then, for every $i\in I_+$ we have $\lambda_i \geq \frac{1}{4n}$.
\end{lemma}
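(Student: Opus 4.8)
The plan is to fix an arbitrary $r\in I_+$, derive a lower bound on $\lambda_r$ from the KKT conditions at the point $\bx_r$, and then argue that the unavoidable error term is negligible once $d$ is large.

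First, since $r\in I_+$ we have $N_\btheta(\bx_r)\geq1$ by primal feasibility \eqref{eq:prim feas}. Writing
\[
N_\btheta(\bx_r)-\sum_{j=1}^n v_j\sigma(b_j)=\sum_{j=1}^n v_j\bigl(\sigma(\bw_j^\top\bx_r+b_j)-\sigma(b_j)\bigr),
\]
using that $\sigma$ is $1$-Lipschitz and $|v_j|=1$, the right-hand side is at most $\sum_{j=1}^n|\bw_j^\top\bx_r|$. Since $b_j\geq0$ for all $j$ by \lemref{lem:b_j positive}, the hypothesis of the lemma reads $\sum_{j=1}^n v_j\sigma(b_j)=\sum_{j=1}^{n/2}b_j-\sum_{j=n/2+1}^n b_j\leq\tfrac14$, so that $\tfrac34\leq\sum_{j=1}^n|\bw_j^\top\bx_r|$. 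Next I would expand $\bw_j^\top\bx_r$ via the stationarity identity \eqref{eq:stationary}, $\bw_j=v_j\sum_i\lambda_i\sigma'_{i,j}y_i\bx_i$: the $i=r$ term equals $v_j\lambda_r\sigma'_{r,j}$ (since $y_r=1$ and $\norm{\bx_r}=1$), while by near-orthogonality (Property~\ref{enum:almost rthogonal kkt}) each remaining term is at most $\rho\lambda_i$ in absolute value, where $\rho:=\sqrt{2\dlog{\frac{2m^2}{\delta}}/d}$. Summing over $j$ and using $\sum_j\sigma'_{r,j}\leq n$,
\[
\tfrac34\ \leq\ n\lambda_r+n\rho\sum_{i\in[m]}\lambda_i .
\]
Hence it suffices to show $n\rho\sum_{i\in[m]}\lambda_i\leq\tfrac14$, which then gives $\lambda_r\geq\tfrac1{4n}$.

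The crux is therefore to bound $\sum_i\lambda_i$. Summing the stationarity identity for the bias coordinates, $b_j=v_j\sum_i\lambda_i\sigma'_{i,j}y_i$, over all $j$ yields $\sum_{j=1}^{n/2}b_j-\sum_{j=n/2+1}^n b_j=\sum_i\lambda_i y_i n_i$ with $n_i:=|\{j:\sigma'_{i,j}=1\}|$; since $n_i\geq1$ for every $i\in I_+$ with $\lambda_i>0$ (by \lemref{lem:positive sigma'}) and the left-hand side is $\leq\tfrac14$, this gives $\sum_{i\in I_+}\lambda_i\leq\tfrac14+n\sum_{i\in I_-}\lambda_i$. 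To finish I would bound $\sum_{i\in I_-}\lambda_i$ using $|I_-|\leq c_1 m/n^2$ (Property~\ref{enum:negative samples kkt}) and an $O(1)$ bound on each $\lambda_s$, $s\in I_-$: if some such $\lambda_s$ were too large, the negative-output neuron that \lemref{lem:positive sigma'} guarantees is active at $\bx_s$ would contribute about $-2\lambda_s$ to $N_\btheta(\bx_s)$, which, the biases being small and the positive contribution being of bounded size, cannot be compensated, forcing $N_\btheta(\bx_s)<-1$ and contradicting complementary slackness \eqref{eq:comp slack}. Combining these bounds gives $\sum_i\lambda_i\lesssim m/n$, whence $n\rho\sum_i\lambda_i\lesssim\rho m\leq\tfrac14$ since $\rho\leq\tfrac1{4m^2n}$ by the assumption $d\geq32m^4n^2\dlog{\frac{2m^2}{\delta}}$.

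The main obstacle is this last step — bounding $\sum_i\lambda_i$, equivalently $\max_{s\in I_-}\lambda_s$. Unlike the $\bw_j$-coordinates, the bias coordinates do \emph{not} enjoy near-orthogonality — the ``$+1$'' in expressions such as $\bx_i^\top\bx_s+1$ does not shrink with $d$ — so one cannot cheaply isolate a single $\lambda_s$; the argument must instead exploit the \emph{equality} $N_\btheta(\bx_s)=-1$ forced by complementary slackness at negative support vectors, and it is exactly here that the assumption $p\leq c_1/n^2$ is used, through $|I_-|$ being tiny compared to $n$. The remaining manipulations are routine combinations of the KKT conditions with the concentration estimates of \lemref{lem:data properties kkt benign}.
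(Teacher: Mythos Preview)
Your opening steps---using primal feasibility at $\bx_r$, the $1$-Lipschitzness of $\sigma$, the small-bias hypothesis, and stationarity to arrive at $\tfrac34\leq n\lambda_r+n\rho\sum_i\lambda_i$---coincide with the paper. The divergence is in how to handle the error term $n\rho\sum_i\lambda_i$.

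The paper does \emph{not} attempt a direct bound on $\sum_i\lambda_i$. Instead it argues by contradiction: if $\lambda_r<\tfrac1{4n}$ then $n\rho\sum_i\lambda_i\geq\tfrac12$, so writing $\lambda_s:=\max_{i\in[m]}\lambda_i$ one gets $\lambda_s\geq\tfrac{1}{2mn\rho}$, which is enormous by the dimension assumption. Now evaluate $N_\btheta$ at $\bx_s$: because $\lambda_s$ is the \emph{global} maximum, every cross term $\rho\sum_{i\neq s}\lambda_i$ can be crudely bounded by $m\rho\lambda_s$, and since $mn\rho\leq\tfrac12$ this error is always dominated by the main $\lambda_s$ term. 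Splitting into $s\in I_+$ (where $N_\btheta(\bx_s)=1$ forces the bias sum to be about $1-\lambda_s/2$, feeding back into $N_\btheta(\bx_r)$) and $s\in I_-$ (where $N_\btheta(\bx_s)=-1$ directly forces $\lambda_s\leq\tfrac{5}{2}$), one obtains $\lambda_s=O(1)$ in either case, contradicting $\lambda_s\geq\tfrac1{2mn\rho}$. Crucially, this uses only Property~\ref{enum:almost rthogonal kkt}; Property~\ref{enum:negative samples kkt} (smallness of $|I_-|$) is not invoked until the later \lemref{lem:bias is large}.

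Your sketch for bounding $\max_{s\in I_-}\lambda_s$ is circular as written. The assertion that ``the positive contribution [is] of bounded size'' is precisely what is at stake: the positive-neuron terms at $\bx_s$ are bounded by $\sum_{j\leq n/2}b_j+\tfrac{n}{2}\rho\sum_{i\neq s}\lambda_i$, and neither summand is a~priori $O(1)$---the first because we only control the \emph{signed} bias sum, the second because it involves $\sum_i\lambda_i$ itself. Your bias identity $\sum_{i\in I_+}\lambda_i\leq\tfrac14+n\sum_{i\in I_-}\lambda_i$ is correct and could in principle close the loop: plugging it into the inequality coming from $N_\btheta(\bx_s)=-1$ yields, for $\Lambda:=\max_{I_-}\lambda_i$, a relation of the form $\Lambda\leq C+Cn^2\rho|I_-|\Lambda$, which is solvable for $\Lambda=O(1)$ once $n^2\rho|I_-|<1$. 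But this step \emph{requires} Property~\ref{enum:negative samples kkt}, and you would then be importing into this lemma both an extra hypothesis and, in effect, the content of \lemref{lem:bias small means negative lambda_i bounded}. The paper's contradiction-via-global-maximum argument avoids all of this and stays within the stated assumptions.
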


\begin{proof}
    Take $r\in I_+$, we have:
    \begin{align}\label{eq:lambda_i large N(x_r)}
        1 \leq& N_\btheta(\bx_r) = \sum_{j=1}^{n/2} \sigma(\bw_j^\top \bx_r + b_j)  - \sum_{j=n/2 + 1}^{n} \sigma(\bw_j^\top \bx_r + b_j) \nonumber\\
         =& \sum_{j=1}^{n/2} \sigma\left(\lambda_r\sigma'_{r,j} + \sum_{i\in I\setminus \{r\}}\lambda_iy_i\sigma'_{i,j} \bx_i^\top \bx_r  + b_j\right)   \nonumber\\
        &- \sum_{j=n/2 + 1}^{n} \sigma\left(-\lambda_r\sigma'_{r,j} - \sum_{i\in I\setminus \{r\}}\lambda_iy_i\sigma'_{i,j} \bx_i^\top \bx_r  + b_j\right) \nonumber\\
         \leq &\sum_{j=1}^{n/2} \sigma\left(\lambda_r + \sum_{i\in I\setminus \{r\}}\lambda_i \sigma'_{i,j} |\bx_i^\top \bx_r|  + b_j\right) - \sum_{j=n/2 + 1}^{n} \sigma\left(-\lambda_r - \sum_{i\in I\setminus \{r\}}\lambda_i \sigma'_{i,j}|\bx_i^\top \bx_r|  + b_j\right) ~.
    \end{align}
    We will bound the two terms above. For the first term, note that all the terms inside the ReLU are positive, since by \lemref{lem:b_j positive} we have $b_j \geq 0$, and $\lambda_i\geq 0$ by \eqref{eq:dual feas} hence we can remove the ReLU function. Using Property \ref{enum:almost rthogonal kkt} we can bound:
    \begin{align}\label{eq:lambda_i large N(x_r) first term}
        \sum_{j=1}^{n/2} \sigma\left(\lambda_r + \sum_{i\in I\setminus \{r\}}\lambda_i \sigma'_{i,j} |\bx_i^\top \bx_r|  + b_j\right) &\leq \frac{n}{2}\lambda_r + \sqrt{\frac{2\dlog{\frac{2m^2}{\delta}}}{d}}\sum_{j=1}^{n/2}\sum_{i\in I\setminus \{r\}}\lambda_i \sigma'_{i,j} + \sum_{j=1}^{n/2} b_j \nonumber\\
        & \leq \frac{n}{2}\lambda_r + \frac{n}{2}\cdot \sqrt{\frac{2\dlog{\frac{2m^2}{\delta}}}{d}}\sum_{i\in I\setminus \{r\}}\lambda_i + \sum_{j=1}^{n/2} b_j~,
     \end{align}
    where we used that $\sigma'_{i,j} \leq 1$. For the second term in \eqref{eq:lambda_i large N(x_r)} we use the fact that the ReLU function is $1$-Lipschitz to get that:
    \begin{align}\label{eq:lambda_i large N(x_r) second term}
         &-\sum_{j=n/2 + 1}^{n} \sigma\left(-\lambda_r - \sum_{i\in I\setminus \{r\}}\lambda_i \sigma'_{i,j}|\bx_i^\top \bx_r|  + b_j\right) \nonumber\\
         =& -\sum_{j=n/2 + 1}^{n} \sigma\left(-\lambda_r - \sum_{i\in I\setminus \{r\}}\lambda_i \sigma'_{i,j}|\bx_i^\top \bx_r|  + b_j\right) + \sum_{j=n/2+1}^n \sigma(b_j) - \sum_{j=n/2+1}^n \sigma(b_j) \nonumber \\
         \leq &   -\sum_{j=n/2+1}^n b_j + \sum_{j=n/2 + 1}^{n} \left|\lambda_r  + \sum_{i\in I\setminus \{r\}}\lambda_i \sigma'_{i,j}|\bx_i^\top \bx_r|  \right| \nonumber \\
         \leq & -\sum_{j=n/2+1}^n b_j +\frac{n}{2}\lambda_r + \frac{n}{2}\cdot \sqrt{\frac{2\dlog{\frac{2m^2}{\delta}}}{d}}\sum_{i\in I\setminus \{r\}}\lambda_i~,
    \end{align}
    where we again used \lemref{lem:b_j positive} to get that $b_j \geq 0$, and Property \ref{enum:almost rthogonal kkt}. Combining \eqref{eq:lambda_i large N(x_r) first term} and \eqref{eq:lambda_i large N(x_r) second term} with \eqref{eq:lambda_i large N(x_r)} we get that:
    \begin{align*}
        1 \leq  \sum_{j=1}^{n/2} b_j -\sum_{j=n/2+1}^n b_j  + n\lambda_r + n\sqrt{\frac{2\dlog{\frac{2m^2}{\delta}}}{d}}\sum_{i\in I\setminus \{r\}}\lambda_i~.
    \end{align*}
    Rearranging the terms above, and using our assumption on the biases we get that:
    \[
        \frac{3}{4} \leq n\lambda_r  + n\sqrt{\frac{2\dlog{\frac{2m^2}{\delta}}}{d}}\sum_{i\in I\setminus \{r\}}\lambda_i~.
    \]
    If $\lambda_r \geq \frac{1}{4n}$ then we are done. Assume otherwise, then $n\sqrt{\frac{2\dlog{\frac{2m^2}{\delta}}}{d}}\sum_{i\in I\setminus \{r\}}\lambda_i\geq \frac{1}{2}$ and $\lambda_r < \frac{1}{4n}$. Denote $\lambda_s := \arg\max_{i\in [m]} \lambda_i$, then we have that $\sum_{i\in [m]\setminus \{r\}}\lambda_i \leq m\lambda_s$, which by the above inequality means that $\lambda_s \geq \frac{\sqrt{d}}{2mn\sqrt{2\dlog{\frac{2m^2}{\delta}}}}$. We split into cases:
    
    \textbf{Case I.} Assume $s\in I_+$. By \lemref{lem:positive sigma'} there is $k\in\{1,\dots,n/2\}$ with $\sigma'_{s,k}=1$. For the $k$-th neuron we have:
    \begin{align}\label{eq:neruon k}
        \sigma(\bw_k^\top \bx_s + b_k) &= \sigma\left(\sum_{i\in [m]}y_i \lambda_i\sigma'_{i,k} \bx_i^\top \bx_s + b_k \right) \nonumber\\ 
        & \geq \sigma\left(\lambda_s - \sum_{i\in I\setminus\{s\}} \lambda_i |\bx_i^\top \bx_s| + b_k \right) \nonumber\\
        & \geq \sigma\left(\lambda_s - m\lambda_s\sqrt{\frac{2\dlog{\frac{2m^2}{\delta}}}{d}} + b_k \right)~,
    \end{align}
    and note that since $\lambda_s \geq 0$, $m\sqrt{\frac{2\dlog{\frac{2m^2}{\delta}}}{d}} <1$ and $b_j \geq 0$ this neuron is active. For every other neuron $j$ with a positive output weight we have: 
    \begin{align}\label{eq:neuron not k}
        \sigma(\bw_j^\top \bx_s + b_j) - \sigma(b_j) &= \sigma\left(\sum_{i\in [m]}y_i \lambda_i\sigma'_{i,j}\bx_i^\top \bx_s + b_j \right) - \sigma(b_j) \nonumber\\
        &\geq  \sigma\left(\sum_{i\in [m]} \lambda_i |\bx_i^\top \bx_s| + b_j \right) - \sigma(b_j) \nonumber\\
        &\geq \sigma\left(-m\lambda_s\sqrt{\frac{2\dlog{\frac{2m^2}{\delta}}}{d}} + b_j \right) - \sigma(b_j) \nonumber \\
        & \geq - m\lambda_s\sqrt{\frac{2\dlog{\frac{2m^2}{\delta}}}{d}}~,
    \end{align}
     where we used that $\sigma$ is $1$-Lipschitz and that $\lambda_s$ is the largest among the $\lambda_i$'s. For a neuron with a negative output weight, we can write:
     \begin{align*}
          \sigma(\bw_j^\top \bx_s + b_j) &= \sigma\left(-\sum_{i\in [m]}y_i \lambda_i\sigma'_{i,j}\bx_i^\top \bx_s + b_j \right) \\
          & \leq \sigma\left(-\sum_{i\in I\setminus\{s\}} \lambda_i |\bx_i^\top \bx_s| + b_j \right) \leq b_j + m\lambda_s\sqrt{\frac{2\dlog{\frac{2m^2}{\delta}}}{d}}
     \end{align*}
     In total, combining the above bound with \eqref{eq:neruon k} and \eqref{eq:neuron not k} we have that:
    \begin{align*}
        1 &= N_\btheta(\bx_s) =  \sum_{j=1}^{n/2} \sigma(\bw_j^\top \bx_s + b_j)  - \sum_{j=n/2 + 1}^{n} \sigma(\bw_j^\top \bx_s + b_j) \\
        &= \sum_{j=1}^{n/2} \sigma(\bw_j^\top \bx_s + b_j)  - \sum_{j=n/2 + 1}^{n} \sigma(\bw_j^\top \bx_s + b_j) + \sum_{j=1}^{n/2} \sigma(b_j) - \sum_{j=1}^{n/2} \sigma(b_j) \\
        & \geq  -\lambda_s mn\sqrt{\frac{2\dlog{\frac{2m^2}{\delta}}}{d}} + \lambda_s + \sum_{j=1}^{n/2} b_j - \sum_{j=n/2 + 1}^{n}  b_j~. \\
    \end{align*}
    By our assumption on $d$ we have that $mn\sqrt{\frac{2\dlog{\frac{2m^2}{\delta}}}{d}} \leq \frac{1}{2}$. Hence, rearranging the terms above, we get that:
    \[
        \sum_{j=1}^{n/2} b_j -  \sum_{j=n/2 + 1}^{n} b_j \leq 1 - \frac{\lambda_s}{2}~.
    \]
    But then, looking at the output of the network on $\lambda_r$ (recall that by our assumption $\lambda_r \leq \frac{1}{4n}$) we get:
    \begin{align}\label{eq:similar to case II}
        1 \leq~& N_\btheta(\bx_r) = \sum_{j=1}^{n/2} \sigma(\bw_j^\top \bx_r + b_j)  - \sum_{j=n/2 + 1}^{n} \sigma(\bw_j^\top \bx_r + b_j) \nonumber\\
        =~& \sum_{j=1}^{n/2} \sigma\left(\sum_{i\in [m]}y_i \lambda_i\sigma'_{i,j}\bx_i^\top \bx_r + b_j\right)  - \sum_{j=n/2 + 1}^{n} \sigma\left(-\sum_{i\in [m]}y_i \lambda_i\sigma'_{i,j}\bx_i^\top \bx_r + b_j\right) +\nonumber\\
        & +\sum_{j=n/2 + 1}^{n} \sigma(b_j) - \sum_{j=n/2 + 1}^{n} \sigma(b_j) \nonumber\\
        \leq~& \sum_{j=1}^{n/2} \sigma\left(\lambda_r + \sum_{i\in I\setminus\{r\}}\lambda_i|\bx_i^\top \bx_r| + b_j\right)  - \sum_{j=n/2 + 1}^{n} \sigma\left(-\sum_{i\in I\setminus\{r\}} \lambda_i |\bx_i^\top \bx_r| + b_j\right) + \nonumber\\
        & + \sum_{j=n/2 + 1}^{n} \sigma(b_j) - \sum_{j=n/2 + 1}^{n} \sigma(b_j) \nonumber\\
         \leq~& \frac{n\lambda_r}{2} + \lambda_s mn\sqrt{\frac{2\dlog{\frac{2m^2}{\delta}}}{d}} +  \sum_{j=1}^{n/2} b_j - \sum_{j=n/2 + 1}^{n} b_j \nonumber\\
        \leq~&\frac{1}{8} + \lambda_s mn\sqrt{\frac{2\dlog{\frac{2m^2}{\delta}}}{d}} + 1 - \frac{\lambda_s}{2} \nonumber\\ 
        \leq~& \frac{9}{8} + \lambda_s\left( mn\sqrt{\frac{2\dlog{\frac{2m^2}{\delta}}}{d}} - \frac{1}{2}\right)~.
    \end{align}
    By our assumption $ mn\sqrt{\frac{2\dlog{\frac{2m^2}{\delta}}}{d}}\leq \frac{1}{4}$, hence rearranging the terms above we get that $\lambda_s\leq \frac{1}{2}$ which is a contradiction to $\lambda_s \geq \frac{\sqrt{d}}{2mn\sqrt{2\dlog{\frac{2m^2}{\delta}}}}  > 1$.
    
    \textbf{Case II.} Assume $s\in I_-$. By \lemref{lem:positive sigma'} there is $k\in\{n/2+1,\dots, n\}$ with $\sigma'_{s,k}=1$. Note that since $\lambda_s\neq 0$ we have that $N_\btheta(\bx_s) = -1$ (i.e this sample is on the margin). We have that:
    \begin{align}
        -1 &= N_\btheta(\bx_s) =  \sum_{j=1}^{n/2} \sigma(\bw_j^\top \bx_s + b_j)  - \sum_{j=n/2 + 1}^{n} \sigma(\bw_j^\top \bx_s + b_j) \nonumber\\
        & =  \sum_{j=1}^{n/2} \sigma\left(\sum_{i\in [m]}y_i \lambda_i\sigma'_{i,j}\bx_i^\top \bx_s + b_j\right)  - \sum_{j=n/2 + 1}^{n} \sigma\left(-\sum_{i\in [m]}y_i \lambda_i\sigma'_{i,j}\bx_i^\top \bx_s + b_j\right) + \nonumber \\
        & +\sum_{j=n/2 + 1}^{n} \sigma(b_j) - \sum_{j=n/2 + 1}^{n} \sigma(b_j)\nonumber\\
        & \leq \sum_{j=1}^{n/2} \sigma\left(\sum_{i\in I\setminus\{s\}} \lambda_i |\bx_i^\top \bx_s| + b_j\right)  - \sum_{j=n/2 + 1}^{n} \sigma\left(\lambda_s - \sum_{i\in I\setminus\{s\}} \lambda_i|\bx_i^\top \bx_s| + b_j\right) + \nonumber\\
        & + \sum_{j=n/2 + 1}^{n} \sigma(b_j) - \sum_{j=n/2 + 1}^{n} \sigma(b_j) \nonumber\\
        & \leq  \sum_{j=1}^{n/2} b_j  - \sum_{j=n/2 + 1}^{n} b_j +  \lambda_s mn\sqrt{\frac{2\dlog{\frac{2m^2}{\delta}}}{d}}  - \lambda_s \nonumber\\
        & \leq \frac{1}{4} + \lambda_s\left(mn\sqrt{\frac{2\dlog{\frac{2m^2}{\delta}}}{d}} - 1\right)~,\label{eq:N(x_s) s is negative and largest}
    \end{align}
    where we used that $mn\sqrt{\frac{2\dlog{\frac{2m^2}{\delta}}}{d}} \leq \frac{1}{2}$ and $b_j \geq 0$ by \lemref{lem:b_j positive}, hence in the second to last inequality the term inside the ReLU is positive. In addition, we used that the ReLU function is $1$-Lipschitz, and that $\lambda_s$ is the largest amont the $\lambda_i$'s.
    Since $mn \sqrt{\frac{2\dlog{\frac{2m^2}{\delta}}}{d}} \leq \frac{1}{2}$, rearranging the terms above give us that $\lambda_s \leq \frac{10}{4}$, which is a contradiction to that $\lambda_s \geq \frac{\sqrt{d}}{2mn\sqrt{2\dlog{\frac{2m^2}{\delta}}}}  \geq  3$.
    
    To conclude, both cases are not possible, hence $\lambda_r \geq \frac{1}{4n}$, which is true for every $r\in I_+$.

\end{proof}

\begin{lemma}\label{lem:bias small means negative lambda_i bounded}
     Assume that $d\geq 16m^4n^4\dlog{\frac{2m^2}{\delta}} $, and that $\sum_{j=1}^{n/2} b_j - \sum_{j=n/2 + 1}^n b_j \leq \frac{1}{4}$. Then, for every $i\in I_-$ we have $\lambda_i \leq \frac{10}{4}$.
\end{lemma}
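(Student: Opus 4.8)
The plan is to fix an arbitrary index $i\in I_-$ and show $\lambda_i\le \tfrac{10}{4}$. If $\lambda_i=0$ there is nothing to prove, so assume $\lambda_i>0$; then complementary slackness (Eq.~(\ref{eq:comp slack})) forces $y_iN_{\btheta}(\bx_i)=1$, i.e. $N_{\btheta}(\bx_i)=-1$. Write $\eta:=\sqrt{2\dlog{\frac{2m^2}{\delta}}/d}$ for the near-orthogonality scale of Property~\ref{enum:almost rthogonal kkt}, and $\Lambda:=\sum_{l=1}^m\lambda_l$. The two ingredients I need are: (a) a crude polynomial bound $\Lambda\le 2m^2$; and (b) an expansion of the equation $N_{\btheta}(\bx_i)=-1$ that isolates the "diagonal" contribution $\lambda_i$ from $o(1)$ cross terms. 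Granting these, the argument closes immediately: $\lambda_i\le \tfrac54+n\Lambda\eta$ by (b), and $n\Lambda\eta\le 2nm^2\eta\le 1$ by (a) and the assumption $d\ge 16m^4n^4\dlog{\frac{2m^2}{\delta}}$ (which guarantees $nm^2\eta\le\tfrac12$), so $\lambda_i\le\tfrac94\le\tfrac{10}{4}$.

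For ingredient (b): plug the stationarity formulas $\bw_j=v_j\sum_l\lambda_l\sigma'_{l,j}y_l\bx_l$ and $b_j=v_j\sum_l\lambda_l\sigma'_{l,j}y_l$ (Eq.~(\ref{eq:stationary})) into $N_{\btheta}(\bx_i)=\sum_{j\le n/2}\sigma(\bw_j^\top\bx_i+b_j)-\sum_{j>n/2}\sigma(\bw_j^\top\bx_i+b_j)$, using $\norm{\bx_i}=1$, $y_i=-1$, and $|\bx_l^\top\bx_i|\le\eta$ for $l\ne i$. For $j\le n/2$ one gets $\bw_j^\top\bx_i+b_j\le b_j+\Lambda\eta$, hence (since $b_j\ge0$ by \lemref{lem:b_j positive}) $\sigma(\bw_j^\top\bx_i+b_j)\le b_j+\Lambda\eta$; for $j>n/2$ one gets $\bw_j^\top\bx_i+b_j\ge \lambda_i\sigma'_{i,j}+b_j-\Lambda\eta$, hence $\sigma(\bw_j^\top\bx_i+b_j)\ge b_j-\Lambda\eta$ for all such $j$ and $\ge \lambda_i+b_k-\Lambda\eta$ for the particular negatively-weighted neuron $k$ that \lemref{lem:positive sigma'} guarantees to satisfy $\sigma'_{i,k}=1$. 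Summing these and using the hypothesis $\sum_{j\le n/2}b_j-\sum_{j>n/2}b_j\le\tfrac14$ gives $-1=N_{\btheta}(\bx_i)\le \tfrac14-\lambda_i+n\Lambda\eta$, i.e. $\lambda_i\le\tfrac54+n\Lambda\eta$. This is the exact analogue of Case II in the proof of \lemref{lem:small bias large lambda_i for positive i}, and is routine ReLU/triangle-inequality bookkeeping.

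For ingredient (a): combining stationarity with complementary slackness gives the self-regularizing identity $\norm{\btheta}^2=\sum_l\lambda_l\,y_l N_{\btheta}(\bx_l)=\sum_l\lambda_l=\Lambda$ --- one checks per neuron that $\norm{\bw_j}^2+b_j^2=v_j\sum_l\lambda_l y_l\,\sigma'_{l,j}(\bw_j^\top\bx_l+b_j)=v_j\sum_l\lambda_l y_l\,\sigma(\bw_j^\top\bx_l+b_j)$ and sums over $j$, then applies Eq.~(\ref{eq:comp slack}). On the other hand, expanding $\sum_j\norm{\bw_j}^2$ with near-orthogonality and using $\sum_j\sigma'_{l,j}\ge1$ whenever $\lambda_l>0$ (\lemref{lem:positive sigma'}) yields $\Lambda=\norm{\btheta}^2\ge\sum_j\norm{\bw_j}^2\ge\sum_l\lambda_l^2-n\Lambda^2\eta\ge\lambda_{\max}^2-n\Lambda^2\eta$, where $\lambda_{\max}:=\max_l\lambda_l>0$ (since $\btheta\ne0$). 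Since trivially $\Lambda\le m\lambda_{\max}$, and since $d\ge 16m^4n^4\dlog{\frac{2m^2}{\delta}}$ gives $nm^2\eta\le\tfrac12$, this becomes $m\lambda_{\max}\ge\Lambda\ge\tfrac12\lambda_{\max}^2$, so $\lambda_{\max}\le 2m$ and therefore $\Lambda\le 2m^2$.

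The main obstacle is precisely ingredient (a): unlike in \thmref{thm:benign max margin}, here we only assume convergence to a KKT point, so no external norm bound is available, and in principle $\norm{\btheta}$ could be enormous; the resolution is to combine the identity $\norm{\btheta}^2=\Lambda$ with the near-orthogonality of the data to pin $\Lambda$ down to a polynomial in $m$. This self-regularization step, and the need to make $n\Lambda\eta$ small afterwards, is also what forces the large polynomial dependence on $m$ and $n$ in the dimension requirement. Everything else is elementary manipulation of the ReLU and of the KKT relations.
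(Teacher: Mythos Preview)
Your proof is correct but takes a genuinely different route from the paper's. The paper argues by a case split on whether $\lambda_s:=\max_{l\in[m]}\lambda_l$ is attained in $I_-$ or in $I_+$. In Case~I ($s\in I_-$) it bounds cross terms by $m\lambda_s\cdot n\eta$ (using that $\lambda_s$ dominates every $\lambda_l$) and reads off $\lambda_s\le\tfrac{10}{4}$ directly from $N_\btheta(\bx_s)=-1$; in Case~II ($s\in I_+$) it first uses the bias hypothesis to show $\lambda_s\le 2mn\lambda_r$ for $\lambda_r:=\max_{i\in I_-}\lambda_i$, and then runs the same expansion at $\bx_r$ with cross terms now bounded by $2m^2n^2\lambda_r\eta$. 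Your approach avoids the case split entirely by first proving the global bound $\Lambda=\sum_l\lambda_l\le 2m^2$ via the homogeneity identity $\norm{\btheta}^2=\sum_l\lambda_l$ (valid here because the network is $1$-homogeneous in the trained parameters $(\bw_j,b_j)$) combined with near-orthogonality; this replaces the bootstrapping $\lambda_s\lesssim mn\lambda_r$ by a clean self-contained estimate that does not even use the bias hypothesis. The upshot is a shorter argument that in fact only needs $nm^2\eta\le\tfrac12$ (i.e.\ $d\gtrsim m^4n^2$) rather than the paper's $m^2n^2\eta\le\tfrac12$; both are of course implied by the stated assumption $d\ge 16m^4n^4\log(2m^2/\delta)$.
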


\begin{proof}
     Set $\lambda_s:= \arg\max_{i\in [m]} \lambda_i $. We split into two cases:
     
     \textbf{Case I.} Assume $s\in I_-$. Note that $\lambda_s>0$, otherwise $\lambda_i = 0$ for every $i$, which means that $N_\btheta(\bx)$ is the zero function. Hence, $\bx_s$ lies on the margin, and also by \lemref{lem:positive sigma'} there is a neuron $j$ with a negative output weight such that $\sigma'_{s,j} = 1$. By the same calculation as in \eqref{eq:N(x_s) s is negative and largest} we have:
     \begin{align*}
         -1 &= N_\btheta(\bx_s) =  \sum_{j=1}^{n/2} \sigma(\bw_j^\top \bx_s + b_j)  - \sum_{j=n/2 + 1}^{n} \sigma(\bw_j^\top \bx_s + b_j) \\
         &\leq \frac{1}{4} + \lambda_s\left(mn\sqrt{\frac{2\dlog{\frac{2m^2}{\delta}}}{d}} - 1\right)~.
     \end{align*}
     Using our assumption that $mn\sqrt{\frac{2\dlog{\frac{2m^2}{\delta}}}{d}} \leq \frac{1}{2}$ and rearranging the above terms we get: $\lambda_s \leq \frac{10}{4}$ which finishes the proof.
     
      \textbf{Case II.} Assume $s\in I_+$. Denote by $\lambda_r:= 
      S\arg\max_{i\in I_-}\lambda_i$. 
      By \lemref{lem:positive sigma'} there is at least one neuron $k$ with a positive output weight such that $\sigma'_{s,k}=1$, for this neuron we can bound
      \[
      b_k = \sum_{i\in [m]}y_i\lambda_i\sigma'_{i,k} \geq \lambda_s - m\lambda_r~.
      \]
      Every neuron $j$ with a negative output weight can be bounded similarly by
      \[
      b_j = -\sum_{i\in [m]}y_i\lambda_i\sigma'_{i,j} \leq  m\lambda_r~.
      \]
      Combining the above bounds, and using that $b_j \geq 0$ by \lemref{lem:b_j positive} we can bound
     \begin{align*}
         \frac{1}{4} &\geq \sum_{j=1}^{n/2} b_j - \sum_{j=n/2 + 1}^{n} b_j \\
         &\geq \lambda_s  - m\lambda_r - \sum_{j=n/2 + 1}^{n} m\lambda_r \\
         & \geq \lambda_s - mn\lambda_r~.
     \end{align*}
     Rearranging the terms we get that: $\lambda_s \leq mn\lambda_r + \frac{1}{4}$. If $\lambda_r \leq 1$ we are finished, otherwise $\lambda_s \leq 2mn\lambda_r$ and also $\bx_r$ lies on the margin since $\lambda_r > 0$, hence. By doing a similar calculation to Case I we can write:
     \begin{align*}
         -1 &= N_\btheta(\bx_r) = \sum_{j=1}^{n/2} \sigma(\bw_j^\top \bx_r + b_j)  - \sum_{j=n/2 + 1}^{n} \sigma(\bw_j^\top \bx_r + b_j) \\
          & =  \sum_{j=1}^{n/2} \sigma\left(\sum_{i\in [m]}y_i \lambda_i\sigma'_{i,j}\bx_i^\top \bx_r + b_j\right)  - \sum_{j=n/2 + 1}^{n} \sigma\left(-\sum_{i\in [m]}y_i \lambda_i\sigma'_{i,j}\bx_i^\top \bx_r + b_j\right) +  \\
          & +\sum_{j=n/2 + 1}^{n} \sigma(b_j) - \sum_{j=n/2 + 1}^{n} \sigma(b_j)\\
        & \leq \sum_{j=1}^{n/2} \sigma\left(\sum_{i\in I\setminus\{r\}} \lambda_i |\bx_i^\top \bx_r| + b_j\right)  - \sum_{j=n/2 + 1}^{n} \sigma\left(\lambda_r - \sum_{i\in I\setminus\{r\}} \lambda_i|\bx_i^\top \bx_r| + b_j\right) + \\
        & + \sum_{j=n/2 + 1}^{n} \sigma(b_j) - \sum_{j=n/2 + 1}^{n} \sigma(b_j) \\
          & \leq -\lambda_r + \sum_{j=1}^{n/2} b_j - \sum_{j=n/2 + 1}^{n} b_j + \lambda_s mn\sqrt{\frac{2\dlog{\frac{2m^2}{\delta}}}{d}} \\
          & \leq -\lambda_r + \frac{1}{4} + 2\lambda_r m^2n^2\sqrt{\frac{2\dlog{\frac{2m^2}{\delta}}}{d}}~.
     \end{align*}
     By the assumption on $d$ we have $ 2m^2n^2\sqrt{\frac{2\dlog{\frac{2m^2}{\delta}}}{d}}\leq \frac{1}{2}$. Thus, rearranging the terms above and using these bounds we get that $\lambda_r \leq \frac{10}{4}$.
    
\end{proof}

\begin{lemma}\label{lem:bias is large}
     Assume that $p\leq \frac{c_1}{n^2}$, $m\geq c_2n\dlog{\frac{1}{\delta}}$ and $d \geq 32m^4n^4\dlog{\frac{2m^2}{\delta}}$. Then, $\sum_{j=1}^{n/2} b_j - \sum_{j=n/2 + 1}^n b_j > \frac{1}{4}$.
\end{lemma}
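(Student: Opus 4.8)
The plan is to argue by contradiction. Suppose $\sum_{j=1}^{n/2} b_j - \sum_{j=n/2+1}^{n} b_j \le \frac14$. Then the hypotheses of \lemref{lem:small bias large lambda_i for positive i} and \lemref{lem:bias small means negative lambda_i bounded} are satisfied (the bound on $d$ assumed here dominates the ones required in those lemmas, and the bias assumption is exactly what they require), so $\lambda_i \ge \frac{1}{4n}$ for every $i\in I_+$ and $\lambda_i \le \frac{10}{4}$ for every $i\in I_-$. The goal is to show that these two facts force the weighted bias sum to exceed $\frac14$, contradicting the standing assumption.

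Next I would expand each bias via the stationarity condition $b_j = v_j\sum_{i\in[m]}\lambda_i\sigma'_{i,j}y_i$, with $v_j=1$ for $j\le n/2$ and $v_j=-1$ for $j>n/2$. Summing over the two halves gives $\sum_{j\le n/2}b_j = \sum_{i\in I_+}\lambda_i\bigl(\sum_{j\le n/2}\sigma'_{i,j}\bigr) - \sum_{i\in I_-}\lambda_i\bigl(\sum_{j\le n/2}\sigma'_{i,j}\bigr)$ and $-\sum_{j>n/2}b_j = \sum_{i\in I_+}\lambda_i\bigl(\sum_{j>n/2}\sigma'_{i,j}\bigr) - \sum_{i\in I_-}\lambda_i\bigl(\sum_{j>n/2}\sigma'_{i,j}\bigr)$. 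For the positive-label contributions I would invoke \lemref{lem:positive sigma'}: since $\lambda_i>0$ for every $i\in I_+$, each such $i$ activates at least one neuron with positive output weight, i.e. $\sum_{j\le n/2}\sigma'_{i,j}\ge 1$; together with $\lambda_i\ge\frac{1}{4n}$ this yields a contribution of at least $\frac{|I_+|}{4n}$. For the negative-label contributions I would use only the crude bounds $\sum_{j}\sigma'_{i,j}\le n/2$ and $\lambda_i\le\frac{10}{4}$ for $i\in I_-$, costing at most $\frac{5n|I_-|}{2}$ in total. Combining, $\sum_{j\le n/2}b_j - \sum_{j>n/2}b_j \ge \frac{|I_+|}{4n} - \frac{5n|I_-|}{2}$.

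Finally I would substitute the concentration estimates guaranteed by \lemref{lem:data properties kkt benign}: Property~\ref{enum:negative samples kkt} gives $|I_-|\le \frac{c_1 m}{n^2}$, which in particular forces $|I_+| = m - |I_-| \ge m/2$ once $c_1$ is a small enough absolute constant. Plugging in, $\sum_{j\le n/2}b_j - \sum_{j>n/2}b_j \ge \frac{m}{8n} - \frac{5c_1 m}{2n} = \frac{m}{n}\bigl(\tfrac18 - \tfrac{5c_1}{2}\bigr) \ge \frac{m}{16n}$ for $c_1$ small. The hypothesis $m \ge c_2 n\dlog{1/\delta}$ with $c_2$ a large enough absolute constant then makes $\frac{m}{16n} > \frac14$, contradicting $\sum_{j\le n/2}b_j - \sum_{j>n/2}b_j \le \frac14$.

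The delicate point is the sign bookkeeping in the bias sums together with the fact that the only available bound on the negative samples, $\sum_{j}\sigma'_{i,j}\le n/2$, carries a factor of $n$, so the negative contribution can be as large as $\Theta(n|I_-|)$ whereas the positive contribution is only $\Theta(|I_+|/n)$. Absorbing the former into the latter is precisely what requires $|I_-|\, n^2 \ll |I_+|\approx m$, i.e. $p = O(1/n^2)$, and it is this balance that pins down how $c_1$ and $c_2$ must be chosen relative to the constants produced by \lemref{lem:small bias large lambda_i for positive i} and \lemref{lem:bias small means negative lambda_i bounded}.
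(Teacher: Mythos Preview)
Your proposal is correct and follows essentially the same route as the paper's proof: assume the bias sum is small, invoke \lemref{lem:small bias large lambda_i for positive i} and \lemref{lem:bias small means negative lambda_i bounded}, expand $\sum_{j\le n/2}b_j - \sum_{j>n/2}b_j$ via stationarity, use \lemref{lem:positive sigma'} to lower-bound the $I_+$ contribution by $\tfrac{|I_+|}{4n}$ and the crude bound $\sigma'_{i,j}\le 1$ to upper-bound the $I_-$ contribution by $\tfrac{5n|I_-|}{2}$, and finish with the concentration $|I_-|\le c_1 m/n^2$ together with $m\gtrsim n$. Your final bookkeeping (factoring out $m/n$ and requiring $m/n$ large) is in fact slightly cleaner than the paper's displayed simplification.
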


\begin{proof}
     Assume on the way of contradiction that $\sum_{j=1}^{n/2} b_j - \sum_{j=n/2 + 1}^n b_j \leq \frac{1}{4}$. By \lemref{lem:small bias large lambda_i for positive i} we have that $\lambda_i \geq \frac{1}{4n}$ for every $i\in I_+$, and by \lemref{lem:bias small means negative lambda_i bounded} we have that $\lambda_i \leq \frac{10}{4}$ for every $i\in I_-$. By \lemref{lem:positive sigma'}, for every $i\in I_+$ there is some $j\in\{1,\dots,n/2\}$ with $\sigma'_{i,j} = 1$. This means that:
     \begin{align*}
         &\sum_{j=1}^{n/2}b_j = \sum_{j=1}^{n/2}\sum_{i\in [m]}y_i \lambda_i \sigma'_{i,j} \\
         &\geq \frac{|I_+|}{4n} - \sum_{j=1}^{n/2}\sum_{i\in I_-} \lambda_i \geq          \frac{|I_+|}{4n} - \frac{10n|I_-|}{8}~.
     \end{align*}
     In a similar manner, we can bound:
     \begin{align*}
         &\sum_{j=n/2+1}^{n}b_j = \sum_{j=n/2+1}^n\sum_{i\in [m]}y_i \lambda_i \sigma'_{i,j} \\
         &\geq \sum_{j=n/2+1}^n\sum_{i\in I_-} \lambda_i  \geq - \frac{10n|I_-|}{8}~.
     \end{align*}
     Combining the two bounds above we get that:
     \[
     \sum_{j=1}^{n/2}b_j - \sum_{j=n/2+1}^{n}b_j \geq \frac{|I_+|}{4n} - \frac{10n|I_-|}{4}~.
     \]
     But, by our assumptions we have that $m\geq 2n$ (for a large enough constant), and by \lemref{lem:data properties kkt benign} we have $|I_-| \leq \frac{cm}{n^2}$ and $|I_+| \geq \left(1-\frac{c}{n^2}\right)m$ for some universal constant $c>0$ which we will later choose. Plugging these bounds to the displayed equation above, we get that:
     \[
        \sum_{j=1}^{n/2}b_j - \sum_{j=n/2+1}^{n}b_j \geq \frac{1}{2}\cdot\left(1 - \frac{c}{n^2}\right) - \frac{10c}{2} \geq \frac{1}{2} - \frac{11c}{2} ~,
     \]
     where in the last inequality we used that $n\geq 1$. Taking $c$ to be a small enough constant (i.e. $c < \frac{1}{22}$), we get that $\sum_{j=1}^{n/2}b_j - \sum_{j=n/2+1}^{n}b_j > \frac{1}{4}$, which is a contradiction.
\end{proof}
We are now ready to prove the main theorem of this section:

\begin{proof}[Proof of \thmref{thm: benign kkt}]
     By \lemref{lem:bias is large} we have that $\sum_{j=1}^{n/2} b_j - \sum_{j=n/2 + 1}^n b_j > \frac{1}{4}$. Denote by $\lambda_s := \arg\max_{i\in [m]} \lambda_i$, we would first like to show that $\lambda_s \leq 3mn$. We split into two cases:
     
     \textbf{Case I.} Assume $s\in I_+$. Note that $\bx_s$ lies on the margin, otherwise $\lambda_i=0$ for every $i\in I$, which means that $N_\btheta$ is the zero predictor, this contradicts the assumption that $N_\btheta$ classifies the data correctly. Using a similar analysis to Case I in \lemref{lem:small bias large lambda_i for positive i} we have:
     \begin{align*}
         &1 = N_\btheta(\bx_s)  =  \sum_{j=1}^{n/2} \sigma(\bw_j^\top \bx_s + b_j)  - \sum_{j=n/2 + 1}^{n} \sigma(\bw_j^\top \bx_s + b_j) \\
        & =  \sum_{j=1}^{n/2} \sigma(\sum_{i\in [m]}y_i \lambda_i\sigma'_{i,j}\bx_i^\top \bx_s + b_j)  - \sum_{j=n/2 + 1}^{n} \sigma(-\sum_{i\in [m]}y_i \lambda_i\sigma'_{i,j}\bx_i^\top \bx_s + b_j) + \sum_{j=1}^{n/2} \sigma(b_j) - \sum_{j=1}^{n/2} \sigma(b_j)\nonumber\\
        & \geq \lambda_s + \sum_{j=1}^{n/2} \sigma( - \sum_{i\in I\setminus\{s\}} \lambda_i |\bx_i^\top \bx_s| + b_j)  - \sum_{j=n/2 + 1}^{n} \sigma( \sum_{i\in I\setminus\{s\}} \lambda_i|\bx_i^\top \bx_s| + b_j) + \sum_{j=1}^{n/2} \sigma(b_j) - \sum_{j=1}^{n/2} \sigma(b_j) \nonumber\\
        & \geq  \sum_{j=1}^{n/2} b_j  - \sum_{j=n/2 + 1}^{n} b_j + \lambda_s  -  \lambda_s mn\sqrt{\frac{2\dlog{\frac{2m^2}{\delta}}}{d}} \nonumber\\
        & \geq \frac{1}{4} + \lambda_s\left(1 - mn\sqrt{\frac{2\dlog{\frac{2m^2}{\delta}}}{d}}\right)~,\
     \end{align*}
     where we used \lemref{lem:positive sigma'} to show that there is at least one $k\in\{1,\dots,n/2\}$ with $\sigma'_{s,k}=1$.
     Using that $mn\sqrt{\frac{2\dlog{\frac{2m^2}{\delta}}}{d}} \leq \frac{1}{2}$ for a large enough constant $c_3$ and rearranging the terms above we have that $\lambda_s \leq \frac{6}{4} \leq 3mn$ since $m,n\geq 1$.
     
     \textbf{Case II.} Assume $s\in I_-$. Take $\lambda_r:= \arg\max_{i\in I_+}\lambda_i$. By \lemref{lem:b_j positive} there is at least one neuron $k\in \{n/2+1,\dots,n\}$ with $\sigma'_{s,k} = 1$. We have that:
     \begin{align*}
         \frac{1}{4} &\leq \sum_{j=1}^{n/2} b_j - \sum_{j=n/2 + 1}^n b_j \\
         &\leq \sum_{j=1}^{n/2} \sum_{i\in [m]}\lambda_i y_i \sigma'_{i,j} - \sum_{j=n/2 + 1}^n\sum_{i\in [m]}\lambda_i y_i \sigma'_{i,j}\leq  mn\lambda_r - \lambda_s~.
     \end{align*}
     Rearranging the terms we get $\lambda_s\leq mn\lambda_r - \frac{1}{4}\leq mn\lambda_r$. Note that $\bx_r$ lies on the margin, otherwise $\lambda_s = 0$, which similarly to Case $I$ is a contradiction. Again, using a similar analysis to Case I we get: 
     \begin{align*}
          &1 = N_\btheta(\bx_r)  =  \sum_{j=1}^{n/2} \sigma(\bw_j^\top \bx_r + b_j)  - \sum_{j=n/2 + 1}^{n} \sigma(\bw_j^\top \bx_r + b_j) \\
        & =  \sum_{j=1}^{n/2} \sigma(\sum_{i\in [m]}y_i \lambda_i\sigma'_{i,j}\bx_i^\top \bx_s + b_j)  - \sum_{j=n/2 + 1}^{n} \sigma(-\sum_{i\in [m]}y_i \lambda_i\sigma'_{i,j}\bx_i^\top \bx_s + b_j) + \sum_{j=1}^{n/2} \sigma(b_j) - \sum_{j=1}^{n/2} \sigma(b_j)\nonumber\\
        & \geq \lambda_r + \sum_{j=1}^{n/2} \sigma( - \sum_{i\in I\setminus\{s\}} \lambda_i |\bx_i^\top \bx_s| + b_j)  - \sum_{j=n/2 + 1}^{n} \sigma( \sum_{i\in I\setminus\{s\}} \lambda_i|\bx_i^\top \bx_s| + b_j) + \sum_{j=1}^{n/2} \sigma(b_j) - \sum_{j=1}^{n/2} \sigma(b_j) \nonumber\\
        & \geq  \sum_{j=1}^{n/2} b_j  - \sum_{j=n/2 + 1}^{n} b_j + \lambda_r  -  \lambda_s mn\sqrt{\frac{2\dlog{\frac{2m^2}{\delta}}}{d}} \nonumber\\
        & \geq \frac{1}{4} + \lambda_r - \lambda_r m^2n^2\sqrt{\frac{2\dlog{\frac{2m^2}{\delta}}}{d}}~.
     \end{align*}
     By our assumption, $\frac{m^2n^2\log(d)2\sqrt{2}}{\sqrt{d}} \leq \frac{1}{2}$ for a large enough constant $c_3$. Hence, rearranging the terms we get $\lambda_r\leq \frac{6}{4}$. This means that $\lambda_s \leq 2mn\lambda_r\leq 3mn$.
     
     We now turn to calculate the output of $N_\btheta$ on the sample $\bx$. Suppose we sample $\bx\sim \Unif(\mathbb{S}_{d-1})$, by \lemref{lem:inner prod unit vectors} we have with probability $>1-\epsilon$ that $|\bx_i^\top \bx| \leq \sqrt{\frac{2\dlog{\frac{m}{\epsilon}}}{d}}$ for every $i\in [m]$. We condition on this event for the rest of the proof. Note that for every positive neuron $j$ we have that:
     \begin{align*}
        \sigma(\bw_j^\top \bx + b_j) - \sigma(b_j) &= \sigma\left(\sum_{i\in [m]}y_i \lambda_i\sigma'_{i,j}\bx_i^\top \bx + b_j \right) - \sigma(b_j) \nonumber\\
        &\geq \sigma\left(- \lambda_s m \sqrt{\frac{2\dlog{\frac{m}{\epsilon}}}{d}} + b_j \right) - \sigma(b_j) \nonumber \\
        & \geq - \lambda_s m \sqrt{\frac{2\dlog{\frac{m}{\epsilon}}}{d}}~.
    \end{align*}
    Similarly, for every negative neuron $j$ we have:
    \begin{align*}
        \sigma(b_j) - \sigma(\bw_j^\top \bx + b_j) & \geq -  \lambda_s m \sqrt{\frac{2\dlog{\frac{m}{\epsilon}}}{d}}~.
    \end{align*}
    In both bounds above we used that $\lambda_s$ is the largest among the $\lambda_i$'s. Combining both bounds, we have that:
    \begin{align*}
        N_\btheta(\bx) &=  \sum_{j=1}^{n/2} \sigma(\bw_j^\top \bx_r + b_j)  - \sum_{j=n/2 + 1}^{n} \sigma(\bw_j^\top \bx_r + b_j) \\
          &= \sum_{j=1}^{n/2} \sigma(\bw_j^\top \bx_r + b_j)  - \sum_{j=n/2 + 1}^{n} \sigma(\bw_j^\top \bx_r + b_j) +\\
          &+ \sum_{j= 1}^{n/2} \sigma(b_j) - \sum_{j=1}^{n/2} \sigma(b_j) +  \sum_{j=n/2 + 1}^{n} \sigma(b_j) - \sum_{j=n/2 + 1}^{n} \sigma(b_j) \\
          &\geq  \sum_{j= 1}^{n/2} b_j - \sum_{j=n/2 + 1}^{n} b_j - \lambda_s m n \sqrt{\frac{2\dlog{\frac{m}{\epsilon}}}{d}} \\
          & \geq \frac{1}{4} - 3 m^2 n^2 \sqrt{\frac{2\dlog{\frac{m}{\epsilon}}}{d}}~.
    \end{align*}
    We use that $\frac{3m^2n^2\log(d)\sqrt{2}}{\sqrt{d}} \leq \frac{1}{8}$ for a large enough constant $c_3$, hence $N_\btheta(\bx) \geq \frac{1}{4} - \frac{1}{8} > 0$, this finishes the proof.
\end{proof}

\section{Proofs from \secref{sec: bias}}\label{appen:proofs from catastrophic}

\subsection{Proof of \propref{prop: catastrophic}.}

    We first note that there exists $j\in\{1,2\}$ with $v_j <0$, since otherwise the network wouldn't be able to classify samples with a negative label (which exist by assumption). Assuming without loss of generality that $v_2<0$, for any $\bw_1$ we also have that $\Pr_{\bx\sim\Unif(\S^{d-1})}[\bw_1^\top\bx \leq 0] = \frac{1}{2}$. If this even occurs, then $N_{\btheta}(\bx)\leq v_2\sigma(\bw_2\top \bx)\leq 0$.

\subsection{Proof of \propref{prop: not benign}}

Suppose $[n]=J_+\cup J_-$ are such that
\[
N_{\btheta}(\bx) = \sum_{j\in J_+} v_j\sigma(\bw_j^\top \bx) + \sum_{j\in J_-} v_j\sigma(\bw_j^\top \bx)~,
\]
where $v_j\geq0$ for $j\in J_+$ and $v_j<0$ for $j\in J_-$.
Then, for any choice of $(\bw_j)_{j\in J_+}:$
\[
\Pr_{\bx\sim \Unif(\S^{d-1})}[N_\btheta(\bx) \leq 0] \geq \Pr_{\bx\sim\mathbb{S}^{d-1}}[\forall j\in J_+,~\bw_j^\top\bx < 0]\geq \frac{1}{2^{|J_+|}} \geq \frac{1}{2^n}
~.
\]

\subsection{Proof of \propref{prop: almost catastrophic}}

Assume without loss of generality that $I_-=[k]$.
Consider the weights $\bw_i = \bx_i,~ v_i=-1$ for every $i\in I_-$, and $\bw_{k+1} = \sum_{i\in I_+} \bw_i,~ v_{k+1} = 1$. For this network we have $y_iN_{\btheta}(\bx_i) = 1$, thus all points lie on the margin. In addition, it is not difficult to see that this network satisfies the other KKT conditions with $\lambda_i=1$ for every $i\in[m]$. Note that $\Pr_{\bx\sim\Unif(\mathbb{S}^{d-1})}[\bw_{k+1}^\top\bx < 0] = \frac{1}{2}$ and $\Pr_{\bx\sim\Unif(\mathbb{S}^{d-1})}[\forall i\in I_-,~ \bw_{i}^\top\bx < 0] = \frac{1}{2^{k}}$, since all the $\bx_i$ are orthogonal. Thus, we get that
    \[
    \Pr_{\bx\sim \Unif(\S^{d-1})}[N_\btheta(\bx) \leq 0] \geq \Pr_{\bx\sim\Unif(\S^{d-1})}[\bw_{k+1}^\top\bx < 0] - \Pr_{\bx\sim\Unif(\S^{d-1})}[\forall i\in I_-,~ \bw_{i}^\top\bx < 0] \geq \frac{1}{2} - \frac{1}{2^{k}}~.
    \]

\section{Additional probabilistic lemmas} \label{app: prob lemma}

\subsection{Proof of \lemref{lem: 1dim balanced}}

Clearly, the third (no collisions) condition holds almost surely, so it suffices to analyze the gaps between samples.
The distribution of distances between uniformly random points on a segment is well studied \citep{pyke1965spacings,holst1980lengths}
and the lemma can be derived by known results. Nonetheless we provide a proof for completeness.

Denote by $\Delta_{1}\leq\Delta_2\leq\dots\leq\Delta_{m+1}$ the \emph{ordered} spacings $x_1,(x_2-x_1),\dots,(x_m-x_{m-1}),(1-x_m)$. With this notation, note that $E_x$ occurs if $\Delta_{m+1}\leq \frac{\log(8(m+1)/\delta)}{m+1}$ and $\Delta_{m/8}\geq\frac{1}{10m}$.
We will show that each of these conditions holds with probability at least $1-\frac{\delta}{4}$, under which we would conclude by the union bound.
Let $Z_1,\dots,Z_{m+1}\overset{iid}{\sim}\mathrm{Exp}(1)$ be unit mean exponential random variables, and denote their ordering by $Z_{(1)}\leq\dots\leq Z_{(m+1)}$.
The main well known observation which we use is that for any
$j\in[m+1]:~\Delta_j\overset{d}{=}\frac{Z_{(j)}}{\sum_{i=1}^{m+1}Z_i}$ (see \citealp{holst1980lengths}). Hence for any $t>0:$
\begin{align*}
&\Pr[(m+1)\Delta_j-\log(m+1)\leq t] \\
=&
\Pr\left[Z_{(j)}-\log (m+1)\leq t+(t+\log (m+1))\left(\frac{\sum_{i=1}^{m+1}Z_i}{m+1}-1\right)\right]~. 
\end{align*}
Note that $\frac{\sum_{i=1}^{m+1}Z_i}{m+1}-1\overset{p}{\longrightarrow}0$ as $\E\left[\frac{\sum_{i=1}^{m+1}Z_i}{m+1}\right]=1$
and $\mathrm{Var}\left[\frac{\sum_{i=1}^{m+1}Z_i}{m+1}\right]=\frac{m+1}{(m+1)^2}\overset{m\to\infty}{\longrightarrow}0$. Furthermore,
\begin{align*}
\Pr\left[Z_{(j)}-\log (m+1)\leq t\right]
&=\Pr\left[Z_{(j)}\leq t+\log (m+1)\right]
=\Pr[Z_{(1)},\dots,Z_{(j)}\leq t+\log (m+1)]    
\\
&=\left(1-e^{-t-\log(m+1)}\right)^{j}
=\left(1-\frac{e^{-t}}{m+1}\right)^{j}
~.
\end{align*}
By introducing the change of variables $r=\frac{t+\log(m+1)}{m+1}$ we conclude that
\[
\lim_{m\to\infty}\Pr[\Delta_j\leq r]
=
\left(1-\frac{e^{-(m+1)r+\log(m+1)}}{m+1}\right)^{j}
~.
\]
It remains to plug in our parameters of interest.
For $j=m+1$ we get
\[
\lim_{m\to\infty}\Pr[\Delta_{m+1}\leq r]
=
\lim_{m\to\infty}\left(1-\frac{e^{-(m+1)r+\log(m+1)}}{m+1}\right)^{m+1}
=\lim_{m\to\infty}\exp\left(-e^{-(m+1)r+\log(m+1)}\right)~.
\]
Noting that for $r=\frac{8\log(8(m+1)/\delta)}{m+1}$ it holds that $e^{-(m+1)r+\log(m+1)}\rightarrow0$ so we can use the Taylor approximation $\exp(z)\approx 1+z$ and conclude that
\[
\lim_{m\to\infty}\Pr[\Delta_{m+1}\leq r]
=1-e^{-(m+1)r+\log(m+1)}
=1-\frac{\delta}{8}~,
\]
where the last equality holds for $r=\frac{\log(8(m+1)/\delta)}{m+1}$. Overall for $m$ larger than some numerical constant, the left hand side is larger than $1-\delta/4$ as required.

The second condition follows a similar computation for $j=\frac{m+1}{8},~r=\frac{\log(m/8\log(8/\delta))}{8(m+1)}$,
while using the fact $\Pr\left[\Delta_{m/8}> \frac{1}{10(m+1)}\right]=1-\Pr\left[\Delta_{m/8}\leq \frac{1}{10(m+1)}\right]$.

\subsection{Concentration bounds for vectors on the unit sphere}
Here we present some standard concentration bounds for uniformly sampled points on the unit sphere in high dimension. 

\begin{lemma}[Lemma 2.2 from \citealp{ball1997elementary}]\label{lem:inner prod unit vectors}
$\Pr_{\bu,\bv\sim\Unif(\S^{d-1})}[|\bu^\top \bv| \geq t]\leq 2\exp\left(\frac{-dt^2}{2}\right)$.
\end{lemma}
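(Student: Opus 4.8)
The plan is to reduce the statement to a one‑dimensional tail bound and then estimate the tail of a single coordinate of a uniformly random point on the sphere. First I would use rotational invariance: since $\Unif(\S^{d-1})$ is invariant under orthogonal transformations, for any fixed $\bv\in\S^{d-1}$ the inner product $\bu^\top\bv$ has the same law as the first coordinate $u_1$ of a single uniform vector $\bu\sim\Unif(\S^{d-1})$ (rotate $\bv$ to a standard basis vector). So it suffices to bound $\Pr[u_1\geq t]$ by $\exp(-dt^2/2)$ and then double to cover $\{|u_1|\geq t\}$.

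For the one‑dimensional bound I would use the Gaussian representation $\bu=\bg/\norm{\bg}$ with $\bg=(g_1,\dots,g_d)\sim\Ncal(0,I_d)$, so that $u_1=g_1/\norm{\bg}$. For $t\in(0,1)$ the event $\{u_1\geq t\}$ is exactly $\{g_1\geq 0,\ (1-t^2)g_1^2\geq t^2 R\}$ where $R:=\sum_{i=2}^d g_i^2\sim\chi^2_{d-1}$, i.e. $\{g_1\geq t\sqrt{R/(1-t^2)}\}$. Conditioning on $R$ and applying the Gaussian tail bound $\Pr[g_1\geq a]\leq e^{-a^2/2}$ gives $\Pr[u_1\geq t]\leq\E_R[\exp(-\tfrac{t^2}{2(1-t^2)}R)]$, and since the moment generating function of $\chi^2_{d-1}$ is $\E[e^{sR}]=(1-2s)^{-(d-1)/2}$ for $s<\tfrac12$, evaluating at $s=-\tfrac{t^2}{2(1-t^2)}$ (so that $1-2s=1/(1-t^2)$) yields the clean bound $\Pr[u_1\geq t]\leq(1-t^2)^{(d-1)/2}\leq\exp(-\tfrac{(d-1)t^2}{2})$. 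Together with symmetrization this already gives $\Pr[|u_1|\geq t]\leq 2\exp(-(d-1)t^2/2)$, which is the essential content.

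The remaining work is purely cosmetic — upgrading $d-1$ to $d$ in the exponent while keeping the prefactor $2$ — and this, rather than anything conceptual, will be the only fiddly part. I would handle it by a short case split on the size of $dt^2$: when $dt^2\lesssim 1$ the claimed bound is nearly vacuous, since $2\exp(-dt^2/2)\geq 1\geq\Pr[|u_1|\geq t]$ (or, slightly sharper, $\Pr[|u_1|\geq t]\leq\E[u_1^2]/t^2=1/(dt^2)$ since $\E[u_1^2]=1/d$ by symmetry); and once $dt^2$ exceeds a small absolute constant, an elementary calculus estimate gives $(1-t^2)^{(d-1)/2}\leq\exp(-dt^2/2)$ directly — taking logs this is $-\ln(1-t^2)\geq\tfrac{d}{d-1}t^2$, which follows from $-\ln(1-t^2)=t^2+\tfrac{t^4}{2}+\cdots$. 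A fully clean alternative that avoids the case analysis is the method of moments: $\E[u_1^{2k}]=\prod_{j=0}^{k-1}\frac{2j+1}{d+2j}\leq\frac{(2k-1)!!}{d^k}$, after which Markov's inequality with $k$ chosen around $dt^2/2$, combined with Stirling's formula, produces the factor $\exp(-dt^2/2)$ outright. In short, exponential concentration at rate $\Theta(dt^2)$ drops out immediately from either route, and only matching the constants exactly takes care.
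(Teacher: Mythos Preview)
The paper does not prove this lemma; it is quoted verbatim as Lemma~2.2 from Ball's lecture notes, so there is no in-paper argument to compare against. Your approach is correct: the rotational-invariance reduction to the first coordinate, the Gaussian representation $\bu=\bg/\norm{\bg}$, and the $\chi^2_{d-1}$ moment-generating-function computation all check out and yield the clean bound $\Pr[|u_1|\geq t]\leq 2(1-t^2)^{(d-1)/2}\leq 2e^{-(d-1)t^2/2}$, which is the substantive content. Your handling of the cosmetic $d-1\to d$ upgrade is also sound --- the vacuous/Chebyshev regime (covering $dt^2$ up to a moderate constant) and the calculus regime (where $(d-1)t^2\geq 2$ suffices, since $-\ln(1-t^2)\geq t^2+t^4/2$) do overlap, so the case split closes. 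For comparison, Ball's original argument is geometric, estimating the surface measure of a spherical cap directly, and obtains the exponent $d$ without this last wrinkle; but the distinction is irrelevant for this paper, which only ever invokes the lemma with $d$ polynomially large in the other parameters.
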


\begin{lemma}[Exercise 4.7.3 from \citealp{vershynin2018high}]\label{lem:covariance matrix}
    Let $\bx_1,\dots,\bx_m\sim\Unif(\S^{d-1})$, and denote by $X$ the matrix whose $i$'th row is $\bx_i$. Then, there is a universal constant $c>0$ such that:
    \[
        \Pr\left[\left\|X X^\top - I\right\| \geq \frac{c}{d}\cdot\left(\sqrt{\frac{d+t}{m}} + \frac{d+t}{m}\right)\right] \leq 2e^{-t}~.
    \]
\end{lemma}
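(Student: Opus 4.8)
This is the standard non-asymptotic concentration bound for the Gram matrix of independent, nearly isotropic, sub-gaussian rows (see \citet[Theorem~4.6.1 and Exercise~4.7.3]{vershynin2018high}); the plan is to run the classical $\epsilon$-net argument. Put $A:=\sqrt{d}\,X$, so that the rows $\ba_i:=\sqrt{d}\,\bx_i$ are independent, satisfy $\norm{\ba_i}=\sqrt{d}$ and $\E[\ba_i\ba_i^\top]=I$, and---by the concentration of linear functionals on the sphere already recorded in \lemref{lem:inner prod unit vectors}---are sub-gaussian with $\norm{\ba_i}_{\psi_2}\le K$ for an absolute constant $K$. Since $XX^\top=\frac1d AA^\top$, it suffices to bound $\norm{\frac1d AA^\top-I}$.

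First I would control a single direction. For fixed $\bv\in\S^{m-1}$ one has $\bv^\top(\frac1d AA^\top)\bv=\frac1d\norm{A^\top\bv}^2$ with $A^\top\bv=\sum_{i=1}^m v_i\ba_i$; as a sum of independent mean-zero sub-gaussian vectors in $\reals^d$, $A^\top\bv$ is sub-gaussian with $\norm{A^\top\bv}_{\psi_2}\lesssim K$ and $\E\norm{A^\top\bv}^2=d\sum_i v_i^2=d$, so a Hanson--Wright / Bernstein estimate for its squared norm gives
\[
\Pr\!\left[\,\left|\tfrac1d\norm{A^\top\bv}^2-1\right|\ge s\,\right]\le 2\exp\!\left(-c\,d\min\{s,s^2\}\right)
\]
for an absolute $c>0$. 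Next I would take a $\frac14$-net $\Ccal$ of $\S^{m-1}$ with $|\Ccal|\le 9^m$, choose the deviation level $s$ proportional to $\sqrt{(m+t)/d}+(m+t)/d$, and union bound the display over $\Ccal$ --- the $9^m$ factor being absorbed by the Gaussian-type tail exactly for this choice of $s$ --- so that the failure probability is at most $2e^{-t}$. Finally, the net-to-sphere comparison $\norm{M}\le 2\max_{\bv\in\Ccal}|\bv^\top M\bv|$, valid for symmetric $M$ and a $\frac14$-net, upgrades this to $\norm{\frac1d AA^\top-I}\lesssim s$ on the same event, yielding a bound of the stated form after renaming the absolute constants. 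An equivalent route would first write $\bx_i=\bg_i/\norm{\bg_i}$ with $\bg_i\sim\Ncal(0,I)$ i.i.d., invoke the textbook bound on the extreme singular values of the Gaussian matrix with rows $\bg_i$, and absorb the rescaling factors $\norm{\bg_i}/\sqrt d$ via a $\chi^2$ tail bound and a union bound over $i\in[m]$.

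I expect the crux to be the calibration in the union-bound step: $s$ must be chosen so that the $e^{O(m)}$ cardinality of the net is precisely compensated by the $e^{-cd\min\{s,s^2\}}$ tail, which is what pins down the rate in $m,d,t$. The only point where the sphere departs from the Gaussian model is the uniform sub-gaussian estimate $\norm{\sqrt d\,\bx_i}_{\psi_2}=O(1)$, which follows from L\'{e}vy's isoperimetric inequality on $\S^{d-1}$ (equivalently, from the tail bound in \lemref{lem:inner prod unit vectors}).
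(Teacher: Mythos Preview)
The paper does not prove this lemma; it is stated as Exercise~4.7.3 from \citet{vershynin2018high} and used as a black box. Your sketch is precisely the standard $\epsilon$-net argument underlying that exercise (and Theorem~4.6.1 in the same book), so you are essentially supplying the proof the paper defers to the textbook.

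One caveat worth flagging: the deviation level your calibration actually produces --- a $\tfrac14$-net on $\S^{m-1}$ of cardinality $9^m$ together with a single-direction tail $\exp(-cd\min\{s,s^2\})$ --- forces $s\asymp\sqrt{(m+t)/d}+(m+t)/d$, which is \emph{not} the expression $\tfrac{c}{d}\bigl(\sqrt{(d+t)/m}+(d+t)/m\bigr)$ printed in the lemma; the roles of $m$ and $d$ appear to be interchanged in the paper's display. Your claim that the argument ``yields a bound of the stated form after renaming the absolute constants'' therefore glosses over a genuine discrepancy. That said, this does not affect the only place the lemma is invoked (\lemref{lem:data properties max margin benign}), where one merely needs $\norm{XX^\top-I}=O(1)$ in the regime $d\gtrsim m$, and either form of the bound delivers that.
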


\end{document}